\pgfplotsset{width=10cm,compat=1.9}
\DeclareMathOperator*{\argmin}{argmin}
\DeclareMathOperator*{\argmax}{argmax}
\DeclareMathOperator{\vect}{vec}
\theoremstyle{plain}
\newtheorem{theorem}{Theorem}[section]
\newtheorem{lemma}[theorem]{Lemma}
\theoremstyle{definition}
\newtheorem{definition}[theorem]{Definition}
\newtheorem{assumption}[theorem]{Assumption}
\theoremstyle{remark}
\newcommand{\ZZ}{\mathcal{Z}}
\crefname{algorithm}{Algorithm}{Algorithms}
\crefname{assumption}{Assumption}{Assumptions}
\crefname{equation}{}{}
\crefname{figure}{Fig.}{Figs.}
\crefname{table}{Table}{Tables}
\crefname{section}{Section}{Sections}
\crefname{subsection}{Section}{Sections}
\crefname{theorem}{Theorem}{Theorems}
\crefname{lemma}{Lemma}{Lemmmas}
\crefname{proposition}{Proposition}{Propositions}
\crefname{definition}{Definition}{Definitions}
\crefname{corollary}{Corollary}{Corollaries}
\crefname{remark}{Remark}{Remarks}
\crefname{example}{Example}{Examples}
\crefname{appendix}{Appendix}{Appendices}
\newcommand{\XX}{\mathcal{X}}
\newcommand{\FF}{\mathcal{F}}
\newcommand{\WW}{\mathcal{W}}
\newcommand{\YY}{\mathcal{Y}}
\newcommand{\ws}{w^{*}}
\newcommand{\by}{\mathbf{y}}
\newcommand{\expec}{\mathbb{E}}
\newcommand{\Al}{\mathcal{A}}
\newcommand{\rdw}{\mathbb{R}^{d_w}}
\newcommand{\rdt}{\mathbb{R}^{d_\theta}}
\newcommand{\tth}{\theta}
\newcommand{\lw}{L_w}
\newcommand{\lt}{L_\theta}
\newcommand{\bw}{\beta_w}
\newcommand{\bt}{\beta_{\theta}}
\newcommand{\btw}{\beta_{\theta w}}
\newcommand{\kw}{\kappa_w}
\newcommand{\ktw}{\kappa_{\theta w}}
\newcommand{\naw}{\nabla_{w}}
\newcommand{\nt}{\nabla_{\theta}}
\newcommand{\dph}{\Delta_{\Phi}}
\newcommand{\dt}{d_{\tth}}
\newcommand{\dw}{d_w}
\newcommand{\tg}{\widetilde{g}}
\newcommand{\tih}{\widetilde{h}}
\newcommand{\ett}{\eta_{\theta}}
\newcommand{\etw}{\eta_{w}}
\newcommand{\one}{\mathbbm{1}_{\{m < n\}}}
\newcommand{\hy}{\widehat{Y}}
\newcommand{\Renyi}{R\'enyi }
\DeclareMathOperator{\Tr}{Tr}
\newcommand{\bs}{\mathbf{s}}
\newcommand{\haty}{\widehat{y}}
\newcommand{\pyx}{\expec[\widehat{\mathbf{y}}(x_i, \theta)\widehat{\mathbf{y}}(x_i, \theta)^T | x_i]}
\newcommand{\pyxprime}{\expec[\widehat{\mathbf{y}}(x'_i, \theta)\widehat{\mathbf{y}}(x'_i, \theta)^T | x'_i]}
\newcommand{\pysxt}{\expec[\mathbf{s}_i \widehat{\mathbf{y}}(x_i, \theta)^T | x_i, s_i]}
\newcommand{\pysxtprime}{\expec[\mathbf{s}'_i \widehat{\mathbf{y}}(x_i, \theta)^T | x_i, s'_i]}
\newcommand{\pysxtprimetwo}{\expec[\mathbf{s}'_i \widehat{\mathbf{y}}(x'_i, \theta)^T | x'_i, s'_i]}
\newcommand{\edit}[1]{{\color{black} #1}}
\title{Stochastic Differentially Private and Fair Learning
}
\author{Andrew Lowy \\
% \thanks{Use footnote for providing further information
% about author (webpage, alternative address)---\emph{not} for acknowledging
% funding agencies.  Funding acknowledgements go at the end of the paper.} \\
% Department of Mathematics\\
University of Southern California\\
% Los Angeles, CA 90089, USA \\
\texttt{lowya@usc.edu} \\
\And
Devansh Gupta\thanks{Work done as a visiting scholar at the University of Southern California, Viterbi School of Engineering.} \\
% Department of Computer Science\\
Indraprastha Institute of Information Technology, Delhi\\
% New Delhi, 110020 \\
\texttt{devansh19160@iiitd.ac.in}\\
\And
Meisam Razaviyayn \\
% Department of Industrial \& Systems Engineering \\
University of Southern California \\
% Los Angeles, CA 90089, USA \\
\texttt{razaviya@usc.edu} \\
% \AND
% Coauthor \\
% Affiliation \\
% Address \\
% \texttt{email}
}
\begin{document}
\maketitle 

\begin{abstract}
Machine learning models are increasingly used in high-stakes decision-making systems.
%such as criminal sentencing or commercial lending.
In such applications, a major concern is 
that these models sometimes discriminate against certain demographic groups such as individuals with certain race, gender, or age. 
Another major concern in these applications is the violation of the privacy of users.
While \textit{fair learning} algorithms have been developed to mitigate discrimination issues, these algorithms can still \textit{leak} sensitive information, such as individuals' health or financial records. 
% Thus, it is crucial to develop \textit{fair} and \textit{robust} training/inference procedures.
% A gold standard for privacy in machine learning  is \textit{Differential privacy} (DP) which guarantees that sensitive data cannot be leaked with high probability. 
Utilizing the notion of \textit{differential privacy (DP)}, prior works aimed at developing learning algorithms that are both private and fair. 
%However, e
However, existing algorithms for DP fair learning are either not guaranteed to converge or require full batch of data in each iteration of the algorithm to converge. 
% are not well-suited for large-scale machine learning since they either: a) require computations on the full data set in each iteration of training; or b) are not guaranteed to converge. 
In this paper, we provide the first \textit{stochastic} differentially private algorithm for fair learning that is guaranteed to converge. 
Here, the term ``stochastic" refers to the fact that our proposed algorithm converges even when 
 \textit{minibatches} of data are used  at each iteration (i.e. \textit{stochastic optimization}). 
 %Consequently, the proposed algorithm can be applied to large-scale problems where the optimizer needs to only rely on a mini-batch of data at each iteration.  
Our framework is flexible enough to permit different fairness notions, including demographic parity and equalized odds. In addition, our algorithm can be applied to non-binary classification tasks with multiple (non-binary) sensitive attributes. As a byproduct of our convergence analysis, we provide the first utility guarantee for a DP algorithm for solving nonconvex-strongly concave min-max problems. Our numerical experiments show that the proposed algorithm \textit{consistently offers significant performance gains
%for our algorithm
 over the state-of-the-art baselines}, and can be applied to larger scale problems with non-binary target/sensitive attributes. 
%  Moreover, the proposed algorithm offers stable performance in our large-scale experiment with non-binary targets/sensitive attributes. %This is the 
 
%over existing DP fair learning algorithms. 
\end{abstract}

\section{Introduction}
\label{sec: intro}
% \begin{itemize}
%     \item Motivation A: i) fairness is important; ii) privacy is important: can't leak sensitive attribute (see~\cite{jagielski2019differentially} intro); iii) fair private learning is hard (need access to sensitive attributes in order to audit/prevent discrimination). Summarize the intro story of~\cite{jagielski2019differentially}.   
%     \item Motivation B: Existing private fair learning algorithms have the following drawbacks: i) strong theoretical guarantees of convergence and/or accuracy require strong/impractical assumptions (e.g. exponential time and computations on full training data set at every iteration~\cite{jagielski2019differentially}); ii) practical performance on classification tasks is underwhelming when meaningful privacy and fairness guarantees are needed.   
% \end{itemize}

In recent years, machine learning algorithms have been increasingly used to inform decisions with far-reaching consequences (e.g. whether to release someone from prison or grant them a loan), raising concerns about their compliance with laws, regulations,  societal norms, and ethical values. Specifically, machine learning algorithms have been found to discriminate against certain ``sensitive'' demographic groups (e.g.  racial minorities), prompting a profusion of \textit{algorithmic fairness} research~\citep{dwork2012fairness, sweeney2013discrimination, datta2015automated, feldman2015certifying,  bolukbasi2016man, machinebias2016, calmon2017optimized,  hardt2016equality,  fish2016confidence, woodworth2017learning, zafar2017fairness, bechavod2017penalizing, kearns2018preventing, prost2019toward, baharlouei2019rnyi, fermi}. Algorithmic fairness literature aims to develop fair machine learning algorithms that output non-discriminatory predictions. 
% Specifically, machine learning algorithms have been found to discriminate against certain ``sensitive'' demographic groups (e.g. women or racial minorities)~\citep{sweeney2013discrimination, datta2015automated, bolukbasi2016man, machinebias2016, calmon2017optimized, feldman2015certifying, hardt2016equality,  fish2016confidence, woodworth2017learning, zafar2017fairness, bechavod2017penalizing, kearns2018preventing, prost2019toward, baharlouei2019rnyi, fermi}. This has prompted a profusion of \textit{algorithmic fairness} research, which aims to develop ways to mitigate such discrimination. 

Fair learning algorithms typically need access to the sensitive data in order to ensure that the trained model is non-discriminatory.
%Most of the methods for ``fair learning'' that have been developed naturally require access to sensitive demographic data  
However, consumer privacy laws (such as the E.U. General Data Protection Regulation) restrict the use of sensitive demographic data in algorithmic decision-making. These two requirements--\textit{fair algorithms} trained with \textit{private data}--presents a quandary: how can we train a model to be fair to a certain demographic if we don't even know which of our training examples belong to that group? 

The works of~\citet{veale2017fairer, kilbertus2018blind} proposed a solution to this quandary using secure \textit{multi-party computation (MPC)}, which allows the learner to train a fair model without directly accessing the sensitive attributes. 
%In their formulation, a trusted third party (e.g. government regulator) stores the sensitive attributes (e.g. race), while the company (e.g. commercial lender) maintains the rest of the data set (labels and features excluding race). Using MPC, these two parties collaborate to train an model that satisfies some notion of fairness (e.g. independence of race and predictions), by applying standard fair learning algorithms in a distributed fashion. This is done without the company directly accessing the sensitive racial data, in compliance with privacy laws.
Unfortunately, as~\cite{jagielski2019differentially} observed, \textit{MPC does not prevent the trained model from leaking sensitive data}. For example, with MPC, the output of the trained model could be used to infer the race of an individual in the training data set~\citep{inversionfred,inversionhe,inversionsong,carlini2021extracting}.
% the race of individuals based on the output of their trained model, and could use this leaked information to discriminate against racial minorities. 
To prevent such leaks, \cite{jagielski2019differentially} argued for the use of \textit{differential privacy}~\citep{dwork2006calibrating} in fair learning. Differential privacy (DP) provides a strong guarantee that no company (or adversary) can learn much more about any individual than they could have learned had that individual's data never been used. 

Since~\cite{jagielski2019differentially}, several follow-up works have 
%tried to tackle the DP fair learning problem
proposed alternate approaches to DP fair learning~\citep{xu2019achieving, ding2020differentially, mozannar2020fair, tran2021differentially, tranfairnesslens, tran2022sf}. As shown in~\cref{fig: related work table},
%and discussed in detail in~\cref{sec: related work},
each of these approaches suffers from at least two critical shortcomings. 
In particular, \textit{none of these methods have convergence  guarantees when mini-batches of data are used in training}. In training large-scale models, %(e.g. BART \citep{lewis2019bart}, ViT \citep{dosovitskiy2020image}, GPT-2 \citep{radford2019language}),
memory and efficiency constraints require the use of small minibatches in each iteration of training (i.e. stochastic optimization). Thus, existing DP fair learning methods  cannot be used in  such settings since they require computations on the full training data set in every iteration. See~\cref{app: related work} for a more  comprehensive discussion of related work.

%\subsection{Our Contributions}
\textbf{Our Contributions:} In this work, we propose a novel algorithmic framework for DP fair learning. Our approach builds on the non-private fair learning method of~\cite{fermi}. We consider a regularized empirical risk minimization (ERM) problem where the regularizer penalizes fairness violations, as measured by the \textit{Exponential \Renyi Mutual Information}. 
Using a result from~\cite{fermi}, we reformulate this fair ERM problem as a min-max optimization problem. Then, we use an efficient  differentially private variation of stochastic gradient descent-ascent (DP-SGDA) to solve this fair ERM  min-max objective. 
%We prove that our algorithm converges to a stationary point of the fair ERM problem, while achieving differential privacy   
%In comparison with existing works, 
%(discussed in~\cref{sec: related work} and summarized in~\cref{fig: related work table}),
The main features of our algorithm are: \begin{enumerate}
\vspace{-.1cm}
    \item \textit{Guaranteed convergence} for any privacy and fairness level, \textit{even when mini-batches of data are used} in each iteration of training (i.e. stochastic optimization setting). As discussed, stochastic optimization is essential in large-scale machine learning scenarios. Our algorithm is the first stochastic DP fair learning method with provable convergence. 
    %Modern machine learning problems often involve models with hundreds of millions or even billions of parameters (e.g. BART \citep{lewis2019bart}, ViT \citep{dosovitskiy2020image}, GPT-2 \citep{radford2019language}). In such cases, during fine-tuning, the available memory on a node constrains us to use small minibatches in each iteration of training (i.e. stochastic optimization). Thus, existing DP fair learning methods either cannot cannot be used or are not guaranteed to converge, since they require computations on the full training data set in every iteration. 
    \item Flexibility to handle \textit{non-binary} classification with \textit{multiple (non-binary) sensitive attributes} (e.g. race and gender) under \textit{different fairness notions} such as demographic parity or equalized odds. In each of these cases, our algorithm is guaranteed to converge.
    \vspace{-.1cm}
\end{enumerate}

\textit{Empirically,} we show that our method \textit{outperforms the previous state-of-the-art methods} in terms of fairness vs. accuracy trade-off across all privacy levels. Moreover, our algorithm is capable of training with mini-batch updates and can handle \textit{non-binary target and non-binary sensitive attributes}. By contrast, existing DP fairness algorithms could not converge in our  stochastic/non-binary experiment. %This demonstrates the practical efficacy of our novel stochastic formulation.  

A byproduct of our algorithmic developments and analyses is \textit{the first DP convergent algorithm for nonconvex min-max optimization}: namely, we provide an upper bound on the stationarity gap of DP-SGDA for solving problems of the form $\min_{\theta} \max_{W} F(\theta, W)$, where $F(\cdot, W)$ is non-convex. We expect this result to be of independent interest to the DP optimization community. Prior works that provide convergence results for DP min-max problems have assumed that $F(\cdot, W)$ is either (strongly) convex~\citep{boob2021optimal, zhang2022bring} or satisfies a generalization of strong convexity known as the \textit{Polyak-Łojasiewicz} (PL) condition~\citep{yang2022differentially}. 
% Problem for fairness: race can be correlated with other features such as zip code.  

% In works A and B, a the following solution was proposed: a regulator stores sensitive data; company stores non-sensitive features and labels; using MPC, the company and regulator collaborate to train a fair learning model in a distributed fashion. However, this solution is flawed: MPC does not prevent the trained model from leaking individuals' sensitive data to the company (or an adversary), which violates the privacy principle that MPC was supposed to address. Examples of privacy violations. 

% To prevent such data leakage, Jagielski proposed using differential privacy. (DP can still be used with the distributed MPC framework.) DP promises that the company (or any adversary, no matter how powerful) can learn much more about any individual's sensitive attributes than they could if that individual had never provided their data in the first place. 

%DP fair learning related work and drawbacks 
%Existing private fair learning algorithms have the following drawbacks: i) strong theoretical guarantees of convergence and/or accuracy require strong/impractical assumptions (e.g. exponential time and computations on full training data set at every iteration~\cite{jagielski2019differentially}); ii) practical performance on classification tasks is underwhelming when meaningful privacy and fairness guarantees are needed. 

%Our contributions. 
\begin{wrapfigure}{R}{0.5\textwidth}
  \vspace{-.1cm}
%\begin{figure}[ht] 
%\hspace{-.1in}
 %\vspace{-.2in}
  \centering
  \includegraphics[width = 
  0.5
  \textwidth]{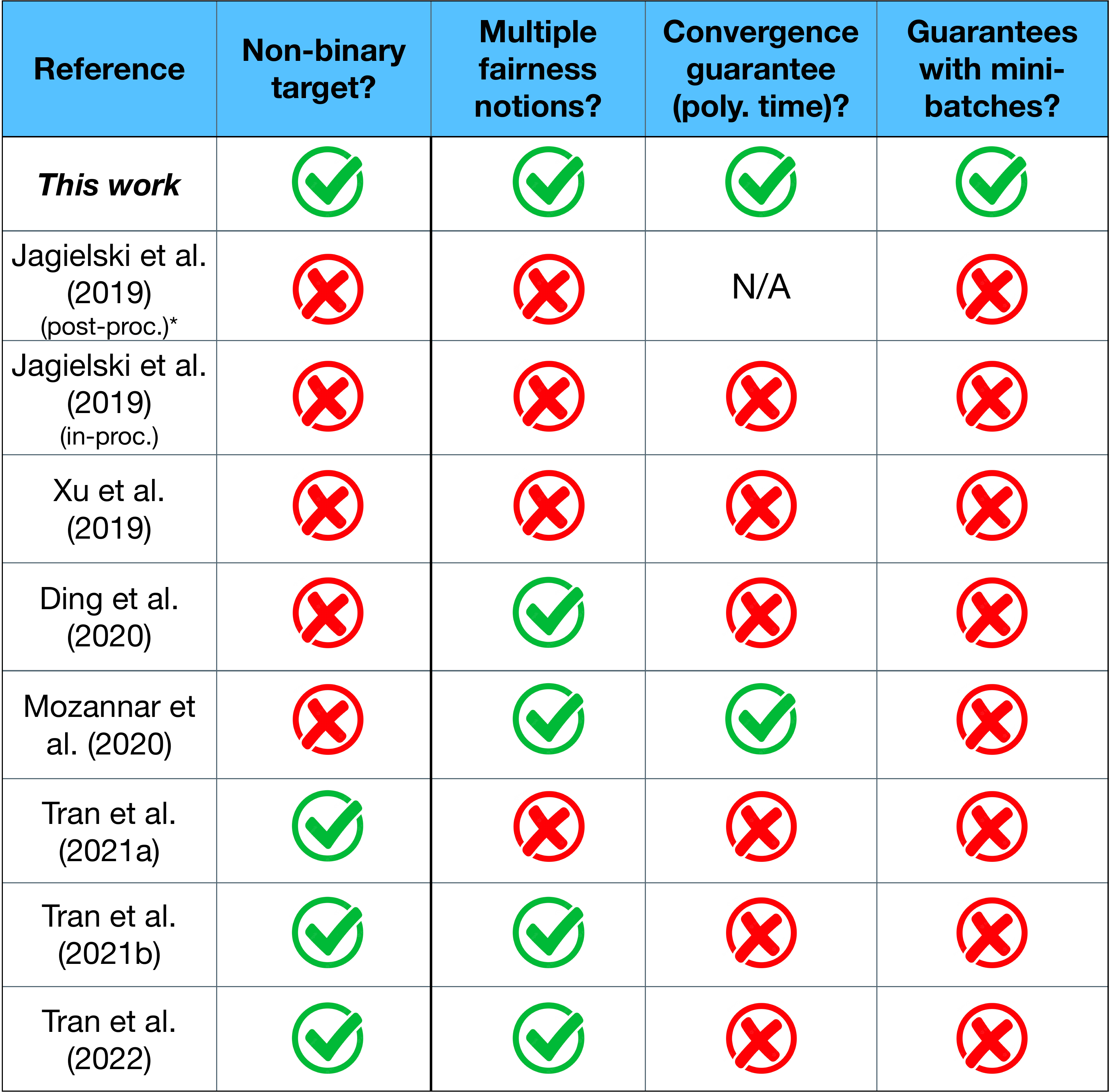}
   \vspace{-.3cm}
  \caption{
    \footnotesize
Comparison with existing works. “Guarantee” refers to \textit{provable} guarantee.
 N/A: the post-processing method of~\cite{jagielski2019differentially} is not an iterative algorithm. *Method requires access to the sensitive data at test time. The in-processing method of~\cite{jagielski2019differentially} is inefficient. The work of~\cite{mozannar2020fair} specializes to equalized odds, but  most of their analysis seems to be extendable to other fairness notions. 
    }
\label{fig: related work table}
   \vspace{-.15cm}
  \end{wrapfigure}

\section{Problem Setting and Preliminaries}
Let $Z = \{z_i = (x_i, s_i, y_i)\}_{i=1}^n  
%= (\mathbf{X}, Y, S)
$ be a data set with non-sensitive features $x_i \in \XX$, discrete sensitive attributes (e.g. race, gender) $s_i \in [k] \triangleq \{1, \ldots, k\}$, and labels $y_i \in [l]$. Let $\haty_{\theta}(x)$ denote the model predictions  parameterized by $\theta$, and $\ell(\theta, x, y) = \ell(\haty_{\theta}(x), y)$ be a loss function (e.g. cross-entropy loss). Our goal is to (approximately) solve the empirical risk minimization (ERM) problem \begin{equation}
\label{eq: ERM}
    \min_{\theta}\left\{\widehat{\mathcal{L}}(\theta) := \frac{1}{n} \sum_{i=1}^n \ell(\theta, x_i, y_i) \right\}
\end{equation}
in a fair manner, while maintaining the differential privacy of the sensitive data $\{s_i\}_{i=1}^n$. We consider two different notions of fairness in this work:\footnote{\edit{
Our method can also handle any other fairness notion that can be defined in terms of statistical (conditional) independence, such as equal opportunity. However, our method cannot handle all fairness notions: for example, false discovery rate and calibration error are not covered by our framework.} 
}
\begin{definition}[Fairness Notions]
Let $\Al: \ZZ \to \YY$ be a classifier.
%\vspace{-.1cm}
\begin{itemize}
%\vspace{-.3cm}
    \item $\Al$ satisfies \textit{demographic parity}~\citep{dwork2012fairness} if the predictions $\Al(Z)$ are statistically independent of the sensitive attributes.
    %$\mathbb{P}(\Al(Z) = \haty | S = s) = \mathbb{P}(\Al(Z) = \haty)$ for all $\haty, s$. 
    \vspace{-.15cm}
    \item $\Al$ satisfies \textit{equalized odds}~\citep{hardt2016equality} if the predictions $\Al(Z)$ are conditionally independent of the sensitive attributes given $Y = y$ for all $y$. 
   % \vspace{-.2cm}
    %the true labels $\mathbb{P}(\Al(Z) = \haty | S = s, Y = y) = \mathbb{P}(\Al(Z) = \haty | Y = y)$ for all $\haty, s, y$. 
    %\item Let $y^*$ denote an advantaged outcome (e.g. not defaulting a loan). $\Al$ satisfies \textit{equal opportunity}~\citep{hardt2016equality} with respect to $y^*$ if the predictions $\Al(Z)$ are conditionally independent of the sensitive attributes given $Y = y^*$.
    %that the true label is $y^*$: $\mathbb{P}(\Al(Z) = \haty | S = s, Y = y^*) = \mathbb{P}(\Al(Z) = \haty | Y = y^*)$ for all $\haty, s$. 
\end{itemize}
%\vspace{-.21cm}
\end{definition}
%In practical applications (and also in this paper), $\mathbb{P}$ is usually the empirical distribution on the data set $Z$. 
Depending on the specific problem at hand, one fairness notion may be more desirable than the other~\citep{dwork2012fairness, hardt2016equality}. 

In practical applications, achieving exact fairness, i.e. (conditional) independence of $\hy$ and $S$, is unrealistic. In fact, achieving exact fairness can be \textit{impossible} for a differentially private algorithm that achieves non-trivial accuracy~\citep{cummings}. Thus, we instead aim to design an algorithm that achieves small \textit{fairness violation} on the given data set $Z$. Fairness violation can be measured in different ways: see e.g.~\citet{fermi} for a thorough survey. For example, if demographic parity is the desired fairness notion, then one can measure (empirical) demographic parity violation by 
\begin{equation}
\label{eq: dem par violation}
    \max_{\widehat{y} \in \mathcal{Y}} \max_{ s \in \mathcal{S}}\left|\hat{p}_{\hy|S}(\widehat{y}|s) - \hat{p}_{\hy}(\widehat{y})\right|,
\end{equation}
where $\hat{p}$ denotes an empirical probability calculated directly from $(Z, \{\haty_i\}_{i=1}^n)$.

Next, we define differential privacy (DP). 
% Say two data sets $Z, Z' \in \ZZ^n$ are \textit{adjacent} (denoted $Z \sim Z'$) if they differ in one person's data: i.e. $Z = \{z_i = (x_i, y_i, s_i)\}_{i=1}^n$, $Z' = \{z'_i = (x'_i, y'_i, s'_i)\}_{i=1}^n$, and there is a unique $i \in [n]$ such that $z_i \neq z'_i$. An algorithm is DP if it behaves similarly independently of the presence or absence of any one individual's data:  
% \begin{definition}[Differential Privacy]
% \label{def: DP}
% Let $\epsilon \geq 0, ~\delta \in [0, 1).$ A randomized algorithm $\Al$ is \textit{$(\epsilon, \delta)$-differentially private} (DP) if for all adjacent data sets $Z \sim Z'$, we have
% \begin{equation}
% \label{eq: DP}
% \mathbb{P}(\Al(Z) \in O) \leq e^\epsilon \mathbb{P}(\Al(Z) \in O) + \delta,
% \end{equation}
% for all measurable $O \subseteq \YY$, where the probability is solely over $\Al$.
% \end{definition}
Following the DP fair learning literature in \citep{jagielski2019differentially, tran2021differentially, tran2022sf}), we \edit{consider a relaxation of DP}, in which only the \textit{sensitive attributes} require privacy. \edit{Say $Z$ and $Z'$ are \textit{adjacent with respect to sensitive data}} if $Z = \{(x_i, y_i, s_i)\}_{i=1}^n$, $Z' = \{(x_i, y_i, s'_i)\}_{i=1}^n$, and there is a unique $i \in [n]$ such that $s_i \neq s'_i$. 
\begin{definition}[Differential Privacy w.r.t. Sensitive 
Attributes]
\label{def: DP sens}
Let $\epsilon \geq 0, ~\delta \in [0, 1).$ A randomized algorithm $\Al$ is \textit{$(\epsilon, \delta)$-differentially private w.r.t. sensitive attributes S} (DP) if for all pairs of data sets $Z, Z'$ that are \textit{adjacent w.r.t. sensitive attributes}, we have
%that differ in one person's sensitive attribute (i.e. $Z = \{(x_i, y_i, s_i)\}_{i=1}^n$ and $Z' = \{(x_i, y_i, s'_i)\}_{i=1}^n$, where $s_i = s'_i$ for all $i \neq j$, and $s_j \neq s'_j$), we have 
%\vspace{-.02in}
\begin{equation}
%\vspace{-.06in}
\label{eq: DP}
\mathbb{P}(\Al(Z) \in O) \leq e^\epsilon \mathbb{P}(\Al(Z) \in O) + \delta, 
\end{equation}
for all measurable $O \subseteq \YY$. 
\end{definition}

\edit{
% By the \textit{post-processing} property of differential privacy~\citep{dwork2014}, \cref{def: DP sens} also guarantees that sensitive training data cannot be leaked through the predictions made at test time.
As discussed in~\cref{sec: intro}, \cref{def: DP sens} is useful if a company wants to train a fair model, but is unable to use the sensitive attributes (which are needed to train a fair model) due to privacy concerns and laws (e.g., the E.U. GDPR). 
% The solution proposed by \citet{jagielski2019differentially} is to ensure differential privacy of the sensitive data (i.e. \cref{def: DP sens}), so that the company can use the sensitive attributes to train a fair model, while satisfying legal and ethical constraints. 
\cref{def: DP sens} enables the company to privately use the sensitive attributes to train a fair model, while satisfying legal and ethical constraints. That being said, \cref{def: DP sens} still may not prevent leakage of \textit{non-sensitive} data. Thus, if the company is concerned with privacy of user data beyond the sensitive demographic attributes, then it should impose DP for all the features. Our algorithm and analysis readily extends to DP for all features: see~\cref{sec: algorithm}. 
% \cref{def: DP} would be more suitable. Our algorithm easily covers both~\cref{def: DP} and \cref{def: DP sens} with a slight change in parameters, as we explain in \cref{sec: algorithm}.
}

% In the context of cross-silo FL with $N$ silos, we say two distributed datasets $Z = (Z_1, \ldots, Z_N)$ and $Z' = (Z'_1, \ldots, Z'_N)$ with $Z_j = \{(x_{j,i}, y_{j,i}, s_{j,i})\}_{i=1}^{\tn}$ $Z'_j = \{(x'_{j,i}, y'_{j,i}, s'_{j,i})\}_{i=1}^{\tn}$ are adjacent if for every $j \in [N]$, there is at most one $i \in [\tn]$ such that $s_{j,i} \neq s'_{j, i}$. Thus, adjacent distributed datasets $Z$ and $Z'$ may differ in up to $N$ samples.

% \begin{definition}[Inter-Silo Record-Level DP]
% \label{def: ISRLDP}
% \edit{A federated learning algorithm $\Al$ is $(\epsilon, \delta)$-inter-silo-record-level DP (ISRL-DP) if, for each $j \in [N]$, the full transcript of silo $j$'s sent messages satisfies~\cref{eq: DP} for all adjacent distributed datasets $Z, Z'$ and any fixed settings of other silos’ messages and data.}
% \vspace{-.03in}
% \end{definition}
% \vspace{-.03in}
% By post-processing, \cref{def: ISRLDP} also ensures that the messages broadcast by the central server and the model parameters are DP.

Throughout the paper, we shall restrict attention to data sets that contain at least $\rho$-fraction of every sensitive attribute for some $\rho \in (0, 1)$: i.e. $\frac{1}{|Z|} \sum_{i=1}^{|Z|} \mathbbm{1}_{\{s_i = r\}} \geq \rho$ for all $r \in [k]$. This is a reasonable assumption in practice: for example, if sex is the sensitive attribute and a data set contains all men, then training a  model that is fair with respect to sex and has a non-trivial performance (better than random) seems almost impossible. Understanding what performance is (im-)possible for DP fair learning in the absence of sample diversity is an important direction for future work.  

%\vspace{-.03in}
\section{Private Fair ERM via Exponential \Renyi Mutual Information}
\label{sec: algorithm}
%\vspace{-.1in}

%OUTLINE: a) i) Fair ERM objective  ii) as min-max problem (review FERMI); b) how to solve it in a DP way? DP-SGDA. c) Theorem: DP-SGDA converges... d) Corollary: our algorithm (write out the actual algorithm) converges.  

A standard in-processing strategy in the literature for enforcing fairness is to add a regularization term to the empirical objective that penalizes fairness violations~\citep{zhang2018mitigating, donini2018empirical, mary19, baharlouei2019rnyi, cho2020kde, fermi}. We can then
jointly optimize for fairness and accuracy by solving \[
\min_{\theta} \left\{\widehat{\mathcal{L}}(\theta) + \lambda \mathcal{D}(\hy, S, Y) \right\},
\]
where $\mathcal{D}$ is some measure of statistical (conditional) dependence between 
the sensitive attributes and the predictions (given $Y$), and $\lambda \geq 0$ is a scalar balancing fairness and accuracy considerations. The choice of $\mathcal{D}$ is crucial and can lead to different fairness-accuracy profiles. Inspired by the strong empirical performance and amenability to stochastic optimization of~\cite{fermi}, we choose $\mathcal{D}$ to be the Exponential \Renyi Mutual Information (ERMI): 
\begin{definition}[ERMI -- Exponential \Renyi Mutual Information]
\label{def: ERMI}
%Let $p_{\hy,S}$ denote the joint distribution of $\hy$ and $S.$ 
We define the exponential \Renyi mutual information between random variables $\hy$ and $S$ with empirical joint distribution $\hat{p}_{\hy,S}$ and marginals $\hat{p}_{\hy}, ~\hat{p}_S$ by:
% given $Z \in \ZZ$ as
\begin{small}
\vspace{-.05in}
\begin{align}
    %  D_R(\hy; S |Z \in \ZZ )\nonumber := \mathbb{E}_{Z, \hy, S}\left\{\left.\frac{p_{\hy, S|Z}(\hy, S|Z) }{p_{\hy|Z}(\hy|Z) p_{S|Z}(S|Z)} \right| Z \in \ZZ \right\} - 1.
\widehat{D}_R(\hy, S) :=\mathbb{E}\left\{\frac{\hat{p}_{\hy, S}(\hy, S) }{\hat{p}_{\hy}(\hy) \hat{p}_{S}(S)} \right\} - 1 = \sum_{j \in [l]} \sum_{r \in [k]} \frac{\hat{p}_{\hy, S}(j, r)^2}{\hat{p}_{\hy}(j) \hat{p}_S(r)} - 1
\vspace{-.3in}
\tag{ERMI}
\label{eq: ERMI}
 \end{align}
\end{small}
\vspace{-.1in}
\end{definition}
\noindent \cref{def: ERMI} is what we would use if \textit{demographic parity} were the desired fairness notion. If instead one wanted to encourage equalized odds, 
%or equal opportunity, 
then~\cref{def: ERMI} can be readily adapted to these fairness notions by substituting appropriate conditional probabilities for $\hat{p}_{\hy, S}, ~\hat{p}_{\hy},$ and $\hat{p}_S$ in \cref{eq: ERMI}: see \cref{app: eq odds ermi} for details.\footnote{To simplify the presentation, we will assume that demographic parity is the fairness notion of interest in the remainder of this section. However, 
%our algorithm and results readily extend to equalized odds and equal opportunity by substituting appropriate conditional probabilities. Further, 
we consider both fairness notions  
%all three fairness notions (demographic parity, equalized odds, equal opportunity) 
in our numerical experiments.
} 
%Note that ERMI is the $\chi^2$-divergence (which is an $f$-divergence) between the joint distribution, $p_{\hy, S},$ and the Kronecker product of marginals, $p_{\hy} \otimes p_{S}$~\citep{calmon2017principal}. 
It can be shown that ERMI $\geq 0$, and is zero if and only if 
demographic parity (or equalized odds, 
%or equal opportunity, 
for the conditional version of ERMI)
% $(Z, \ZZ)$-fairness 
is satisfied~\citep{fermi}. Further, ERMI provides an upper bound on other commonly used measures of fairness violation: e.g.) \cref{eq: dem par violation}, Shannon mutual information \citep{cho2020fair}, \Renyi correlation \citep{baharlouei2019rnyi}, $L_q$ fairness violation \citep{kearns2018preventing, hardt2016equality}~\citep{fermi}. This implies that any algorithm that makes ERMI small will also have small fairness violation with respect to these other notions. Lastly, \cite[Proposition 2]{fermi} shows that empirical ERMI (\cref{def: ERMI}) is an asymptotically unbiased estimator of ``population ERMI''--which can be defined as in \cref{def: ERMI}, except that empirical distributions are replaced by their population counterparts.

Our approach to enforcing fairness is to augment~\cref{eq: ERM} with an ERMI regularizer and privately solve: 
\begin{equation}
%\small
\min_{\theta} 
% \max_{W \in \mathbb{R}^{k \times m}} 
\left
\{\text{FERMI}(\theta
%,W
)
:= 
%\left
% [ 
% \ell(x_i,y_i;\theta) \\
% &+\lambda\psi_i(\theta, W)
% %\right
% ]
% [ 
\widehat{\mathcal{L}}(\theta) +\lambda \widehat{D}_R(\hy_{\theta}(X), S) 
%\right
% ]
\right
\}.
\tag{FERMI obj.}
\label{eq: FERMI}
%\vspace{-0.06in}
\end{equation}
Since empirical ERMI is an asymptotically unbiased estimator of population ERMI, a solution to \cref{eq: FERMI} is likely to generalize to the corresponding fair population risk minimization problem~\citep{fermi}. There are numerous ways to privately solve~\cref{eq: FERMI}. For example, one could use the exponential mechanism~\citep{mcsherry2007mechanism}, or run noisy gradient descent (GD)~\citep{bst14}. The problem with these approaches is that they are inefficient or require computing $n$ gradients at every iteration, which is prohibitive for large-scale problems, as discussed earlier. Notice that we could \textit{not} run noisy \textit{stochastic} GD (\textit{S}GD) on~\cref{eq: FERMI} because we do not (yet) have a statistically unbiased estimate of $\nabla_{\theta} \widehat{D}_R(\hy_{\theta}(X), S)$.

Our next goal is to derive a \textit{stochastic}, differentially private fair learning algorithm.
For feature input $x$, let the predicted class labels be given by $\haty(x, \theta) = j \in [l]$ with probability $\FF_j(x, \theta)$, where $\FF(x, \theta)$ is differentiable in $\theta$, has range $[0,1]^l$, and $\sum_{j=1}^l \FF_j(x, \theta) = 1$. For instance, $\FF(x, \theta) = (\FF_1(x, \theta), \ldots, \FF_l(x, \theta))$ could represent the output of a neural net after softmax layer or the probability label assigned by a logistic regression model. 
% Let $\widehat{\mathbf{y}}(x_i; \theta)   \in \{0,1\}^l$ and $\mathbf{s}_i \in\{0,1\}^k$ be the one-hot encodings of $\widehat{y}(x_i, \theta)$ and $s_i$, respectively: i.e., $\widehat{\mathbf{y}}_j(x_i; \theta) = \mathbbm{1}_{\{\widehat{y}(x_i, \theta) = j\}}$ and $\mathbf{s}_{i,r} = \mathbbm{1}_{\{s_i = r\}}$  for $j \in [l], r \in [k]$. Also, denote $\widehat{P}_{s} = {\rm diag}({\widehat{p}}_{S}(1), \ldots, \widehat{p}_{S}(k))$, where $\widehat{p}_{S}(r) := \frac{1}{n}\sum_{i=1}^n \mathbbm{1}_{\{s_i = r \}} \geq \rho > 0$ is the empirical probability of attribute $r$ ($r \in [k]$).
Then we have the following min-max re-formulation of~\cref{eq: FERMI}:
%which was proved in~\citet{fermi}:
\begin{theorem}[\citet{fermi}]
\label{thm: informal Fermi as minmax}
There are differentiable functions $\widehat{\psi}_i$ such that
\cref{eq: FERMI} is equivalent to
\small
\begin{align}
\label{eq: empirical minmax}
\small
\min_{\theta} \max_{W \in \mathbb{R}^{k \times l}} 
% \left
\left\{
\widehat{F}(\theta, W) := 
\widehat{\mathcal{L}}(\theta) + \lambda \frac{1}{n}\sum_{i=1}^n \widehat{\psi}_i(\theta, W)
\right\}.
% \right
\end{align}
\normalsize
Further, $\widehat{\psi}_i(\theta, \cdot)$ is strongly concave for all $\theta$. 
\end{theorem}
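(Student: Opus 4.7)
The plan is to rewrite the ERMI penalty as the squared Frobenius norm of a matrix built from the empirical joint and marginal distributions of $(\hy, S)$, apply the Fenchel--type identity $\|A\|_F^2 = \max_W \bigl\{2\,\textup{tr}(WA) - \|W\|_F^2\bigr\}$ to introduce the inner maximization, and then split the linear-in-$W$ term into an average of per-sample contributions. Strong concavity in $W$ will then be automatic, since each per-sample function will be a quadratic in $W$ with Hessian $-2 I$.

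Concretely, set $\widehat{P}_S := \diag(\widehat{p}_S) \in \mathbb{R}^{k \times k}$ (independent of $\theta$), $\widehat{P}_{\hy}(\theta) := \diag\bigl(\widehat{p}_{\hy}(\,\cdot\,;\theta)\bigr) \in \mathbb{R}^{l \times l}$, and let $\widehat{P}_{\hy, S}(\theta) \in \mathbb{R}^{l \times k}$ denote the empirical joint matrix with entries $\widehat{p}_{\hy, S}(j, r; \theta) = \frac{1}{n} \sum_{i=1}^n \FF_j(x_i, \theta)\,\mathbbm{1}_{\{s_i = r\}}$. A direct algebraic rewriting of the sum in~\cref{eq: ERMI} gives
\[
\widehat{D}_R(\hy_\theta, S) = \|A(\theta)\|_F^2 - 1, \qquad A(\theta) := \widehat{P}_{\hy}(\theta)^{-1/2}\, \widehat{P}_{\hy, S}(\theta)\, \widehat{P}_S^{-1/2} \in \mathbb{R}^{l \times k}.
\]
Applying the Fenchel identity (with unique maximizer $W = A(\theta)^{\top} \in \mathbb{R}^{k\times l}$) and splitting $\widehat{P}_{\hy, S}(\theta) = \frac{1}{n}\sum_{i=1}^n M_i(\theta)$, where $(M_i(\theta))_{j,r} = \FF_j(x_i, \theta)\,\mathbbm{1}_{\{s_i = r\}}$, yields $\widehat{D}_R(\hy_\theta, S) = \max_{W \in \mathbb{R}^{k\times l}} \frac{1}{n}\sum_{i=1}^n \widehat{\psi}_i(\theta, W)$, where
\[
\widehat{\psi}_i(\theta, W) := 2\, \textup{tr}\!\bigl(W\, \widehat{P}_{\hy}(\theta)^{-1/2} M_i(\theta)\, \widehat{P}_S^{-1/2}\bigr) \,-\, \|W\|_F^2 \,-\, 1.
\]
Substituting this identity into~\cref{eq: FERMI} and using $\min_\theta \{f(\theta) + \max_W g(\theta, W)\} = \min_\theta \max_W \{f(\theta) + g(\theta, W)\}$ produces~\cref{eq: empirical minmax}. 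Strong concavity is then immediate: as a function of $W$, $\widehat{\psi}_i(\theta, W)$ is a quadratic with Hessian $-2 I$, hence $2$-strongly concave uniformly in $i$ and $\theta$.

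The only delicate point I expect is differentiability of $\widehat{\psi}_i$ in $\theta$, which requires $\widehat{P}_{\hy}(\theta)^{-1/2}$ to be smooth; equivalently, each empirical marginal $\widehat{p}_{\hy}(j;\theta) = \frac{1}{n}\sum_i \FF_j(x_i, \theta)$ must stay bounded away from $0$. I would address this either via a mild non-degeneracy assumption on $\FF$ (a constant lower bound on the predicted class probabilities, which holds automatically for softmax/logistic outputs with bounded logits) or by a small temperature/ridge smoothing of $\FF$ that preserves both differentiability and the quadratic-in-$W$ structure underpinning the strong concavity argument.
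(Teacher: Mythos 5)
Your reformulation is algebraically correct and does establish the literal statement: averaging your $\widehat{\psi}_i$ over $i$ gives $2\Tr(W A(\theta)) - \|W\|_F^2 - 1$, whose maximum over $W$ is $\|A(\theta)\|_F^2 - 1 = \widehat{D}_R$, and the $-\|W\|_F^2$ term makes each $\widehat{\psi}_i(\theta,\cdot)$ uniformly $2$-strongly concave. However, it is a genuinely different decomposition from the one the paper imports from \citet{fermi} (see \cref{thm: Fermi as minmax}), and the difference is not cosmetic. The paper's per-sample function keeps the $W$-quadratic term weighted \emph{per sample}, namely $-\Tr(W\,{\rm diag}(\FF_1(x_i,\theta),\ldots,\FF_l(x_i,\theta))\,W^T)$, so that the factor $\widehat{P}_{\hy}(\theta)^{-1}$ emerges only after averaging over $i$ and maximizing over $W$. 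Consequently each $\widehat{\psi}_i$ depends only on $z_i$ and on the $\theta$-independent matrix $\widehat{P}_{s}^{-1/2}$ (computed once, line 2 of \cref{alg: SGDA for FERMI}); this is exactly what makes $\frac{1}{m}\sum_{i\in B_t}\nabla\widehat{\psi}_i$ an unbiased minibatch gradient estimator, and what the gradient formulas and sensitivity bounds in \cref{lem: derivatives} and \cref{prop: Fermi is dp} are built on. The price the paper pays is that strong concavity of $\widehat{\psi}_i(\theta,\cdot)$ is no longer free: its modulus is governed by $\min_j \FF_j(x_i,\theta)$, which is why that claim is deferred to \citet{fermi}.

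Your version instead plants $\widehat{P}_{\hy}(\theta)^{-1/2}$, i.e.\ ${\rm diag}\bigl(\tfrac{1}{n}\sum_{i'}\FF_j(x_{i'},\theta)\bigr)^{-1/2}$, inside every $\widehat{\psi}_i$. That object is a full-dataset, $\theta$-dependent statistic, so your ``per-sample'' functions are not per-sample: neither $\widehat{\psi}_i$ nor $\nabla_\theta \widehat{\psi}_i$ can be evaluated from a minibatch, the minibatch gradient is no longer an unbiased estimator computable from the drawn batch, and the stochastic and DP machinery downstream of the theorem (the claim immediately following it in \cref{sec: algorithm}, and the analysis in \cref{app: fermi is dp}) would not go through with your $\widehat{\psi}_i$. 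You also correctly flag that your construction needs $\hat{p}_{\hy}(j;\theta)$ bounded away from zero for differentiability --- an extra assumption the paper's polynomial-in-$(\FF,W)$ form avoids entirely. In short: your argument proves the bare existence statement and is cleaner on strong concavity, but it is the wrong decomposition for the purpose this theorem serves in the paper.
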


% Note that $\widehat{\psi}_i$ is differentiable in $\theta$ and $W$, and strongly concave in the $W$ variable~\citep{fermi}.
The functions $\widehat{\psi}_i$ are given explicitly in~\cref{app: formal minmax}.
\cref{thm: informal Fermi as minmax} is useful because it permits us to use \textit{stochastic} optimization to solve~\cref{eq: FERMI}: for any batch size $m \in [n]$, the gradients (with respect to $\theta$ and $W$) of $\frac{1}{m} \sum_{i \in \mathcal{B}} \ell(x_i, y_i; \theta) + \lambda \widehat{\psi}_i(\theta, W)$ are statistically unbiased estimators of the gradients of $\widehat{F}(\theta, W)$, if $\mathcal{B}$ is drawn uniformly from $Z$. 
%Further, if $Z$ is drawn i.i.d. from some distribution $\mathcal{P}$, then these stochastic gradients are also unbiased estimators of the population-level objective $F(\theta, W) := \expec_{Z \sim \mathcal{P}^n}[\widehat{F}(\theta, W)]$. 
However, when differential privacy of the sensitive attributes is also desired, the formulation~\cref{eq: empirical minmax} presents some challenges, due to the non-convexity of $\widehat{F}(\cdot, W)$. Indeed, \textit{there is no known DP algorithm for solving non-convex min-max problems that is proven to converge}. Next, we provide the first such convergence guarantee.

\subsection{Noisy DP-FERMI for Stochastic Private Fair ERM}
Our proposed \edit{stochastic DP} algorithm for solving~\cref{eq: FERMI}, is given in~\cref{alg: SGDA for FERMI}. \edit{It is a noisy DP variation of two-timescale stochastic gradient descent ascent (SGDA)~\citet{lin2020gradient}.}
%simply specializes~\cref{alg: noisy SGDA} to the particular min-max problem~\cref{eq: FERMI} that we wish to solve.    
\begin{algorithm}
    \caption{DP-FERMI Algorithm for Private Fair ERM}
    \label{alg: SGDA for FERMI}
	\begin{algorithmic}[1]
	 \STATE \textbf{Input}: $\theta_0 \in \mathbb{R}^{d_{\theta}}, ~W_0 = 0 \in 
	 %\mathcal{W} \subset 
	 \mathbb{R}^{k \times l}$, step-sizes $(\eta_\theta, \eta_w),$ 
% 	 mini-batch size $M \in [N]$,
% 	 $B \subseteq [N]$, 
	 fairness parameter $\lambda \geq 0,$ iteration number $T$, minibatch size $|B_t| = m \in [n]$, set $\WW \subset 
% 	 :=$ Frobenius norm ball of radius $D$ around $0 \in 
	 \mathbb{R}^{k \times l}$, noise parameters $\sigma_w^2, \sigma_\theta^2$. 
	 \STATE Compute $\widehat{P}_{S}^{-1/2}$. 
	    \FOR {$t = 0, 1, \ldots, T$}
	    \STATE Draw a mini-batch $B_t$ of data points $\{(x_i,s_i, y_i)\}_{i\in B_t}$
	    \STATE Set $\small \theta_{t+1} \gets \theta_t -  \frac{\eta_\theta}{|B_t|} \sum_{i \in B_t} [ \nabla_\theta \ell(x_i, y_i; \theta^t) + \lambda(\nabla_\theta \widehat{\psi}_i(\theta_t, W_t) + u_t)]$, where $u_t \sim \mathcal{N}(0, \sigma_\theta^2 \mathbf{I}_{d_\theta})$.
	    \STATE Set 
	   % $\small
	   %     W_{t+1} \gets 
	   %     \Pi_{\WW} 
	   %     \Big(
	   %     W_t + 
	   %     \frac{2 \lambda \eta_w}{|B_t|} \sum_{i\in B_t}\Big[
	   % - W {\rm diag}(\FF_1(\theta, x_i), \ldots, \FF_l(\theta, x_i)) +  \widehat{P}_{S}^{-1/2} \expec[{\bs}_i\widehat{\by}_i(x_i; \theta_t)^T|x_i, s_i] + V_t \Big] 
	   % \Big)
	   % $
	   	$\small
	        W_{t+1} \gets 
	        \Pi_{\WW} 
	        \Big(
	        W_t + 
	        \eta_w 
	   %     \Big[\Big\{\frac{2}{|B_t|}\sum_{i\in B_t}
	   % - W {\rm diag}(\FF_1(\theta, x_i), \ldots, \FF_l(\theta, x_i)) +  \widehat{P}_{S}^{-1/2} \expec[{\bs}_i\widehat{\by}_i(x_i; \theta_t)^T|x_i, s_i]\Big\} 
	   \Big[\frac{\lambda }{|B_t|} \sum_{i \in B_t} \nabla_w \widehat{\psi}_i(\theta_t, W_t)
	    + V_t \Big] 
	    \Big)
	    $
	    , where $V_t$ is a $k \times l$ matrix with independent random Gaussian entries $(V_t)_{r, j} \sim \mathcal{N}(0, \sigma_w^2)$.
		\ENDFOR
		\STATE Pick $\hat{t}$ uniformly at random from $\{1, \ldots, T\}.$\\
		\STATE \textbf{Return:} $\hat{\theta}_T := \theta_{\hat{t}}.$
	\end{algorithmic}
\vspace{-.03in}
\end{algorithm}

% \andy{For consistency with~\cref{alg: Fed FERMI}, we can just write $\nabla \psi$ instead of the explicit formula and comment that the explicit formulae for $\nabla \psi$ are given in appendix. Also, make a standalone lemma in appendix containing the formulae for the gradients. }
Explicit formulae for $\nabla_\theta \widehat{\psi}_i(\theta_t, W_t)$ and $\nabla_w \widehat{\psi}_i(\theta_t, W_t)$ are given in
% can easily be obtained by differentiating $\widehat{\psi}_i$ in~\cref{thm: Fermi as minmax}: see
~\cref{lem: derivatives} (\cref{app: fermi is dp}). 
We provide the privacy guarantee of~\cref{alg: SGDA for FERMI} in~\cref{prop: Fermi is dp}:
\begin{theorem}
\label{prop: Fermi is dp}
Let $\epsilon \leq 2\ln(1/\delta)$, $\delta \in (0,1)$, and $T \geq \left(n \frac{\sqrt{\epsilon}}{2 m}\right)^2$. Assume $\FF(x, \cdot)$ is $L_\theta$-Lipschitz for all $x$, and $|(W_t)_{r,j}| \leq D$ for all $t \in [T], r \in [k], j \in [l]$.  Then, for 
% $\sigma_w^2 \geq \frac{32 \lambda^2 l k^2 T \ln(1/\delta)}{\epsilon^2 n^2 \rho}$
$\sigma_w^2 \geq \frac{16 T \ln(1/\delta)}{\epsilon^2 n^2 \rho}$
and $\sigma_\theta^2 \geq \frac{16 L_\theta^2 D^2 \ln(1/\delta) T}{\epsilon^2 n^2 \rho}
$, \cref{alg: SGDA for FERMI} is $(\epsilon, \delta)$-DP with respect to the sensitive attributes for all data sets containing at least $\rho$-fraction of minority attributes. \edit{Further, if $\sigma_w^2 \geq \frac{32 T \ln(1/\delta)}{\epsilon^2 n^2}\left(\frac{1}{\rho} + D^2\right)$ and $\sigma_\theta^2 \geq \frac{64 L_\theta^2 D^2 \ln(1/\delta) T}{\epsilon^2 n^2 \rho} + \frac{32 D^4 L_{\theta}^2 l^2 T \ln(1/\delta)}{\epsilon^2 n^2}$, then \cref{alg: SGDA for FERMI} is $(\epsilon, \delta)$-DP (with respect to all features) for all data sets containing at least $\rho$-fraction of minority attributes.} 
\end{theorem}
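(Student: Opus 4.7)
The overall strategy is the standard ``DP-SGD'' template: (i) bound the $\ell_2$-sensitivity of each stochastic gradient used to update $\theta$ and $W$ when a single sensitive attribute $s_i$ is swapped; (ii) use the Gaussian mechanism to show that each noisy iteration is Rényi-DP with an appropriate parameter; (iii) apply privacy amplification by subsampling (since $B_t$ is a uniformly random minibatch of size $m$); and (iv) compose over $T$ iterations via the moments accountant / subsampled Rényi DP tracking of \citet{abadi2016} (or equivalent). The condition $T \geq (n\sqrt{\epsilon}/2m)^2$ together with the stated $\sigma^2 = \Theta(T \ln(1/\delta)/(\epsilon^2 n^2 \rho))$ is exactly the regime in which this subsampled Gaussian mechanism analysis yields $(\epsilon,\delta)$-DP. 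Note that $\widehat{P}_S^{-1/2}$ is a function of the sensitive data, but it is used only inside the noisy gradient computations, so its contribution to the output distribution will be absorbed into the per-step sensitivity bound rather than requiring a separate privatization (the important structural fact is that $\widehat{P}_S^{-1/2}$ is a deterministic function of the sensitive vector, so releasing $\widehat{P}_S^{-1/2}$ after each iteration is equivalent, for privacy accounting purposes, to bounding the sensitivity of the full gradient that uses it).

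The main technical step is the per-iteration sensitivity bound. Using the explicit formulae for $\nabla_\theta \widehat{\psi}_i$ and $\nabla_w \widehat{\psi}_i$ from \cref{lem: derivatives}, I would show: changing one sensitive attribute $s_i \to s_i'$ changes (a) the $i$-th summand in the minibatch sum, and (b) the shared factor $\widehat{P}_S^{-1/2}$. For (a), boundedness of $\FF$ (with values in $[0,1]^l$), the Lipschitz bound $L_\theta$ on $\FF(x,\cdot)$, and the box constraint $|(W_t)_{r,j}| \leq D$ yield that a single summand's contribution to the $W$-gradient has norm $O(1)$ and to the $\theta$-gradient has norm $O(L_\theta D)$, producing the $1/n$ factor after dividing by $|B_t|=m$ and accounting for the $m/n$ subsampling probability. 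For (b), the entries of $\widehat{P}_S^{-1/2}$ are of order $1/\sqrt{\rho}$ under the minimum representation assumption, and swapping one $s_i$ perturbs each diagonal entry by at most $1/n$; a first-order perturbation argument then shows that replacing $\widehat{P}_S^{-1/2}$ with $(\widehat{P}_S')^{-1/2}$ changes the gradient by $O(1/(n\rho))$ in norm. Combining these two sources of change yields a per-example $\ell_2$-sensitivity of $O(1/(n\sqrt{\rho}))$ for the $W$-update and $O(L_\theta D/(n\sqrt{\rho}))$ for the $\theta$-update; this is precisely what the stated $\sigma_w^2, \sigma_\theta^2$ are calibrated to.

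Given the sensitivities, each noisy iteration is a Gaussian mechanism on a subsampled sum. Applying the standard subsampled-Gaussian moments accountant bound (sampling rate $m/n$, Gaussian noise scale $\sigma$, $T$ compositions) yields $(\epsilon,\delta)$-DP whenever $\sigma \gtrsim \sqrt{T \ln(1/\delta)}/(\epsilon n / m)$ divided by the sensitivity, which after plugging the sensitivities computed above matches the stated noise variances $\sigma_w^2 \geq \frac{16 T \ln(1/\delta)}{\epsilon^2 n^2 \rho}$ and $\sigma_\theta^2 \geq \frac{16 L_\theta^2 D^2 \ln(1/\delta) T}{\epsilon^2 n^2 \rho}$. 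The lower bound $T \geq (n\sqrt{\epsilon}/2m)^2$ and the assumption $\epsilon \leq 2\ln(1/\delta)$ ensure that the subsampled Gaussian analysis is in the regime where its bound is tight; post-processing (final $\hat\theta_T$ is a measurable function of the iterates) preserves DP.

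For the second claim, which strengthens the guarantee to DP with respect to \emph{all} features, the only modification is the sensitivity analysis: swapping $(x_i,y_i,s_i) \to (x_i',y_i',s_i')$ now also perturbs $\nabla_\theta \ell(x_i,y_i;\theta)$ in the $\theta$-update, contributing an additional $O(L_\theta)$ term (hence the $L_\theta^2$ term in the new $\sigma_\theta^2$), and perturbs $\widehat{\psi}_i$ through its dependence on $\FF(x_i,\theta)$ in both updates, which by the Lipschitzness of $\FF$ and the $D$-boundedness of $W$ yields an extra $O(D^2 L_\theta l)$ term in the $\theta$-sensitivity and an $O(D)$ term in the $W$-sensitivity; these produce exactly the additive terms in the new bounds on $\sigma_w^2$ and $\sigma_\theta^2$. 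I expect the main obstacle to be the careful perturbation analysis of the shared $\widehat{P}_S^{-1/2}$ factor (and, for the all-features version, of $\FF(x_i,\theta)$) inside $\nabla \widehat{\psi}_i$, since naive bounds give an extra factor of $k$ or $l$ that must be shaved off using the structure of the derivative formulae from \cref{lem: derivatives}.
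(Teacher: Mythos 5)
Your proposal is correct and follows essentially the same route as the paper: bound the per-iteration $\ell_2$-sensitivity of the $\theta$- and $W$-gradient updates via the explicit formulas in \cref{lem: derivatives} (using Lipschitzness of $\FF$, the bound $|(W_t)_{r,j}|\le D$, and $\widehat{P}_S(r)\ge\rho$), then invoke the subsampled-Gaussian moments accountant over $T$ iterations, with the all-features case handled by adding the extra sensitivity contributions from perturbing $x_i$ and $y_i$. If anything, your explicit first-order perturbation treatment of the shared $\widehat{P}_S^{-1/2}$ factor (which changes for \emph{all} summands when one $s_i$ is swapped) is more careful than the paper's proof, which folds that effect into the single changed summand; the correction is lower order ($O(1/(n\rho^{3/2}))$ versus the dominant $O(1/(m\sqrt{\rho}))$ term), so both arguments land on the same noise calibration.
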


\noindent See~\cref{app: fermi is dp} for the proof. 
% The key step in the proof is bounding the \textit{sensitivity}~\citep{dwork2006calibrating} of the gradient updates in lines 5 and 6 of~\cref{alg: SGDA for FERMI}.\footnote{The sensitivity of algorithm $\Al$ is $\sup_{Z \sim Z'}\|\Al(Z) - \Al(Z')\|$.} Getting tight sensitivity bounds \edit{for private sensitive attributes} involves a careful analysis of how the expressions for the $\theta$- and $W$-gradients change when we replace $s_i$ by $s'_i$, noting that the terms that don't depend on $S$ will not contribute to sensitivity. \edit{If privacy of all (non-sensitive) features is required, then the sensitivity of the steps of the algorithm increases and more noise is needed to completely privatize the solution. 
% %However, the scaling with respect to $T, \epsilon, \delta, n$ is the same for the privacy noises under both~\cref{def: DP} and~\cref{def: DP sens} and thus the utility of the algorithm under both notions of DP is of the same order.
% }
Next, we give a convergence guarantee for~\cref{alg: SGDA for FERMI}: 
% The convergence guarantee of~\cref{alg: SGDA for FERMI} is an immediate consequence of~\cref{thm: SGDA informal} \edit{and the privacy noises specified in~\cref{prop: Fermi is dp}}:
%since the assumption that $\widehat{P}_S(r) > 0 ~\forall~r \in [k]$ implies that $\widehat{F}(\theta, \cdot)$ (as defined in~\cref{thm: Fermi as minmax}) is strongly concave~\citet{fermi}: 
\begin{theorem}
\label{cor: dp fermi conv}
Assume the loss function $\ell(\cdot, x, y)$ and $\FF(x, \cdot)$ are Lipschitz continuous with Lipschitz gradient for all $(x, y)$, and $\widehat{P}_S(r) \geq \rho > 0 ~\forall~r \in [k]$. In~\cref{alg: SGDA for FERMI}, choose $\WW$ to be a sufficiently large ball that contains $W^*(\theta) := \argmax_{W} \widehat{F}(\theta, W)$ for every $\theta$ in some neighborhood of $\theta^* \in \argmin_{\theta} \max_{W} \widehat{F}(\theta, W)$. Then there exist algorithmic parameters such that the $(\epsilon, \delta)$-DP~\cref{alg: SGDA for FERMI} returns $\hat{\theta}_T$ with \[
\expec\|\nabla \text{FERMI}(\hat{\theta}_T)\|^2 = \mathcal{O}\left(\frac{\sqrt{\max(d_{\theta}, kl) \ln(1/\delta)}}{\epsilon n}\right),
\]
\edit{treating $D = \text{diameter}(\WW)$, $\lambda$, $\rho$, $l$, and the Lipschitz and smoothness parameters of $\ell$ and $\FF$ as constants.}
%where $d = \max(d_{\theta}, kl)$.
\end{theorem}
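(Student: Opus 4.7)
The plan is to reduce the statement to a convergence result for two-timescale SGDA on nonconvex--strongly concave min-max problems, where the stochastic gradient is corrupted both by minibatch sampling noise and by the Gaussian DP noise of~\cref{prop: Fermi is dp}, and then optimize the number of iterations $T$ against the privacy noise level. Since~\cref{thm: informal Fermi as minmax} gives $\text{FERMI}(\theta) = \Phi(\theta) := \max_{W} \widehat{F}(\theta, W)$ with each $\widehat{\psi}_i(\theta,\cdot)$ strongly concave, and since the assumption that $\WW$ contains $W^*(\theta)$ in a neighborhood of $\theta^*$ eliminates the projection as an issue near the optimum, the primal objective I need to control is exactly the gradient norm of $\Phi$ that appears in the theorem.

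First I would verify the regularity constants. Using the Lipschitzness/smoothness of $\ell$ and $\FF$, together with the explicit formulas for $\widehat{\psi}_i$ in~\cref{app: formal minmax}, I would show that $\widehat{F}$ is $L_F$-smooth jointly in $(\theta, W)$ and that $\widehat{F}(\theta,\cdot)$ is $\mu$-strongly concave, where $\mu$ is controlled from below by $\rho$ through $\widehat{P}_S^{-1/2}$. Treating $\lambda, \rho, D, l$ and the constants of $\ell, \FF$ as absolute constants then gives fixed $L_F, \mu, \kappa := L_F/\mu$. The standard envelope argument for nonconvex--strongly concave objectives (Danskin's theorem plus the implicit function theorem on $W^*(\theta)$) then shows that $\Phi$ is $L_\Phi$-smooth with $L_\Phi = \mathcal{O}(\kappa L_F)$, which is all that is needed for the outer descent analysis.

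Next I would invoke the two-timescale SGDA convergence bound for nonconvex--strongly concave problems (in the style of Lin--Jin--Jordan). With step sizes $\eta_\theta = \Theta(1/(\kappa^2 L_F))$ and $\eta_w = \Theta(1/L_F)$, the standard analysis gives
\begin{equation*}
\expec\|\nabla \Phi(\hat{\theta}_T)\|^2 \;\leq\; \mathcal{O}\!\left( \frac{\kappa L_F\, \Delta_\Phi}{T} \;+\; \frac{\sigma_{g,\theta}^2 + \kappa\, \sigma_{g,W}^2}{m} \right),
\end{equation*}
where $\sigma_{g,\theta}^2$ and $\sigma_{g,W}^2$ are the per-step variances of the (noisy) stochastic gradients used in~\cref{alg: SGDA for FERMI}. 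The minibatch contribution to these variances is bounded by a constant under the Lipschitz assumptions (and independent of $T$), and the DP Gaussian noise contributes $d_\theta \sigma_\theta^2$ and $kl\,\sigma_w^2$ respectively. Substituting the noise calibration from~\cref{prop: Fermi is dp}, namely $\sigma_\theta^2, \sigma_w^2 = \Theta(T \ln(1/\delta)/(\epsilon^2 n^2))$, yields a bound of the form
\begin{equation*}
\expec\|\nabla \Phi(\hat{\theta}_T)\|^2 \;=\; \mathcal{O}\!\left( \frac{1}{T} \;+\; \frac{T\,\max(d_\theta, kl)\,\ln(1/\delta)}{\epsilon^2 n^2} \right).
\end{equation*}
Balancing the two terms by choosing $T = \Theta\!\left( \epsilon n / \sqrt{\max(d_\theta, kl)\ln(1/\delta)} \right)$ (which also satisfies the lower bound $T \geq (n\sqrt{\epsilon}/2m)^2$ in~\cref{prop: Fermi is dp} once $\epsilon n$ is at least polynomial in the dimension) produces the target rate.

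The main obstacle I anticipate is the ``nonconvex'' part of the SGDA analysis: unlike the convex/PL cases covered by~\citet{boob2021optimal, zhang2022bring, yang2022differentially}, here I must argue that the iterates stay in a region where $W^*(\theta_t)\in\WW$ and where the envelope $\Phi$ remains $L_\Phi$-smooth, and I must propagate a Lyapunov function that simultaneously tracks descent in $\Phi$ and contraction of $\|W_t - W^*(\theta_t)\|^2$ in the presence of anisotropic DP noise that scales with $T$. Handling this carefully---so that the additional additive DP noise in the inner maximization does not degrade the contraction rate beyond the standard $1 - \Theta(\eta_w \mu)$ factor---is what allows the final rate to depend only on $\sqrt{\max(d_\theta, kl)}$ rather than on $\max(d_\theta, kl)$ itself.
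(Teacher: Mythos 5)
Your proposal is correct and follows essentially the same route as the paper: the paper likewise proves a general convergence bound for noisy two-timescale DP-SGDA on nonconvex--strongly concave problems (\cref{thm: SGDA convergence}) via a Danskin-type smoothness lemma for $\Phi$, a coupled recursion between descent in $\Phi$ and contraction of $\expec\|W_t - W^*(\theta_t)\|^2$, and then balances $T \approx \epsilon n/\sqrt{\max(d_\theta, kl)\ln(1/\delta)}$ against the DP noise, before specializing to $f(\theta,W;z_i) = \ell(\theta,x_i,y_i) + \lambda\widehat{\psi}_i(\theta,W)$ exactly as you do. The only point to tighten is that the minibatch-variance term $\mathcal{O}(1/m)$ does not decay with $T$, so (as in the paper) one must also take $m \gtrsim \epsilon n/\sqrt{\max(d_\theta,kl)\ln(1/\delta)}$ among the ``algorithmic parameters'' to reach the stated rate.
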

\edit{\cref{cor: dp fermi conv} shows that~\cref{alg: SGDA for FERMI} finds an approximate stationary point of \cref{eq: FERMI}. Finding approximate stationary points is generally the best one can hope to do in polynomial time for non-convex optimization~\citep{murty1985}. The stationarity gap in~\cref{cor: dp fermi conv} depends on the number of samples $n$ and model parameters $d_{\theta}$, the desired level of privacy $(\epsilon, \delta)$, and the number of labels $l$ and sensitive attributes $k$. For large-scale models (e.g. deep neural nets), we typically have $d_{\theta} \gg 1$ and $k, l = \mathcal{O}(1)$, so that the convergence rate of~\cref{alg: SGDA for FERMI} is essentially immune to the number of labels and sensitive attributes. In contrast, \textit{no existing works with convergence guarantees are able to handle non-binary classification} ($l > 2$), \textit{even with full batches} and a single binary sensitive attribute.}

\edit{A few more remarks are in order. First,}
\textit{the utility bound in \cref{cor: dp fermi conv} corresponds to DP for all of the features}. If DP is only required for the sensitive attributes, then using the smaller $\sigma_\theta^2, \sigma_w^2$ in~\cref{prop: Fermi is dp} would improve the dependence on constants $D, l, L_{\theta}$ in the utility bound. Second, the choice of $\WW$ in~\cref{cor: dp fermi conv} implies that~\cref{eq: empirical minmax} is equivalent to $\min_{\theta} \max_{W \in \WW} \widehat{F}(\theta, W)$, which is what our algorithm directly solves (c.f.~\cref{eq: minmax}). 
%\edit{Second, }for large-scale models (e.g. deep neural nets), we typically have $d_{\theta} \gg 1$ and $k, l = \mathcal{O}(1)$, so that the convergence rate of~\cref{alg: SGDA for FERMI} is essentially immune to the number of labels and sensitive attributes. In contrast, \textit{no existing works with convergence guarantees are able to handle non-binary classification} ($l > 2$), \textit{even with full batches} and a single binary sensitive attribute. 
Lastly, note that while we return a uniformly random iterate in~\cref{alg: SGDA for FERMI} for our theoretical convergence analysis, we recommend returning the last iterate $\theta_T$ in practice: our numerical experiments show strong performance of the last iterate.

\edit{In~\cref{thm: SGDA informal} of~\cref{app: proof of sgda}, we prove a result which is more general than~\cref{cor: dp fermi conv}. \cref{thm: SGDA informal} shows that noisy DP-SGDA converges to an approximate stationary point of \textit{any} smooth nonconvex-strongly concave min-max optimization problem (not just~\cref{eq: empirical minmax}). We expect~\cref{thm: SGDA informal} to be of general interest to the DP optimization community beyond its applications to DP fair learning, since it is the first DP convergence guarantee for nonconvex min-max optimization.
% \cref{cor: dp fermi conv} follows as a corollary of~\cref{thm: SGDA informal}, since~\cref{eq: FERMI} is equivalent to a smooth nonconvex-strongly concave min-max by \cref{thm: Fermi as minmax} and the assumptions in~\cref{cor: dp fermi conv}. 
We also give a bound on the iteration complexity $T$ in~\cref{app: proof of sgda}. 

The proof of~\cref{thm: SGDA informal} involves a careful analysis of how the Gaussian noises propagate through the optimization trajectories of $\theta_t$ and $w_t$. Compared with DP non-convex \textit{minimization} analyses (e.g. \citet{wang2019efficient, hu21, ding2021, lowy2022private}), the two noises required to privatize the solution of the min-max problem we consider complicates the analysis and requires careful tuning of $\eta_\theta$ and $\eta_W$. Compared to existing analyses of DP min-max games in~\citet{boob2021optimal, yang2022differentially, zhang2022bring}, which assume that $f(\cdot, w)$ is \textit{convex} or \textit{PL}, dealing with \textit{non-convexity} is a challenge that requires different optimization techniques.}
%\vspace{-.05in}

\section{Numerical Experiments}
\label{sec: experiments}
%\vspace{-.1in}
In this section, we evaluate the performance of our proposed approach (DP-FERMI) in terms of the fairness violation vs. test error for different privacy levels. We present our results in two parts: In Section~\ref{ssec:tabdata}, we assess the performance of our method in training logistic regression models on several benchmark tabular datasets. Since this is a standard setup that existing DP fairness algorithms can handle, we are able to compare our method against the state-of-the-art baselines. We carefully tuned the hyperparameters of all baselines for fair comparison. We find that \textit{DP-FERMI consistently outperforms all state-of-the-art baselines across all data sets and all privacy levels}. These observations hold for both demographic parity and equalized odds fairness notions. To quantify the improvement of our results over the state-of-the-art baselines, we calculated the performance gain with respect to fairness violation (for fixed accuracy level) that our model yields over all the datasets. We obtained a performance gain of demographic parity that was 79.648 \% better than  \cite{tran2021differentially} on average, and 65.89\% better on median. The average performance gain of equalized odds was 96.65\% while median percentage gain was 90.02\%.  
%We also demonstrate the flexibility of DP-FERMI to adapt to different notions of fairness (beyond demographic parity) by performing experiments with a (slightly) modified version of DP-FERMI tailored to handle equalized odds. 
In Section~\ref{ssec:largescaledataset}, we showcase the \textit{scalability} of DP-FERMI by using it to train a deep convolutional neural network for classification on a large image dataset. In~\cref{app: experiments}, we give detailed descriptions of the data sets, experimental setups and training procedure, along with additional results. 

\subsection{Standard Benchmark Experiments: Logistic Regression on Tabular Datasets}
\label{ssec:tabdata}
\vspace{-0.2cm}
In the first set of experiments we train a logistic regression model using DP-FERMI (\cref{alg: SGDA for FERMI}) for demographic parity and a modified version of DP-FERMI (described in~\cref{app: experiments}) for equalized odds. We compare DP-FERMI against all applicable publicly available baselines in each expeiment. % that had publicly available code in each experiment. 
%We specify our experimental setup and the methods that we compare against in Sections~\ref{sssec:demopair} and~\ref{sssec:eo}.

\subsubsection{Demographic Parity}
\label{sssec:demopair}
\vspace{-0.2cm}
% For the first part of our results, we consider the original~\cref{eq: FERMI} designed to minimize the Demographic Parity violation between the predictions and the sensitive attributes.
We use four benchmark tabular datasets: Adult Income, Retired Adult, Parkinsons, and Credit-Card dataset from the UCI machine learning repository~(\cite{Dua:2019}). The predicted variables and sensitive attributes are both binary in these datasets. We analyze fairness-accuracy trade-offs with four different values of $\epsilon \in \{0.5, 1, 3, 9\}$ for each dataset. We compare against state-of-the-art algorithms proposed in~\cite{tranfairnesslens} and (the demographic parity objective of) \cite{tran2021differentially}. The results displayed are averages over 15 trials (random seeds) for each value of $\epsilon$. 

For the Adult dataset, the task is to predict \textit{whether the income is greater than \$50K or not} keeping \textit{gender} as the sensitive attribute. The Retired Adult dataset is the same as the Adult dataset, but with updated data. We use the same output and sensitive attributes for both experiments. The results for Adult and Retired Adult are shown in~\cref{fig:adult,fig:retiredadult} (in~\cref{app: tabular}). Compared to~\cite{tranfairnesslens,tran2021differentially}, DP-FERMI offers superior fairness-accuracy tradeoffs at every privacy ($\epsilon$) level. 

\begin{figure}[h]
\vspace{-.6cm}
    \centering
    \subfigure[$\epsilon =$ 0.5]{
        \includegraphics[width=.46\textwidth]{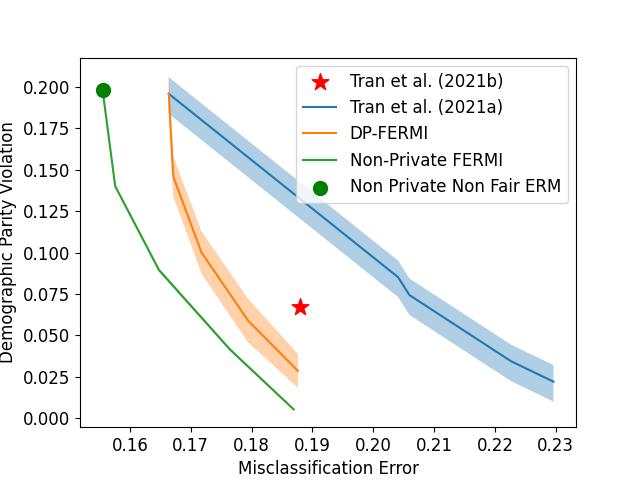}
    }
    \subfigure[$\epsilon =$ 1]{
        \includegraphics[width=.46\textwidth]{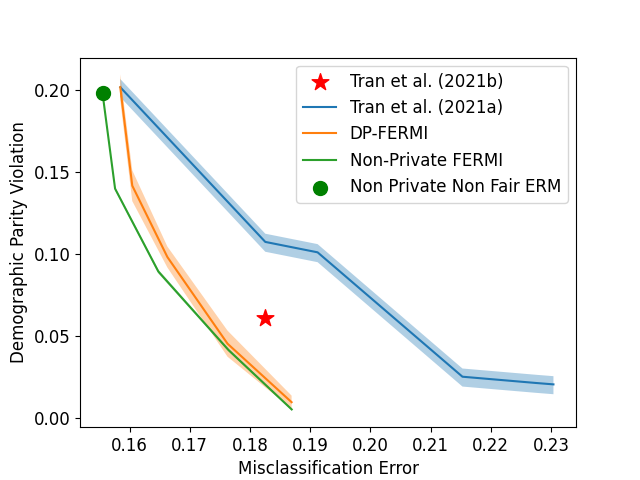}
    }
    \vspace{-.5cm}
    \subfigure[$\epsilon =$ 3]{
        \includegraphics[width=.46\textwidth]{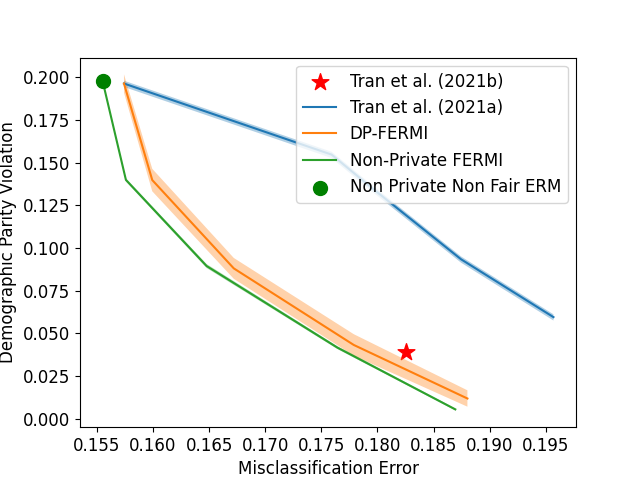}
    }
    \subfigure[$\epsilon =$ 9]{
        \includegraphics[width=.46\textwidth]{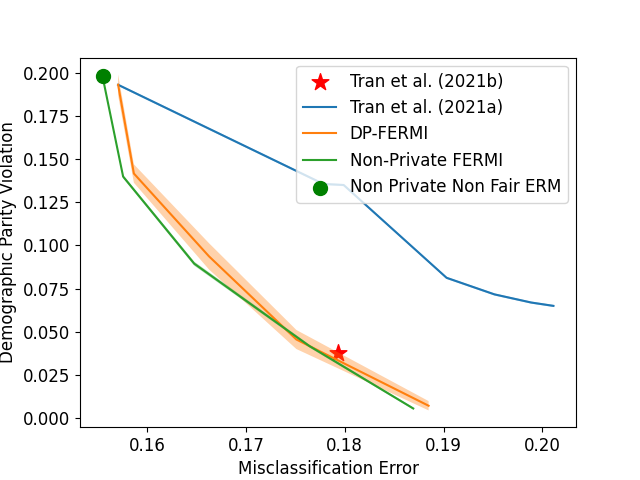}
    }
    % \subfigure[$\epsilon =$ infinity]{
    %     \includegraphics[width=.31\textwidth]{ICLR23/figures/adult_eps_inf_demographic_parity.jpg}
    % }
    \caption{Private, Fair (Demographic Parity) logistic regression on Adult Dataset.}
    \label{fig:adult}
\vspace{-.35cm}
\end{figure}

In the Parkinsons dataset, the task is to predict \textit{whether the total UPDRS score of the patient is greater than the median or not} keeping \textit{gender} as the sensitive attribute. Results for $\epsilon \in \{1, 3\}$ are shown in~\cref{fig:parkinsons}. See~\cref{fig:parkinsons additional} in~\cref{app: experiments} for $\epsilon \in \{0.5, 9\}$. Our algorithm again outperforms the baselines~\cite{tranfairnesslens,tran2021differentially} for all tested privacy levels. 

In the Credit Card dataset , the task is to predict \textit{whether the user will default payment the next month} keeping \textit{gender} as the sensitive attribute. Results are shown in~\cref{fig:creditcard} in~\cref{app: tabular}. Once again, DP-FERMI provides the most favorable privacy-fairness-accuracy profile.

\begin{figure}[h]
\vspace{-.3cm}
    \centering
    % \subfigure[$\epsilon =$ 0.5]{
    %     \includegraphics[width=.46\textwidth]{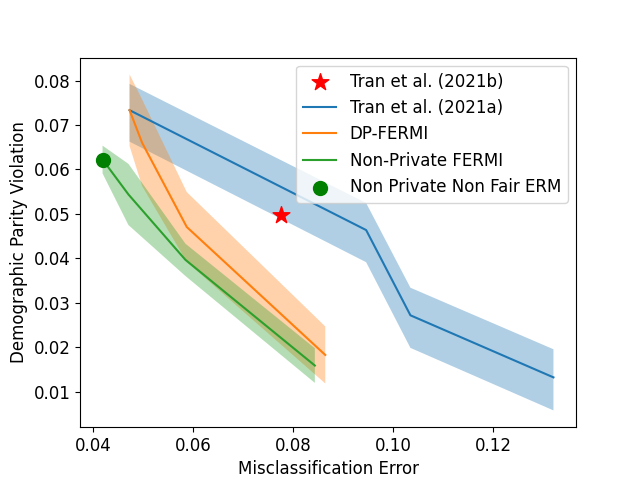}
    % }
    \subfigure[$\epsilon =$ 1]{
        \includegraphics[width=.46\textwidth]{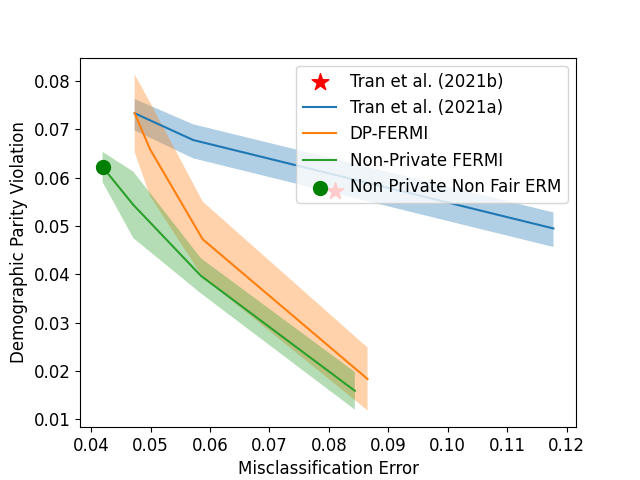}
    }
    \vspace{-.3cm}
    \subfigure[$\epsilon =$ 3]{
        \includegraphics[width=.46\textwidth]{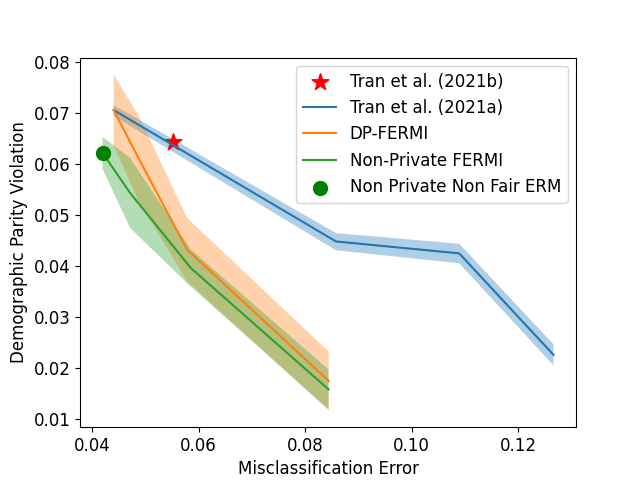}
    }
    % \subfigure[$\epsilon =$ 9]{
    %     \includegraphics[width=.46\textwidth]{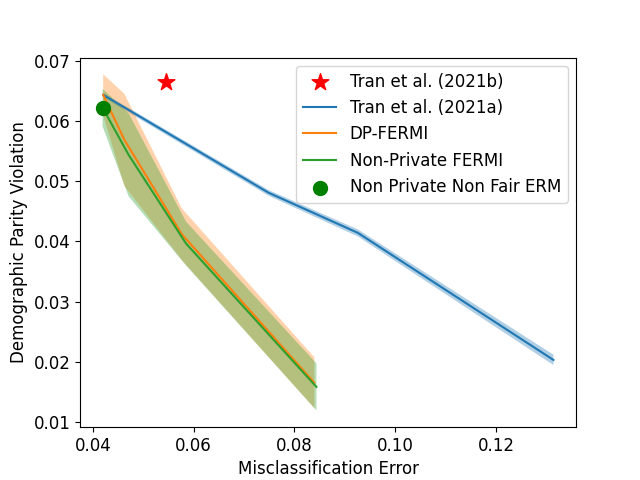}
    % }
    % \subfigure[$\epsilon =$ infinity]{
    %     \includegraphics[width=.31\textwidth]{ICLR23/figures/parkinsons_eps_inf_demographic_parity.jpg}
    % }
    %\vspace{-.3cm}
    \caption{Private, Fair (Demogrpahic Parity) logistic regression on Parkinsons Dataset}
    \label{fig:parkinsons}
    \vspace{-.3cm}
\end{figure}

\subsubsection{Equalized Odds}
\label{sssec:eo}
%\vspace{-0.2cm}
Next, we consider the slightly modified version of~\cref{alg: SGDA for FERMI}, which is designed to minimize the Equalized Odds violation by replacing the absolute probabilities in the objective with class conditional probabilities: see~\cref{app: eq odds fermi} for details.  

We considered the Credit Card and Adult datasets for these experiments, using the same sensitive attributes as mentioned above. Results for Credit Card are shown in~\cref{fig:creditcardeo}. Adult results are given in~\cref{fig:adulteo} in~\cref{app: eq odds fermi}. Compared to~\cite{jagielski2019differentially} and the equalized odds objective in~\cite{tran2021differentially}, our \textit{equalized odds variant of DP-FERMI outperforms these state-of-the-art baselines at every privacy level}.

\begin{figure}[h]
\vspace{-.2cm}
    \centering
    \subfigure[$\epsilon =$ 0.5]{
        \includegraphics[width=.31\textwidth]{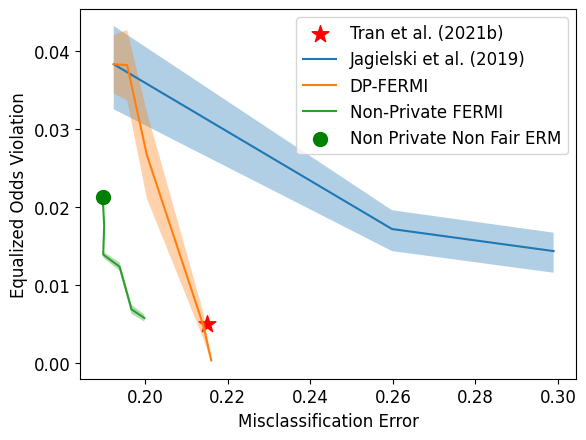}
    }
    \subfigure[$\epsilon =$ 1]{
        \includegraphics[width=.31\textwidth]{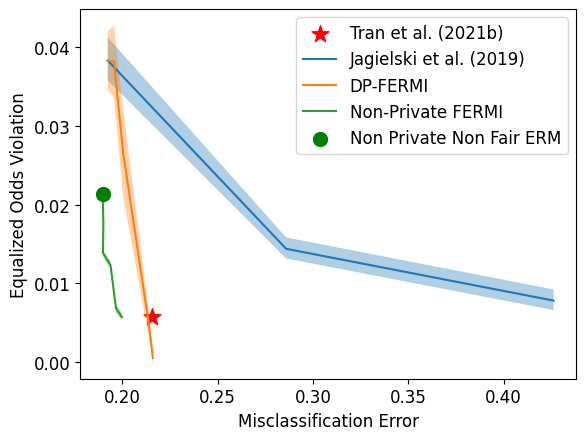}
    }
    \subfigure[$\epsilon =$ 3]{
        \includegraphics[width=.31\textwidth]{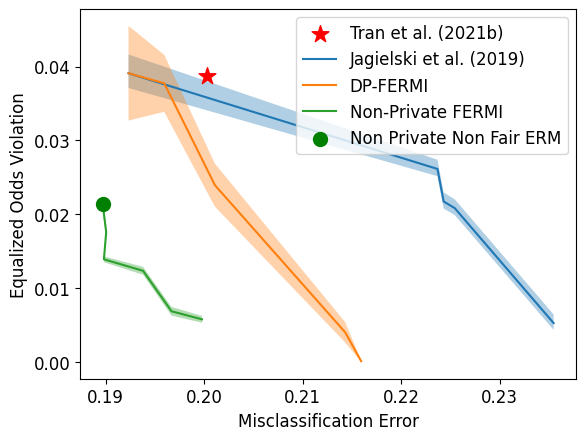}
    }
     \vspace{-.3cm}
    \caption{Private, Fair (Equalized Odds) logistic regression on Credit Card Dataset}
    \label{fig:creditcardeo}
    \vspace{-.2cm}
\end{figure}
\vspace{-0.2cm}

\subsection{Large-Scale Experiment: Deep Convolutional Neural Network on Image Dataset}
\label{ssec:largescaledataset}
%\vspace{-0.2cm}
In our second set of experiments, we train a deep 9-layer VGG-like classifier~\citep{Simonyan15} with $d \approx$ 1.6 million parameters on the UTK-Face dataset~\citep{zhifei2017cvpr} using~\cref{alg: SGDA for FERMI}. 
We classify the facial images into 9 age groups similar to the setup in~\cite{tran2022sf}, while keeping \textit{race} (containing 5 classes) as the sensitive attribute. %Batch size was 64. 
See~\cref{app: image} for more details.% on the experimental setup. 
We analyze consider with four different privacy levels $\epsilon \in \{10, 25, 50, 100\}$. 
Compared to the tabular datasets, larger $\epsilon$ is needed to obtain stable results in the large-scale setting since the number of parameters $d$ is much larger  and the cost of privacy increases with $d$ (see~\cref{cor: dp fermi conv}). Larger values of $\epsilon > 100$ were used in the baseline~\cite{jagielski2019differentially} for smaller scale experiments. %This reflects the inherent challenge of DP fair learning
% \footnote{Even larger values of $\epsilon > 100$ were used in~\cite{jagielski2019differentially}, even for \textit{small-scale} experiments. This reflects the inherent challenge of DP fair learning.} 

The results in~\cref{fig:largescale} empirically verify our main theoretical result: \textit{DP-FERMI converges even for non-binary classification with small batch size and non-binary sensitive attributes.} We took~\cite{tranfairnesslens,tran2021differentially} as our baselines and attempted to adapt them to this non-binary large-scale task. We observed that the baselines were very unstable while training and mostly gave degenerate results (predicting a single output irrespective of the input). By contrast, our method was able to obtain stable and meaningful tradeoff curves. Also, while \cite{tran2022sf} reported results on UTK-Face, their code is not publicly available and we were unable to reproduce their results. 
% Moreover, these experiments also demonstrated the effectiveness of~\cref{eq: FERMI} on low batch size, multi-class outputs, and multi-class sensitive attributes.

\begin{figure}[h]
\vspace{-.3cm}
    \centering
    \subfigure[$\epsilon =$ 10]{
        \includegraphics[width=.43\textwidth]{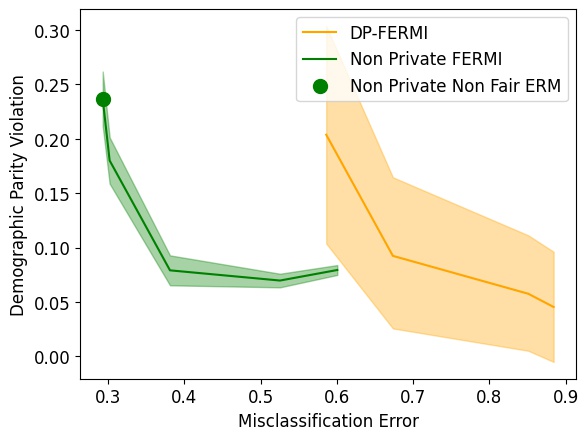}
    }
    \subfigure[$\epsilon =$ 25]{
        \includegraphics[width=.43\textwidth]{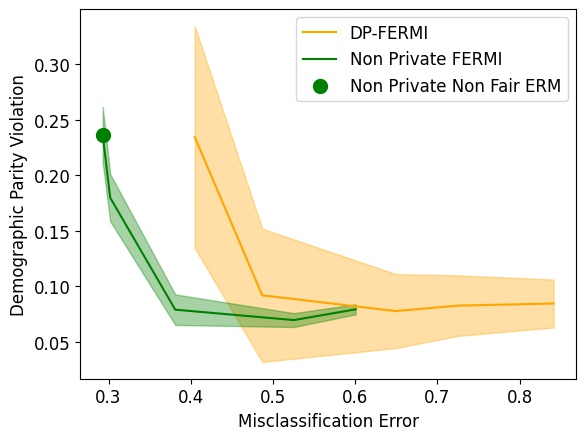}
    }
    \vspace{-.3cm}
    \subfigure[$\epsilon =$ 50]{
        \includegraphics[width=.43\textwidth]{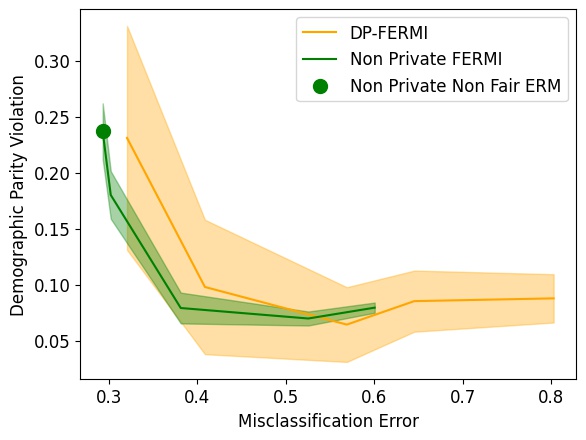}
    }
    \subfigure[$\epsilon =$ 100]{
        \includegraphics[width=.43\textwidth]{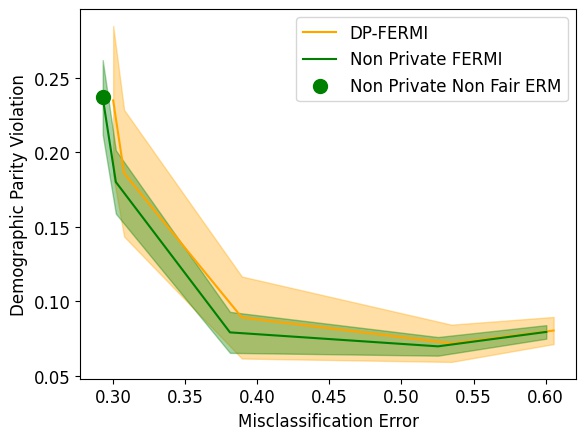}
    }
    % \subfigure[$\epsilon =$ 200]{
    %     \includegraphics[width=.31\textwidth]{ICLR23/figures/large_scale_200.jpg}
    % }
    \caption{DP-FERMI on a Deep CNN for Image Classification on UTK-Face}
    \label{fig:largescale}
    \vspace{-.2cm}
\end{figure}
    \vspace{-.2cm}

%\vspace{-.05in}
\section{Concluding Remarks}
%\vspace{-.1in}
Motivated by pressing legal, ethical, and social considerations, we studied the challenging problem of learning fair models with differentially private demographic data. We observed that existing works suffer from a few crucial limitations that render their approaches impractical for large-scale problems. Specifically, existing approaches require full batches of data in each iteration (and/or exponential runtime) in order to provide convergence/accuracy guarantees. We addressed these limitations by deriving a DP stochastic optimization algorithm for fair learning, and rigorously proved the convergence of the proposed method. 
Our convergence guarantee holds even for non-binary classification (with any hypothesis class, even infinite VC dimension, c.f.~\cite{jagielski2019differentially}) with multiple sensitive attributes and access to random minibatches of data in each iteration. Finally, we evaluated our method in extensive numerical experiments and found that it significantly outperforms the previous state-of-the-art models, in terms of fairness-accuracy tradeoff. 
% Further, our method provided stable results in a larger scale experiment with small batch size and non-binary targets/sensitive attributes. 
The potential societal impacts of our work are discussed in~\cref{app: societal impacts}. 

\subsubsection*{Acknowledgments}
This work was supported in part with funding from the NSF CAREER award 2144985, from the YIP AFOSR award, from a gift from the USC-Meta Center for Research and Education in AI \& Learning, and from a gift from the USC-Amazon Center on Secure \& Trusted Machine Learning.

\clearpage
\bibliography{references.bib}
\bibliographystyle{iclr2023_conference}
\clearpage
\appendix
\section*{Appendix}
\section{Additional Discussion of Related Work}
\label{app: related work}
\noindent \textbf{Differentially Private Fair Learning:}
The study of differentially private fair learning algorithms was initiated by~\citet{jagielski2019differentially}. \citet{jagielski2019differentially} considered equalized odds and proposed two DP algorithms: 1) an $\epsilon$-DP post-processing approach derived from~\citet{hardt2016equality}; and 2) an $(\epsilon, \delta)$-DP in-processing approach based on~\citet{agarwal2018reductions}. The major drawback of their post-processing approach is the unrealistic requirement that the algorithm have access to the sensitive attributes at test time, which~\citet{jagielski2019differentially} admits ``isn't feasible (or legal) in certain applications.'' Additionally, post-processing approaches are known to suffer from inferior fairness-accuracy tradeoffs compared with in-processing methods. While the in-processing method of~\citet{jagielski2019differentially} does not require access to sensitive attributes at test time, it comes with a different set of disadvantages: 1) it is \textit{limited to binary} classification; 2) its theoretical performance guarantees require the use of the \textit{computationally inefficient} (i.e. exponential-time) exponential mechanism~\citep{mcsherry2007mechanism}; 3) its theoretical performance guarantees require computations on the full training set and \textit{do not permit mini-batch implementations}; 4) it requires the hypothesis class $\mathcal{H}$ to have finite VC dimension. 
%Regarding 3) above: Modern machine learning problems often involve models with hundreds of millions or even billions of parameters (e.g. BART \citep{lewis2019bart}, ViT \citep{dosovitskiy2020image}, GPT-2 \citep{radford2019language}). In such cases, during fine-tuning, the available memory on a node constrains us to use small minibatches in each iteration of training (i.e. stochastic optimization). Thus,  
In this work, we propose \textit{the first algorithm that overcomes all of these pitfalls}: our algorithm is amenable to multi-way classification with multiple sensitive attributes, computationally efficient, and comes with convergence guarantees that hold even when mini-batches of $m < n$ samples are used in each iteration of training, and even when $\text{VC}(\mathcal{H}) = \infty$. Furthermore, our framework is flexible enough to accommodate many notions of group fairness besides equalized odds (e.g. demographic parity, accuracy parity). 

Following~\citet{jagielski2019differentially}, several works have proposed other DP fair learning algorithms. \textit{None of these works have managed to simultaneously address all the shortcomings} of the method of~\citet{jagielski2019differentially}. The work of~\citet{xu2019achieving} proposed DP and fair binary logistic regression, but did not provide any theoretical convergence/performance guarantees. The work of~\citet{mozannar2020fair} combined aspects of both~\citet{hardt2016equality} and~\citet{agarwal2018reductions} in a two-step locally differentially private fairness algorithm. Their approach is \textit{limited to binary classification}.
%with equalized odds\footnote{However, most of their analysis can be extended to other fairness notions.}
Moreover, their algorithm requires $n/2$ samples in each iteration (of their in-processing step), making it \textit{impractical for large-scale problems}. More recently,~\citet{tran2021differentially} devised another DP in-processing method based on lagrange duality, which covers non-binary classification problems. In a subsequent work,~\citet{tranfairnesslens} studied the effect of DP on accuracy parity in ERM, and proposed using a regularizer to promote fairness. Finally, ~\citet{tran2022sf} provided a semi-supervised fair ``Private Aggregation of Teacher Ensembles'' framework. A shortcoming of each of these three most recent works is their \textit{lack of theoretical convergence or accuracy guarantees}. In another vein, some works have observed the disparate impact of privacy constraints on demographic subgroups~\citep{bagdasaryan2019differential, tranijcai}. 

\noindent \textbf{(Private) Min-Max Optimization:} Non-privately, smooth nonconvex-concave min-max optimization has been studied in, e.g.~\cite{nouiehed2019solving, kong2019accelerated, lin2020gradient, olr20}, with state-of-the-art convergence rates (under the strongest notion of stationarity) due to~\cite{olr20}. With DP constraints, min-max optimization had only been studied in the convex-concave and PL-concave settings prior to the current work~\cite{boob2021optimal,zhang2022bring,yang2022differentially}. 

\noindent \textbf{Private ERM and Stochastic Optimization:} In the absence of fairness considerations, DP optimization and ERM has been studied extensively. Most works have focused on the convex or PL settings; see, e.g.~\cite{bst14, bft19, asil1geo, lowy2021, lowy2022outliers} and the references therein. The works of~\cite{zhang2017efficient, wang2017ermrevisited, wang2019efficient, arora2022faster} have considered non-convex (non-PL) loss functions.

\section{Equalized Odds Version of ERMI}
\label{app: eq odds ermi}
If equalized odds~\citep{hardt16} is the desired fairness notion, then one should use the following variation of ERMI as a regularizer~\citet{fermi}: 
\begin{align}
    \widehat{D}_R(\hy; S |Y) &:= \mathbb{E}\left\{\frac{\hat{p}_{\hy, S|Y}(\hy, S|Y) }{\hat{p}_{\hy|Y}(\hy|Y) \hat{p}_{S|Y}(S|Y)}  \right\} - 1\nonumber \\
 &= \sum_{y = 1}^l \sum_{j=1}^l \sum_{r=1}^k \frac{\hat{p}_{\hy, S |Y} (j, r | y)^2}{\hat{p}_{\hy|Y}(j|y) \hat{p}_{S|Y}(r|y)}\hat{p}_{Y}(y)  - 1.
 \end{align}
 Here $\hat{p}_{\hy, S |Y}$ denotes the empirical joint distribution of the predictions and sensitive attributes $(\hy, S)$ conditional on the true labels $Y$. In particular, if $D_R(\hy; S |Y) = 0$, then $\hy$ and $S$ are conditionally independent given $Y$ (i.e. equalized odds is satisfied).

\edit{
\section{Complete Version of~\cref{thm: informal Fermi as minmax}}
\label{app: formal minmax}

Let $\widehat{\mathbf{y}}(x_i; \theta)   \in \{0,1\}^l$ and $\mathbf{s}_i \in\{0,1\}^k$ be the one-hot encodings of $\widehat{y}(x_i, \theta)$ and $s_i$, respectively: i.e., $\widehat{\mathbf{y}}_j(x_i; \theta) = \mathbbm{1}_{\{\widehat{y}(x_i, \theta) = j\}}$ and $\mathbf{s}_{i,r} = \mathbbm{1}_{\{s_i = r\}}$  for $j \in [l], r \in [k]$. Also, denote $\widehat{P}_{s} = {\rm diag}({\widehat{p}}_{S}(1), \ldots, \widehat{p}_{S}(k))$, where $\widehat{p}_{S}(r) := \frac{1}{n}\sum_{i=1}^n \mathbbm{1}_{\{s_i = r \}} \geq \rho > 0$ is the empirical probability of attribute $r$ ($r \in [k]$).
Then we have the following re-formulation of~\cref{eq: FERMI} as a min-max problem:
%which was proved in~\citet{fermi}:
\begin{theorem}[\citet{fermi}]
\label{thm: Fermi as minmax}
\cref{eq: FERMI} is equivalent to
\small
\begin{align}
% \label{eq: empirical minmax}
\small
\min_{\theta} \max_{W \in \mathbb{R}^{k \times l}} 
% \left
\left\{
\widehat{F}(\theta, W) := 
\widehat{\mathcal{L}}(\theta) + \lambda \frac{1}{n}\sum_{i=1}^n \widehat{\psi}_i(\theta, W)
\right\},
% \right
\end{align}
\normalsize
where 
% %\begin{align*}
% ~$\small
%   \widehat{\psi}_i(\theta, W) :=  -\Tr(W \expec[\widehat{\by}_i(x_i; \theta) \widehat{\by}_i^T(x_i; \theta) | x_i]  W^T) 
%   %\\
%   %&+
%   +2 \Tr(W \expec[\widehat{\by}_i(x_i; \theta)\bs_i^T |x_i, s_i]  \widehat{P}_{s}^{-1/2}) - 1$. \normalsize
% %\end{align*}
% Here $\widehat{P}_{\haty}(\theta) := {\rm diag}(\frac{1}{n}\sum_{i=1}^n \FF_1(x_i, \theta), \ldots, \frac{1}{n}\sum_{i=1}^n \FF_l(x_i, \theta))$ is an $l \times l$ matrix and $\widehat{P}_{\haty, s}(\theta)$ is $l \times k$ with $(\widehat{P}_{\haty, s}(\theta))_{j, r} = \frac{1}{n} \sum_{i=1}^n \expec[\widehat{\by}_j(x_i; \theta)| x_i] \bs_{i,r} = \frac{1}{n} \sum_{i=1}^n \FF_j(x_i, \theta) \bs_{i,r}$ for $j \in [l], r \in [k]$.
\begin{align*}
\small
  \widehat{\psi}_i(\theta, W) &:=  -\Tr(W \expec[\widehat{\by}(x_i, \theta) \widehat{\by}(x_i, \theta)^T | x_i] 
  W^T) 
  %\\
  %&
  \\
  &\;\;\;\; + 2 \Tr(W \expec[\widehat{\by}(x_i; \theta) \bs_i^T | x_i, \bs_i] \widehat{P}_{s}^{-1/2}) - 1,  
  %\\
%   &= -\Tr(W {\rm diag}(\FF_1(\theta, x_i), \ldots, \FF_l(\theta, x_i))
%   W^T) 
%   %\\
%   %&
%   \\
%   &\;\;\;\;+ 2 \Tr(W 
%   \expec[\widehat{\by}(x_i; \theta) \bs_i^T | x_i, \bs_i]
%   \widehat{P}_{s}^{-1/2}) - 1,
\end{align*}
$\expec[\widehat{\by}(x_i; \theta) \widehat{\by}(x_i; \theta)^T | x_i] = {\rm diag}(\FF_1(x_i, \theta), \ldots, \FF_l(x_i, \theta))$,
and $\expec[\widehat{\by}(x_i; \theta) \bs_i^T | x_i, \bs_i]$ is a $k \times l$ matrix with $\expec[\widehat{\by}(x_i; \theta) \bs_i^T | x_i, \bs_i]_{r,j} = \bs_{i, r} \FF_j(x_i, \theta)$.
%Thus, \[\widehat{\Psi}(\theta, W) = -\Tr(W \widehat{P}_{\haty}(\theta) W^T) + 2\Tr(W \widehat{P}_{\haty, s}(\theta) \widehat{P}_{s}^{-1/2}) - 1,\]
%where $\widehat{P}_{\haty}(\theta) := {\rm diag}(\frac{1}{n}\sum_{i=1}^n \FF_1(x_i, \theta), \ldots, \frac{1}{n}\sum_{i=1}^n \FF_l(x_i, \theta))$ is an $l \times l$ diagonal matrix and $\widehat{P}_{\haty, s}(\theta)$ is an $l \times k$ matrix with $(\widehat{P}_{\haty, s}(\theta))_{j, r} = \frac{1}{n} \sum_{i=1}^n \expec[\widehat{\by}_j(x_i; \theta)| x_i] \bs_{i,r} = \frac{1}{n} \sum_{i=1}^n \FF_j(x_i, \theta) \bs_{i,r}$ for $j \in [l], r \in [k]$. 
\end{theorem}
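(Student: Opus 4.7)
The plan is to show that the inner maximization over $W$ in the min-max problem recovers the ERMI regularizer, so that $\max_W \widehat{F}(\theta, W) = \text{FERMI}(\theta)$ for every $\theta$, from which the equivalence of the two problems follows. The strong concavity of $\widehat{\psi}_i(\theta,\cdot)$ will drop out of the quadratic structure in $W$.

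First, I would verify that $\widehat{\psi}_i(\theta, W)$ is a concave quadratic in $W$: the first trace is a negative-semidefinite quadratic form since $\expec[\widehat{\by}(x_i;\theta)\widehat{\by}(x_i;\theta)^T\mid x_i] = \diag(\FF(x_i,\theta)) \succeq 0$, the second trace is linear in $W$, and the $-1$ is constant. Averaging over $i$ and using $\frac{1}{n}\sum_i \FF_j(x_i,\theta) = \hat{p}_{\hy}(j)$ and $\frac{1}{n}\sum_i \bs_{i,r}\FF_j(x_i,\theta) = \hat{p}_{\hy,S}(j,r)$, the sum telescopes into the compact form $\frac{1}{n}\sum_i \widehat{\psi}_i(\theta,W) = -\Tr(W \widehat{P}_{\hy} W^T) + 2\Tr(W \overline{B}\, \widehat{P}_s^{-1/2}) - 1$, where $\widehat{P}_{\hy} := \diag(\hat{p}_{\hy}(1),\ldots,\hat{p}_{\hy}(l))$ and $\overline{B}$ is the matrix of joint probabilities. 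Under the assumption $\hat{p}_S(r)\geq \rho > 0$, and assuming $\widehat{P}_{\hy}\succ 0$ (otherwise restrict $W$ to the support of $\widehat{P}_\hy$), the Hessian in $W$ is $-2\widehat{P}_{\hy}$ up to tensor structure, giving strong concavity with modulus at least $2\min_j \hat{p}_{\hy}(j)$.

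Second, I would invoke the identity $\max_W\{-\Tr(WAW^T) + 2\Tr(WC)\} = \Tr(C^T A^{-1}C)$ valid for symmetric $A\succ 0$, attained at $W^*(\theta) = C^T A^{-1}$ with $A = \widehat{P}_{\hy}$, $C = \overline{B}\,\widehat{P}_s^{-1/2}$. Substituting the maximizer yields $\max_W \frac{1}{n}\sum_i \widehat{\psi}_i(\theta,W) = \Tr\bigl(\widehat{P}_s^{-1/2}\,\overline{B}^{\,T}\,\widehat{P}_{\hy}^{-1}\,\overline{B}\,\widehat{P}_s^{-1/2}\bigr) - 1$. Exploiting the diagonal structure of $\widehat{P}_s$ and $\widehat{P}_{\hy}$, the $(r,r)$ diagonal entry of the inner product is $\frac{1}{\hat{p}_S(r)}\sum_j \frac{\hat{p}_{\hy,S}(j,r)^2}{\hat{p}_{\hy}(j)}$, so the trace collapses to $\sum_{j,r}\frac{\hat{p}_{\hy,S}(j,r)^2}{\hat{p}_{\hy}(j)\hat{p}_S(r)} - 1 = \widehat{D}_R(\hy,S)$ by~\cref{def: ERMI}. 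Adding back $\widehat{\mathcal{L}}(\theta)$ gives $\max_W \widehat{F}(\theta,W) = \text{FERMI}(\theta)$, and taking $\min_\theta$ on both sides yields the equivalence of the two optimization problems.

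The main obstacle I anticipate is the bookkeeping in the third step: one must reconcile the $k\times l$ conventions for $W$ and $\overline{B}$ with the $l\times l$ matrix $\widehat{P}_{\hy}$ and $k\times k$ matrix $\widehat{P}_s$, keeping transposes straight so that the closed-form maximizer $W^*(\theta)$ is dimensionally consistent and the final trace correctly reproduces the ERMI sum. A secondary subtlety is the degenerate case where $\hat{p}_{\hy}(j) = 0$ for some $j$, which should be handled either by restricting $W$ to the row support of $\widehat{P}_{\hy}$ (those $j$ contribute nothing to $\widehat{D}_R$ since $\hat{p}_{\hy,S}(j,r) = 0$ whenever $\hat{p}_\hy(j)=0$) or via a standard perturbation/limit argument, after which the strong concavity and the equivalence extend by continuity.
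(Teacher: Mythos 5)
Your proposal is correct and follows essentially the same route as the source: the paper itself states this result as a citation to \citet{fermi} without reproducing a proof, and the argument there is exactly your completion-of-the-square computation, namely that $\max_W\{-\Tr(WAW^T)+2\Tr(WC)\}=\Tr(C^TA^{-1}C)$ applied to the averaged $\widehat{\psi}_i$ recovers $\widehat{D}_R(\hy,S)$. The only minor point worth noting is that strong concavity of each individual $\widehat{\psi}_i(\theta,\cdot)$ (as opposed to the average) requires $\FF_j(x_i,\theta)>0$ for all $j$, which holds for softmax-type outputs; this is not part of the formal statement here, so your treatment is adequate.
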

Strong concavity of $\widehat{\psi}_i$ is shown in~\cite{fermi}.
}

\section{
%Proof of~\cref{prop: Fermi is dp}
\edit{DP-FERMI Algorithm: Privacy}
}
\label{app: fermi is dp}
\edit{
We begin with a routine calculation of the derivatives of $\widehat{\psi}_i$, which follows by elementary matrix calculus:
\begin{lemma}
\label{lem: derivatives} 
Let $
\widehat{\psi}_i(\theta, W) =  -\Tr(W \expec[\widehat{\by}(x_i, \theta) \widehat{\by}(x_i, \theta)^T | x_i] 
  W^T) 
  %\\
  %&
%   \\
%   &\;\;\;\; 
  + 2 \Tr(W \expec[\widehat{\by}(x_i; \theta) \bs_i^T | x_i, \bs_i] \widehat{P}_{s}^{-1/2}) - 1,  
  %\\
%   &= -\Tr(W {\rm diag}(\FF_1(\theta, x_i), \ldots, \FF_l(\theta, x_i))
%   W^T) 
%   %\\
%   %&
%   \\
%   &\;\;\;\;+ 2 \Tr(W 
%   \expec[\widehat{\by}(x_i; \theta) \bs_i^T | x_i, \bs_i]
%   \widehat{P}_{s}^{-1/2}) - 1,
$
where
$\expec[\widehat{\by}(x_i; \theta) \widehat{\by}(x_i; \theta)^T | x_i] = {\rm diag}(\FF_1(x_i, \theta), \ldots, \FF_l(x_i, \theta))$ 
and $\expec[\widehat{\by}(x_i; \theta) \bs_i^T | x_i, \bs_i]$ is a $k \times l$ matrix with $\expec[\widehat{\by}(x_i; \theta) \bs_i^T | x_i, \bs_i]_{r,j} = \bs_{i, r} \FF_j(x_i, \theta)$. Then,
\begin{align*}
%\lambda 
\nabla_{\theta} \widehat{\psi}_i(\theta, W) &= -\nabla_{\theta} \vect(
\pyx)^T \vect(W^T W) 
% \\
% &\;\;\;
+ 
2 \nabla_{\theta} \vect(\pysxt) \vect\left(W^T \left(\widehat{P}_{S}\right)^{-1/2}\right)
\end{align*}
and 
\[
\nabla_w \widehat{\psi}_i(\theta, W) = -2 W \expec[\widehat{\by}(x_i, \theta)\widehat{\by}(x_i, \theta)^T | x_i] + 2 \widehat{P}_S^{-1/2} \expec[\bs_i \widehat{\by}(x_i, \theta)^T | x_i, s_i].
\]
\end{lemma}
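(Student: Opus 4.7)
The lemma is a direct matrix-calculus computation: $\widehat{\psi}_i$ is the sum of two trace expressions plus a constant $-1$, so the gradients in $W$ and $\theta$ can be obtained by differentiating each trace separately and invoking the chain rule. There are no analytic subtleties, since $W$ appears polynomially and $\theta$ enters smoothly through $\FF(x_i, \theta)$.

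For $\nabla_W$, I hold $\theta$ fixed and regard $A := \expec[\widehat{\by}(x_i,\theta)\widehat{\by}(x_i,\theta)^T \mid x_i]$ and $B := \expec[\widehat{\by}(x_i,\theta)\bs_i^T \mid x_i, \bs_i]$ as constants. I apply the standard identities $\nabla_W \Tr(W A W^T) = W(A + A^T)$ and $\nabla_W \Tr(W C) = C^T$. Because $A$ is diagonal, $A = A^T$, so $W(A + A^T) = 2 W A$, contributing $-2 W \expec[\widehat{\by}(x_i, \theta) \widehat{\by}(x_i, \theta)^T \mid x_i]$ to the gradient. For the second trace I use cyclicity, $\Tr(W B \widehat{P}_S^{-1/2}) = \Tr(\widehat{P}_S^{-1/2} W B)$, so $W$ appears linearly between two fixed matrices; differentiating gives $(B)^T (\widehat{P}_S^{-1/2})^T = \widehat{P}_S^{-1/2} B^T$, where the transpose on $\widehat{P}_S^{-1/2}$ drops because the matrix is diagonal. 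Identifying $B^T = \expec[\bs_i \widehat{\by}(x_i, \theta)^T \mid x_i, s_i]$ yields the claimed formula for $\nabla_w \widehat{\psi}_i$.

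For $\nabla_\theta$, I fix $W$ and note that $\theta$ enters only through $A(\theta)$ and $B(\theta)$. The strategy is to vectorize each trace so that the $\theta$-dependent factor appears in a single $\vect(\cdot)$. Using cyclicity, $\Tr(W A W^T) = \Tr(A W^T W)$; since $A$ is symmetric, the identity $\Tr(X Y) = \vect(X^T)^T \vect(Y)$ specializes to $\Tr(A W^T W) = \vect(A)^T \vect(W^T W)$. Applying the chain rule with $W^T W$ constant in $\theta$ gives $\nabla_\theta \Tr(W A W^T) = \nabla_\theta \vect(A(\theta))^T \vect(W^T W)$, which after the leading minus sign matches the first summand in the stated expression. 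A parallel manipulation handles the second trace: rewrite it so that $B(\theta)$ appears in a single vectorization paired against $\vect\bigl(W^T (\widehat{P}_S)^{-1/2}\bigr)$, then apply the chain rule to obtain the second summand.

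\emph{Main obstacle.} There is no deep obstacle; the only care required is bookkeeping across the rectangular dimensions $W \in \mathbb{R}^{k \times l}$, $A \in \mathbb{R}^{l \times l}$, $B \in \mathbb{R}^{l \times k}$, and $\widehat{P}_S \in \mathbb{R}^{k \times k}$, keeping cyclic reorderings, transposes, and $\vect$ conventions consistent. The symmetry of $A$ and the diagonality of $\widehat{P}_S^{-1/2}$ are what allow the final expressions to be written without extra transposes.
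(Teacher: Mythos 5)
Your proposal is correct and matches the paper's approach exactly: the paper simply asserts that the lemma ``follows by elementary matrix calculus,'' and your computation supplies precisely the routine trace-differentiation and vectorization identities that justify it (including the correct dimensional reading $B \in \mathbb{R}^{l\times k}$, which the paper's own statement garbles). One cosmetic slip: in the $W$-gradient of the second trace you write the intermediate product as $(B)^T(\widehat{P}_S^{-1/2})^T$, which does not conform dimensionally; the standard identity $\nabla_W \Tr(AWB) = A^T B^T$ gives $(\widehat{P}_S^{-1/2})^T B^T = \widehat{P}_S^{-1/2}B^T$ directly, which is the (correct) expression you land on anyway.
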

}

Using~\cref{lem: derivatives}, we can prove that~\cref{alg: SGDA for FERMI} is DP:
\begin{theorem}[Re-statement of~\cref{prop: Fermi is dp}]
Let $\epsilon \leq 2\ln(1/\delta)$, $\delta \in (0,1)$, and $T \geq \left(n \frac{\sqrt{\epsilon}}{2 m}\right)^2$. Assume $\FF(\cdot, x)$ is $L_\theta$-Lipschitz for all $x$, and $|(W_t)_{r,j}| \leq D$ for all $t \in [T], r \in [k], j \in [l]$.  Then, for 
% $\sigma_w^2 \geq \frac{32 \lambda^2 l k^2 T \ln(1/\delta)}{\epsilon^2 n^2 \rho}$
$\sigma_w^2 \geq \frac{16 T \ln(1/\delta)}{\epsilon^2 n^2 \rho}$
and $\sigma_\theta^2 \geq \frac{16 L_\theta^2 D^2 \ln(1/\delta) T}{\epsilon^2 n^2 \rho}
$, \cref{alg: SGDA for FERMI} is $(\epsilon, \delta)$-DP with respect to the sensitive attributes for all data sets containing at least $\rho$-fraction of minority attributes. \edit{Further, if $\sigma_w^2 \geq \frac{32 T \ln(1/\delta)}{\epsilon^2 n^2}\left(\frac{1}{\rho} + D^2\right)$ and $\sigma_\theta^2 \geq \frac{64 L_\theta^2 D^2 \ln(1/\delta) T}{\epsilon^2 n^2 \rho} + \frac{32 D^4 L_{\theta}^2 l^2 T \ln(1/\delta)}{\epsilon^2 n^2}$, then \cref{alg: SGDA for FERMI} is $(\epsilon, \delta)$-DP (with respect to all features) for all data sets containing at least $\rho$-fraction of minority attributes.} 
\end{theorem}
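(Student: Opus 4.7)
The plan is to follow the standard recipe for analyzing a differentially private variant of SGD: bound the $\ell_2$-sensitivity of each update direction with respect to a single change in the sensitive data, apply the (subsampled) Gaussian mechanism per iteration, and compose over the $T$ iterations via the moments accountant. Using the explicit derivative formulas in \cref{lem: derivatives}, the $W$-gradient $\nabla_w\widehat{\psi}_i(\theta,W) = -2W\,\mathbb{E}[\widehat{\by}\widehat{\by}^T \mid x_i] + 2\widehat{P}_S^{-1/2}\mathbb{E}[\bs_i\widehat{\by}^T \mid x_i,s_i]$ depends on $s_j$ in two distinct ways when we switch to an adjacent dataset: \emph{directly}, through the one-hot $\bs_i$ when $j=i$, and \emph{globally}, through the precomputed diagonal matrix $\widehat{P}_S^{-1/2}$ (whose entries are $1/\sqrt{\widehat{p}_S(r)}$), which enters every summand. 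I would bound the direct contribution by $O(1/\sqrt{\rho})$ using $\|\widehat{P}_S^{-1/2}\| \leq 1/\sqrt{\rho}$ and the one-hot normalization of $\bs_i$, and bound the global propagation by $O(1/(n\rho^{3/2}))$ per summand via a first-order expansion of $1/\sqrt{x}$ at $\widehat{p}_S(r) \geq \rho$. An analogous calculation bounds the $\theta$-gradient sensitivity by $O(L_\theta D/\sqrt{\rho})$, using the Lipschitzness of $\FF(\cdot,x)$ and the assumed bound $|(W_t)_{r,j}| \leq D$.

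With per-sample sensitivities $c_w = O(1/\sqrt{\rho})$ and $c_\theta = O(L_\theta D/\sqrt{\rho})$, each iteration becomes a subsampled Gaussian mechanism with rate $q = m/n$ and noise standard deviations $m\sigma_w, m\sigma_\theta$ on the \emph{sum} of per-sample gradients (equivalent to the stated algorithm, which injects $V_t,u_t$ after averaging). The hypothesis $T \geq (n\sqrt{\epsilon}/(2m))^2$, i.e.\ $q\sqrt{T} \geq \sqrt{\epsilon}/2$, places us in the regime where the moments-accountant bound yields overall privacy $\epsilon = O(qc\sqrt{T\ln(1/\delta)}/(m\sigma))$; solving for $\sigma_w,\sigma_\theta$ recovers the advertised lower bounds. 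Parallel composition of the two independent Gaussian mechanisms within each iteration handles the joint $(\theta,W)$ update, and the assumption $\epsilon \leq 2\ln(1/\delta)$ keeps us in the regime where the RDP-to-$(\epsilon,\delta)$ conversion of the moments accountant is tight. For the stronger guarantee of DP with respect to \emph{all} features, I would rerun the sensitivity analysis allowing $x_i$ (hence each $\FF(x_i,\theta)$) to change as well; the extra perturbation propagates through both the quadratic-in-$W$ term in $\nabla_w\widehat{\psi}_i$ and through the loss gradient $\nabla_\theta\ell(x_i,y_i;\theta_t)$, yielding the additional $D^2$ and $D^4 L_\theta^2 l^2$ contributions in the inflated noise requirements.

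The principal obstacle is the nonstandard global dependence on $\widehat{P}_S^{-1/2}$: unlike textbook DP-SGD, where an adjacent dataset perturbs exactly one gradient in the batch, here every summand is perturbed (through $\widehat{P}_S^{-1/2}$) whenever any $s_j$ changes, so the sensitivity of the averaged gradient does not automatically collapse to an $O(1/m)$ bound. The $\rho$-diversity hypothesis is precisely what makes this global perturbation $O(1/(n\rho^{3/2}))$ per summand and therefore subdominant to the $O(1/\sqrt{\rho})$ direct contribution under subsampling; this is also the step that introduces the $1/\rho$ factor in the final noise variance. Careful bookkeeping to verify that the direct contribution dominates is the main technical care required beyond standard DP-SGD analysis.
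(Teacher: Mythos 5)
Your proposal follows essentially the same route as the paper: use the explicit derivative formulas to bound the per-sample $\ell_2$-sensitivity of the $\theta$- and $W$-gradient updates (getting $O(L_\theta D/\sqrt{\rho})$ and $O(1/\sqrt{\rho})$ respectively, with the extra $D^2$ and $D^4L_\theta^2 l^2$ terms when all features may change), then compose the subsampled Gaussian mechanisms over $T$ iterations via the moments accountant, with $T \geq (n\sqrt{\epsilon}/2m)^2$ and $\epsilon \leq 2\ln(1/\delta)$ placing the analysis in the regime where that accountant applies. One point where you are actually \emph{more} careful than the paper: you explicitly track the global perturbation of every summand through the precomputed $\widehat{P}_S^{-1/2}$ when a single $s_i$ changes, bounding it by $O(1/(n\rho^{3/2}))$ per term; the paper's sensitivity calculation writes the difference as if only the differing sample's summand changes, silently absorbing this effect. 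Two small cautions: (i) ``parallel composition'' is the wrong tool for the joint $(\theta_t, W_t)$ release, since both updates are computed from the \emph{same} minibatch --- you need sequential composition or a joint moments-accountant bound over the $2T$ Gaussian mechanisms (the paper is equally terse here, simply applying the accountant ``to both the min and max updates''); (ii) for the all-features guarantee, note that in Algorithm 1 the noise $u_t$ multiplies only the $\lambda\nabla_\theta\widehat{\psi}_i$ part, so if $\nabla_\theta\ell(x_i,y_i;\theta)$ is to be protected its sensitivity must also be accounted for --- your sketch mentions this propagation, which the paper's own sensitivity computation for $\tilde{\Delta}_\theta$ omits. Neither issue changes the overall architecture of the argument, which matches the paper's.
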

\begin{proof}
%[Proof of~\cref{prop: Fermi is dp}]
\edit{First consider the case in which only the sensitive attributes are private.} By the moments accountant~Theorem 1 in \citet{abadi16}, it suffices to bound the sensitivity of the gradient updates by $\Delta_\theta^2 \leq \frac{8 D^2 L_\theta^2}{m^2 \rho}$ and $\Delta_w^2 \leq \frac{8}{m^2 \rho}$. Here \[
\Delta_{\theta}^2 = \sup_{Z \sim Z', \theta, W} \left\|\frac{1}{m}\sum_{i \in B_t}  \left[\nabla_{\theta} \widehat{\psi}(\theta, W; z_i) -  \nabla_{\theta} \widehat{\psi}(\theta, W; z'_i)\right]\right\|^2
\]
and $Z \sim Z'$ means that $Z$ and $Z'$ are two data sets (both with $\rho$-fraction of minority attributes) that differ in exactly one person's sensitive attributes: i.e. $s_i \neq s_i'$ for some unique $i \in [n]$, but $z_j = z'_j$ for all $j \neq i$ and $(x_i, y_i) = (x'_i, y'_i)$. Likewise, \[
\Delta_w^2 = \sup_{Z \sim Z', \theta, W}\left\|\frac{1}{m}\sum_{i \in B_t}  \left[\nabla_{w} \widehat{\psi}(\theta, W; z_i) -  \nabla_w \widehat{\psi}(\theta, W; z'_i)\right]\right\|^2.
\]
Now, by~\cref{lem: derivatives}, 
\begin{align*}
%\lambda 
\nabla_{\theta} \widehat{\psi}_i(\theta, W) &= -\nabla_{\theta} \vect(
\pyx)^T \vect(W^T W) \\
&\;\;\;+ 2 \nabla_{\theta} \vect(\pysxt) \vect\left(W^T \left(\widehat{P}_{S}\right)^{-1/2}\right),
\end{align*}
and notice that only the second term depends on $S$. Therefore, we can bound the $\ell_2$-sensitivity of the $\theta$-gradient updates by: \begin{align*}
    \Delta_{\theta}^2 &= \sup_{Z \sim Z', W, \theta}\Bigg\|\frac{1}{m} \sum_{i=1}^m 2 \nabla_{\theta} \vect(\pysxt) \vect\left(W^T \left(\widehat{P}_{S}\right)^{-1/2}\right) \\
    &\;\;\;\; - 2 \nabla_{\theta} \vect(\pysxtprime) \vect\left(W^T \left(\widehat{P}_{S'}\right)^{-1/2}\right) \Bigg\|^2 \\
    &\leq \frac{4}{m^2} \sup_{x, \bs_i, \bs_i', W, \theta}\left[\sum_{r=1}^k \sum_{j=1}^l \|\nabla_{\theta} \FF_j(\theta, x)\|^2 W_{r,j}^2 \left(\frac{s_{i,r}}{\sqrt{\widehat{P}_{S}(r)}} - \frac{s'_{i,r}}{\sqrt{\widehat{P}_{S'}(r)}}\right)^2 \right] \\
    &\leq \frac{8}{\rho m^2} \sup_{x, W, \theta} \left(\sum_{j=1}^l \|\nabla_{\theta} \FF_j(\theta, x)\|^2 W_{r,j}^2 \right) \\
    &\leq \frac{8 D^2 L_{\theta}^2}{\rho m^2},
\end{align*}
using Lipschitz continuity of $\FF(\cdot, x)$, the assumption that $\WW$ has diameter bounded by $D$, the assumption that the data sets have at least $\rho$-fraction of sensitive attribute $r$ for all $r \in [k]$.
Similarly, for the $W$-gradients, we have \[
\nabla_w \widehat{\psi}_i(\theta, W) = -2 W \expec[\widehat{\by}(x_i, \theta)\widehat{\by}(x_i, \theta)^T | x_i] + 2 \widehat{P}_S^{-1/2} \expec[\bs_i \widehat{\by}(x_i, \theta)^T | x_i, s_i]
\]
by~\cref{lem: derivatives}. Hence
\begin{align*}
\Delta_W^2 &= \sup_{\theta, W, \bs_i, \bs_i'} \frac{4}{m^2}\Bigg\|- W {\rm diag}(\FF_1(\theta, x_i), \ldots, \FF_l(\theta, x_i)) +  \widehat{P}_{S}^{-1/2} \expec[{\bs}_i\widehat{\by}_i(x_i; \theta_t)^T|x_i, s_i] \\
&\;\;\;\; + W {\rm diag}(\FF_1(\theta, x_i), \ldots, \FF_l(\theta, x_i)) -  \widehat{P}_{S'}^{-1/2} \expec[{\bs'}_i\widehat{\by}_i(x_i; \theta_t)^T|x_i, s'_i] \Bigg\|^2 \\
&\leq \frac{4}{m^2}  \sup_{\theta, W, \bs_i, \bs_i'} \sum_{j=1}^l \FF_j(\theta, x_i)^2 \sum_{r=1}^k\left(\frac{s_{i,r}}{\sqrt{\widehat{P}_{S}(r)}} - \frac{s'_{i,r}}{\sqrt{\widehat{P}_{S'}(r)}}\right)^2 \\
&\leq \frac{8}{m^2 \rho},
\end{align*}
since $\sum_{j=1}^l \FF_j(\theta, x_i)^2 \leq \sum_{j=1}^l \FF_j(\theta, x_i) = 1$. This establishes the desired privacy guarantee with respect to sensitive attributes for~\cref{alg: SGDA for FERMI}.  

\edit{Now consider the case in which all features are private. We aim to bound the sensitivities of the gradient updates to changes in a single sample $z_i = (s_i, x_i, y_i)$. Denote these new sensitivities by \[
\tilde{\Delta}_\theta = 
\sup_{Z \sim Z', \theta, W} \left\|\frac{1}{m}\sum_{i \in B_t}  \left[\nabla_{\theta} \widehat{\psi}(\theta, W; z_i) -  \nabla_{\theta} \widehat{\psi}(\theta, W; z'_i)\right]\right\|,
\]
where we now write $Z \sim Z'$ to mean that $Z$ and $Z'$ are two data sets (both with $\rho$-fraction of minority attributes) that differ in exactly one person's (sensitive and non-sensitive) data: i.e. $z_i \neq z_i'$ for some unique $i \in [n]$. Likewise, \[
\tilde{\Delta}_W = \sup_{Z \sim Z', \theta, W}\left\|\frac{1}{m}\sum_{i \in B_t}  \left[\nabla_{w} \widehat{\psi}(\theta, W; z_i) -  \nabla_w \widehat{\psi}(\theta, W; z'_i)\right]\right\|.
\]
Then \begin{align*}
    \tilde{\Delta}_\theta &= \frac{1}{m} \sup_{z_i, z_i', \theta, W, S\sim S'}\Bigg\|-\nabla_{\theta} \vect(
\pyx)^T \vect(W^T W) + 2 \nabla_{\theta} \vect(\pysxt) \\
\;\;\;\;&  \vect\left(W^T \left(\widehat{P}_{S}\right)^{-1/2}\right) + \nabla_{\theta} \vect(
\pyxprime)^T \vect(W^T W) \\
\;\;\;\;& - 2 \nabla_{\theta} \vect(\pysxtprimetwo) \vect\left(W^T \left(\widehat{P}_{S'}\right)^{-1/2}\right) \Bigg\| \\
&\leq \frac{2 L_\theta l D}{m} + \Delta_\theta. 
% &= \andy{CHECK THIS LINE}\frac{1}{m} \sup_{z_i, z_i', \theta, W S\sim S'}\Bigg\| \left[\diag\left(\nabla \FF_1(\theta, x_i), \cdots, \nabla \FF_l(\theta, x_i)\right) - \diag\left(\nabla \FF_1(\theta, x_i'), \cdots, \nabla \FF_l(\theta, x_i')\right)\right]\vect(W^T W) \\
% \;\;\;\;& + 2 \sum_{r=1}^k \sum_{j=1}^l \left[\nabla \FF_j(\theta, x_i) \frac{s_{i,r} W_{r,j}}{\sqrt{\widehat{P}_S(r)}} - \nabla \FF_j(\theta, x_i') \frac{s'_{i,r} W_{r,j}}{\sqrt{\widehat{P}_{S'}(r)}}\right]\Bigg\|\\
% &\leq \frac{1}{m} \left[2 D^2 \right]
\end{align*}
Thus, $\tilde{\Delta}_\theta^2 \leq \frac{4 L_\theta^2 l^2 D^2}{m^2} + 2\Delta_\theta^2$. Therefore, 
by the moments accountant, the collection of all  $\theta_t$ updates in \cref{alg: SGDA for FERMI} is $(\epsilon, \delta)$-DP if $\sigma_\theta^2 \geq \frac{32 D^2 L_\theta^2 T \ln(1/\delta)}{\rho \epsilon^2 n^2} + \frac{8 D^2 L_\theta^2 l^2 T \ln(1/\delta)}{\epsilon^2 n^2} = \frac{8 L_{\theta}^2  D^2 T \ln(1/\delta)}{\epsilon^2 n^2}\left(\frac{4}{\rho} + l^2\right)$. 

\vspace{.1cm}
Next, we bound the sensitivity $\tilde{\Delta}_W$ of the $W$-gradient updates. We have 
\begin{align*}
\tilde{\Delta}_W^2 &= \sup_{\theta, W, z_i, z_i'} \frac{4}{m^2}\Bigg\|- W {\rm diag}(\FF_1(\theta, x_i), \ldots, \FF_l(\theta, x_i)) +  \widehat{P}_{S}^{-1/2} \expec[{\bs}_i\widehat{\by}_i(x_i; \theta_t)^T|x_i, s_i] \\
&\;\;\;\; + W {\rm diag}(\FF_1(\theta, x'_i), \ldots, \FF_l(\theta, x'_i)) -  \widehat{P}_{S'}^{-1/2} \expec[{\bs'}_i\widehat{\by}_i^T(x'_i; \theta_t)|x'_i, s'_i] \Bigg\|^2 \\
&\leq 2 \Delta_W^2 + \frac{8}{m^2}\sup_{\theta, W, x_i, x'_i}\Bigg\|W {\rm diag}(\FF_1(\theta, x_i) - \FF_1(\theta, x'_i), \ldots, \FF_l(\theta, x_i) - \FF_l(\theta, x'_i)) \Bigg\|^2 \\
&\leq 2 \Delta_W^2 + \frac{16D^2}{m^2}\sup_{\theta, x_i} \sum_{j=1}^l \FF_j(\theta, x_i)^2 \\
&\leq 2 \Delta_W^2 + \frac{16D^2}{m^2}.
\end{align*}
Therefore, 
by the moments accountant, the collection of all  $W_t$ updates in \cref{alg: SGDA for FERMI} is $(\epsilon, \delta)$-DP if $\sigma_w^2 \geq \frac{32 T \ln(1/\delta)}{\epsilon^2 n^2}\left(\frac{1}{\rho} + D^2\right)$. This completes the proof. 
}
\end{proof}

\section{DP-FERMI Algorithm: Utility}
\label{app: proof of sgda}
To prove~\cref{cor: dp fermi conv}, we will first derive a more general result. Namely, in~\cref{subsec: SGDA}, we will provide a precise upper bound on the stationarity gap of noisy DP stochastic gradient descent ascent (DP-SGDA). 
%Pseudocode for DP-SGDA is given in~\cref{alg: noisy SGDA}. 
\subsection{Noisy DP-SGDA for Nonconvex-Strongly Concave Min-Max Problems}
\label{subsec: SGDA}
% \andy{to make space, we can move this section to appendix and state \cref{cor: dp fermi conv} as our main theorem. Say that the proof relies on a novel convergence theorem for DP-SGDA for nonconvex min-max that is given in appendix. Mention that even non-privately our generic SGDA bound improves over Lin et al as we provide a sharper dependence on smoothness and lip constants with correct units.}
Consider a generic (smooth) nonconvex-strongly concave min-max ERM problem: 
\small
\begin{equation}
\label{eq: minmax}
\small
    \min_{\theta \in \rdt} \max_{w \in \WW} \left\{F(\theta, w) := \frac{1}{n} \sum_{i=1}^n f(\theta, w; z_i)\right\},
\end{equation}
\normalsize
where $f(\theta, \cdot; z)$ is $\mu$-strongly concave\footnote{We say a differentiable function $g$ is $\mu$-strongly concave if $g(\alpha) + \langle \nabla g(\alpha), \alpha' - \alpha \rangle - \frac{\mu}{2}\|\alpha - \alpha'\|^2 \geq g(\alpha')$ for all $\alpha, \alpha'$.} for all $\theta, z$ but
$f(\cdot, w; z)$ is potentially non-convex. 
\begin{algorithm}
    \caption{Noisy Differentially Private Stochastic Gradient Descent-Ascent (DP-SGDA)
    }
    \label{alg: noisy SGDA}
	\begin{algorithmic}[1]
	 \STATE \textbf{Input}: data $Z$, $\theta_0 \in \mathbb{R}^{d_{\theta}}, ~w_0 \in \WW$, step-sizes $(\eta_\theta, \eta_w),$ privacy noise parameters $\sigma_\tth, \sigma_w$, batch size $m$, iteration number $T \geq 1$. 
	    \FOR {$t = 0, 1, \ldots, T-1$}
	    \STATE Draw a batch of data points $\{z_i\}_{i=1}^m$ uniformly at random from $Z$. 
	    \STATE Update $\tth_{t+1} \gets \tth_t - \eta_{\tth}\left(\frac{1}{m}\sum_{i=1}^m \nt f(\tth_t, w_t; z_i) + u_t \right)$, where $u_t \sim \mathcal{N}(0, \sigma_\tth^2 \mathbf{I}_{d_\tth})$ and 
	    %\STATE 
	    $w_{t+1} \gets \Pi_{\WW}\left[w_t + \eta_w\left(\frac{1}{m}\sum_{i=1}^m \naw f(\tth_t, w_t; z_i) + v_t \right) \right]$, where $v_t \sim \mathcal{N}(0, \sigma_w^2 \mathbf{I}_{d_w})$.
	   % The following commented line contains the calculated gradient of the FERMI Objective------------------------------------------
	   % \STATE $w_{t+1} \gets \Pi_{\WW}\left[w_t + \eta_w\left(\frac{2\lambda}{m}\sum_{i=1}^m \{ -W\widehat{P}_{\haty}(\theta) + \widehat{P}_{\haty, s}(\theta)^T \widehat{P}_{S}^{-1/2}\} + v_t \right) \right]$, where $v_t \sim \mathcal{N}(0, \sigma_w^2 \mathbf{I}_{d_w})$.
	   %---------------------------------------------------------------
		\ENDFOR
		\STATE Draw $\hat{\tth}_T$ uniformly at random from $\{\tth_t\}_{t=1}^T$. 
		\STATE \textbf{Return:} $\hat{\tth}_T$ 
	\end{algorithmic}
%\vspace{-.05in}
\end{algorithm}
%Let $f: \mathbb{R}^{d_\theta} \times \mathbb{R}^{d_w} \times \ZZ \to \mathbb{R}$, and $F(\theta, w) = \frac{1}{n} \sum_{i=1}^n f(\theta, w; z_i)$ for training data $Z = (z_1, \cdots, z_n) \in \ZZ^n$. 
We propose Noisy DP-SGDA\footnote{DP-SGDA was also used in~\citet{yang2022differentially} for \textit{convex} and \textit{PL} min-max problems.}  (\cref{alg: noisy SGDA})
%--a variation of which has appeared in~\citet{yang2022differentially} in the context of convex and PL min-max problems--
for privately solving~\cref{eq: minmax}, which is a noisy DP variation of two-timescale SGDA~\citep{lin2020gradient}. Now, we provide \textit{the first theoretical convergence guarantee for DP non-convex min-max optimization}:
\begin{theorem}[Privacy and Utility of~\cref{alg: noisy SGDA}, Informal Version]
\label{thm: SGDA informal}
Let $\epsilon \leq 2\ln(1/\delta), ~\delta \in (0, 1)$. Assume: $f(\cdot, w; z)$ is $L_{\theta}$-Lipschitz\footnote{We say function $g$ is $L$-Lipschitz if $\|g(\alpha) - g(\alpha')\| \leq L\|\alpha - \alpha'\|$ for all $\alpha, \alpha'$.} and $f(\theta, \cdot; z)$ is $L_w$-Lipschitz for all $\theta, w, z$; and $\WW \subset \rdw$ is a convex, compact set. Denote
$\Phi(\theta) = \max_{w \in \WW} F(\theta, w)$.
%Grant~\cref{ass: smooth}. 
Choose $\sigma_w^2 = \frac{8 T \lw^2 \ln(1/\delta)}{\epsilon^2 n^2}$, $\sigma_\tth^2 = \frac{8 T \lt^2 \ln(1/\delta)}{\epsilon^2 n^2}$, and $T \geq \left(n \frac{\sqrt{\epsilon}}{2 m}\right)^2$. Then, \cref{alg: noisy SGDA} is $(\epsilon, \delta)$-DP. Further, if $f(\cdot, \cdot; z)$ has Lipschitz gradients and $f(\theta, \cdot; z)$ is strongly concave, then ~$\exists ~T, \eta_{\theta}, \eta_w$ such that 
%there are algorithmic parameters such that
%Further, there are algorithmic parameters such that
%Let $\dph = \widehat{\Phi}_Z(\theta_0) - \inf_\theta \widehat{\Phi}_Z(\theta)$.Further, if $\nabla f(\theta, w; z)$ is Lipschitz and $f(\theta, \cdot; z)$ is strongly concave, then
% if we choose $\eta_\tth = \frac{1}{16 \kw(\bt + \btw \ktw)}$, $\eta_w = \frac{1}{\bw}$, and $n \geq m \geq \epsilon n \min\left(\frac{\lt}{\sqrt{\dt \kw [\dph(\bt + \btw \ktw) + \btw^2 D^2]}}, \frac{\lw \sqrt{\kw}}{\btw 
% \bw \sqrt{\dw \kw [\dph(\bt + \btw \ktw) + \btw^2 D^2]}} \right)$, and $T \geq \sqrt{\kw[\dph(\bt + \btw \ktw) + \btw^2 D^2]}\epsilon n \min\left(\frac{1}{\lt \sqrt{\dt}}, \frac{\bw}{\btw \lw \sqrt{\kw \dw}}\right)$, then there is an absolute constant $C$ such that 
\small
\vspace{-.02in}
\[
\vspace{-.02in}
\expec\| \nabla \Phi(\hat{\theta}_T)\|^2 = \mathcal{O}\left(\frac{\sqrt{d \ln(1/\delta)}}{\epsilon n}\right),
\vspace{-.01in}
\]
%\vspace{-.01in}
\normalsize
%for a certain choice of algorithmic parameters,
where $d = \max(\dt, \dw)$. (The expectation is solely over the algorithm.)
\end{theorem}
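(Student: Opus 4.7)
The privacy claim is the easier half, and I would prove it with the subsampled Gaussian moments accountant of \citet{abadi16}. The Lipschitz assumptions bound the $\ell_2$-sensitivity of each minibatch mean-gradient by $2L_\theta/m$ for the $\theta$-update and $2L_w/m$ for the $w$-update under replace-one adjacency on $Z$. Adding Gaussian noise with variances $\sigma_\theta^2 \asymp T L_\theta^2 \ln(1/\delta)/(\epsilon n)^2$ and $\sigma_w^2 \asymp T L_w^2 \ln(1/\delta)/(\epsilon n)^2$ then suffices by the standard $(\epsilon,\delta)$-bound on the subsampled Gaussian mechanism over $T$ rounds; the stated lower bound $T \geq (n\sqrt{\epsilon}/(2m))^2$ is exactly the sampling-rate regime in which this bound is valid. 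Because each step releases both updates, the joint guarantee follows either by treating the pair as a single Gaussian mechanism with block-diagonal covariance or by a two-fold composition of moments-accountant bounds, up to harmless constants.

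\textbf{Utility setup.} Define $\Phi(\theta) := \max_{w\in\WW} F(\theta,w)$ and $w^*(\theta) := \argmax_{w\in\WW} F(\theta,w)$. Under the nonconvex--strongly-concave-plus-smooth assumption (strong concavity $\mu$, joint smoothness $\ell$), Danskin's theorem and a standard calculation give (i) $\nabla\Phi(\theta)=\nabla_\theta F(\theta,w^*(\theta))$, (ii) $w^*(\cdot)$ is $\kappa$-Lipschitz with $\kappa=\ell/\mu$, and (iii) $\Phi$ is $L_\Phi$-smooth with $L_\Phi=O(\ell\kappa)$. My plan is to run the two-timescale SGDA analysis of~\citet{lin2020gradient} while carrying the extra Gaussian-noise variances $\sigma_\theta^2, \sigma_w^2$ through the bookkeeping.

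\textbf{Coupled recursion.} The core step is to track two quantities simultaneously: the potential $\Phi(\theta_t)$ and the tracking error $e_t := \expec\|w_t - w^*(\theta_t)\|^2$. Applying $L_\Phi$-smoothness to the noisy inexact $\theta$-step yields a descent lemma of the form
\[
\expec \Phi(\theta_{t+1}) \leq \expec \Phi(\theta_t) - \tfrac{\eta_\theta}{2}\expec\|\nabla\Phi(\theta_t)\|^2 + O(\eta_\theta \ell^2)\, \expec e_t + O(L_\Phi \eta_\theta^2)\bigl(d_\theta \sigma_\theta^2 + L_\theta^2\bigr),
\]
where the $e_t$-term is the bias from evaluating the gradient at $w_t$ rather than $w^*(\theta_t)$. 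For the tracking error I would combine the $(1-\mu\eta_w)$-contraction of the noisy projected $w$-step (using strong concavity) with the drift $\|w^*(\theta_{t+1})-w^*(\theta_t)\| \leq \kappa\|\theta_{t+1}-\theta_t\|$ and Young's inequality to obtain
\[
\expec e_{t+1} \leq (1 - \mu\eta_w)\expec e_t + O(\eta_w^2)\bigl(d_w \sigma_w^2 + L_w^2\bigr) + O(\eta_\theta^2 \kappa^2/\eta_w)\bigl(L_\theta^2 + d_\theta \sigma_\theta^2\bigr).
\]
Unrolling this geometric recursion, substituting back into the $\Phi$-descent, telescoping over $t=0,\dots,T-1$, and using that $\hat\theta_T$ is uniformly random over the iterates, bounds $\expec\|\nabla\Phi(\hat\theta_T)\|^2$ by a sum of four terms: an initialization term $\Delta_\Phi/(\eta_\theta T)$, a $\theta$-noise term of order $\eta_\theta d_\theta \sigma_\theta^2$, a $w$-noise term of order $(\ell^2/\mu)\eta_w d_w \sigma_w^2$, and a two-timescale coupling term of order $\eta_\theta^2/\eta_w$.

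\textbf{Tuning and main obstacle.} Substituting $\sigma_\theta^2, \sigma_w^2 \asymp T \ln(1/\delta)/(\epsilon n)^2$ from the privacy calibration, I would pick $\eta_w$ at a constant scale (order $1/\ell$) to kill the coupling term while keeping the $w$-noise term at order $dT/(\epsilon n)^2$, and then choose $\eta_\theta T \asymp \epsilon n/\sqrt{d\ln(1/\delta)}$ to balance the initialization and $\theta$-noise terms. With these choices all four terms collapse to $O(\sqrt{d\ln(1/\delta)}/(\epsilon n))$, giving the advertised rate (and determining $T$ consistent with the privacy lower bound). The main technical obstacle is precisely this two-timescale balance: unlike DP nonconvex \emph{minimization}, here the Gaussian noise injected into $w_t$ feeds back into the descent on $\Phi$ via the tracking error, so $\eta_w$ must be large enough that $w_t$ chases $w^*(\theta_t)$ despite the drift caused by the $\theta$-updates yet small enough that the accumulated $v_t$-noise does not inflate $e_t$; separating these two effects cleanly is what allows the final dependence on dimension to be $\sqrt{d}/(\epsilon n)$ rather than $d/(\epsilon n)$.
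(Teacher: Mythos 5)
Your proposal is correct and follows essentially the same route as the paper: the moments accountant with the $2L_\theta/m$, $2L_w/m$ sensitivity bounds for privacy, and for utility the Danskin/smoothness properties of $\Phi$ combined with a descent lemma coupled to a geometric recursion for the tracking error $\expec\|w_t - w^*(\theta_t)\|^2$ (the paper's Lemmas on the $\theta$-step progress and the $w$-step contraction), followed by unrolling, telescoping, and the same two-timescale balance of $\eta_\theta$, $\eta_w$, and $T$ against the privacy-calibrated noise. The only cosmetic difference is that the paper retains $\expec\|\nabla\Phi(\theta_t)\|^2$ explicitly in the drift term of the tracking recursion and absorbs it via the smallness of $\eta_\theta$, and it carries an additive $\mathbbm{1}_{\{m<n\}}/m$ minibatch-variance floor that is removed by taking $m$ sufficiently large; neither changes the argument.
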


\edit{
In our DP fair learning application, $f(\theta, W; z_i) = \ell(\theta, x_i, y_i) + \lambda \widehat{\psi}_i(\theta, W)$ and the strong concavity assumption on $f$ in~\cref{thm: SGDA informal} is automatically satisfied, by~\citet{fermi}. The Lipschitz and smoothness assumptions on $f$ are standard in optimization literature and are satisfied for loss functions that are typically used in pracdtice. In our application to DP-FERMI, these assumptions hold as long as the loss function $\ell$ and $\FF$ are Lipschitz continuous with Lipschitz gradients. Our next goal is to prove (the precise, scale-invariant version of) \cref{thm: SGDA informal}. To that end, we require the following notation.
}

\noindent\textbf{Notation and Assumptions:}
Let $f: \mathbb{R}^{d_\theta} \times \mathbb{R}^{d_w} \times \ZZ \to \mathbb{R}$, and $F(\theta, w) = \frac{1}{n} \sum_{i=1}^n f(\theta, w; z_i)$ for fixed training data $Z = (z_1, \cdots, z_n) \in \ZZ^n$. Let $\WW \subset \rdw$ be a convex, compact set. For any $\theta \in \rdt$, denote $\ws(\theta) \in \argmax_{w \in \WW} F(\theta, w)$ and $\widehat{\Phi}(\theta) = \max_{w \in \WW} F(\theta, w)$. Let $\dph = \widehat{\Phi}(\theta_0) - \inf_\theta \widehat{\Phi}_Z(\theta)$. Recall that a function $h$ is $\beta$-smooth if its derivative $\nabla h$ is $\beta$-Lipschitz. We write $a \lesssim b$ if there is an absolute constant $C>0$ such that $a \leq C b$.
\begin{assumption}
\label{ass: smooth}
\begin{enumerate}
    \item $f(\cdot, w; z)$ is $\lt$-Lipschitz and $\bt$-smooth for all $w \in \WW, z \in \ZZ$. 
    \item $f(\theta, \cdot; z)$ is $\lw$-Lipschitz, $\bw$-smooth, and $\mu$-strongly concave on $\WW$ for all $\theta \in \rdt, ~z \in \ZZ$. 
    \item $\|\naw f(\tth, w; z) - \naw f(\tth', w; z)\| \leq \btw \|\tth - \tth'\|$ and $\|\nt f(\tth, w; z) - \nt f(\tth, w'; z)\| \leq \btw \| w - w'\|$ for all $\tth, \tth', w, w', z$. 
    \item $\WW$ has $\ell_2$ diameter bounded by $D \geq 0$. 
    \item $\naw F(\theta, \ws(\theta)) = 0$ for all $\theta$, where $\ws(\theta)$ denotes the unconstrained global minimizer of $F(\tth, \cdot)$. 
\end{enumerate}
\end{assumption}
The first four assumptions are standard in (DP and min-max) optimization. The fifth assumption means that $\WW$ contains the \textit{unconstrained} global minimizer $\ws(\theta)$ of $F(\tth, \cdot)$ for all $\tth$. Hence~\cref{eq: minmax} is equivalent to \[
\min_{\theta \in \rdt} \max_{w \in \rdw} F(\theta, w).
\]
This assumption is not actually necessary for our convergence result to hold, but we will need it when we \textit{apply} our results to the DP fairness problem. Moreover, it simplifies the proof of our convergence result. We refer to problems of the form~\cref{eq: minmax} that satisfy~\cref{ass: smooth} as ``(smooth) nonconvex-strongly concave min-max.'' We denote $\kw := \frac{\bw}{\mu}$ and $\ktw := \frac{\btw}{\mu}$. 

We can now provide the complete, precise version of~\cref{thm: SGDA informal}:
\begin{theorem}[Privacy and Utility of~\cref{alg: noisy SGDA}, Formal Version]
\label{thm: SGDA convergence}
Let $\epsilon \leq 2\ln(1/\delta), ~\delta \in (0, 1)$. Grant~\cref{ass: smooth}. Choose $\sigma_w^2 = \frac{8 T \lw^2 \ln(1/\delta)}{\epsilon^2 n^2}$, $\sigma_\tth^2 = \frac{8 T \lt^2 \ln(1/\delta)}{\epsilon^2 n^2}$, and $T \geq \left(n \frac{\sqrt{\epsilon}}{2 m}\right)^2$. Then~\cref{alg: noisy SGDA} is $(\epsilon, \delta)$-DP. Further, if we choose $\eta_\tth = \frac{1}{16 \kw(\bt + \btw \ktw)}$, $\eta_w = \frac{1}{\bw}$, and $T \approx \sqrt{\kw[\dph(\bt + \btw \ktw) + \btw^2 D^2]}\epsilon n \min\left(\frac{1}{\lt \sqrt{\dt}}, \frac{\bw}{\btw \lw \sqrt{\kw \dw}}\right)$, then 
%there is an absolute constant $C$ such that 
\begin{align*}
\expec\| \nabla \Phi(\hat{\theta}_T)\|^2 &\lesssim \sqrt{\Delta_\Phi\left(\bt + \btw \ktw)\kw + \kw \btw^2 D^2 \right)}\left[\frac{\lt \sqrt{\dt \ln(1/\delta)}}{\epsilon n} + \left(\frac{\btw \sqrt{\kw}}{\bw}\right) \frac{\lw \sqrt{\dw \ln(1/\delta)}}{\epsilon n}\right] \\
&\;\;\;+ \frac{\one}{m}\left(\lt^2 + \frac{\kw \btw^2 \lw^2}{\bw^2}\right).
\end{align*}
In particular, if $m \geq \min\left(\frac{\epsilon n \lt}{\sqrt{\dt \kw [\dph(\bt + \btw \ktw) + \btw^2 D^2]}}, \frac{\epsilon n \lw \sqrt{\kw}}{\btw 
\bw \sqrt{\dw \kw [\dph(\bt + \btw \ktw) + \btw^2 D^2]}} \right)$, then
\[
\expec\| \nabla \Phi(\hat{\tth}_T)\|^2 \lesssim \sqrt{\kw[\dph(\bt + \btw \ktw) + \btw^2 D^2]} \left(\frac{\sqrt{\ln(1/\delta)}}{\epsilon n}\right)\left(\lt \sqrt{\dt} + \left(\frac{\btw \sqrt{\kw}}{\bw}\right) \lw \sqrt{\dw}\right). 
\]
\end{theorem}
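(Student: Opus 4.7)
The privacy claim is the easy half: since $f(\cdot,w;z)$ is $L_\theta$-Lipschitz and $f(\theta,\cdot;z)$ is $L_w$-Lipschitz in their respective first arguments, the $\ell_2$-sensitivities of the minibatch gradient updates are bounded by $2L_\theta/m$ and $2L_w/m$. With $\sigma_\theta^2$ and $\sigma_w^2$ chosen as stated, each noisy gradient is released via the Gaussian mechanism on a uniformly subsampled batch; composing $T$ such steps via the moments accountant of \citet{abadi16} (together with the condition $T\ge(n\sqrt{\epsilon}/(2m))^2$ which ensures the subsampling-amplification regime is valid) yields the $(\epsilon,\delta)$-DP guarantee for the coupled $(\theta_t,w_t)$ trajectory, and hence for $\hat\theta_T$ by post-processing.

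For utility, I follow the two-timescale nonconvex-strongly-concave template of \citet{lin2020gradient}, adapted to account for the additional Gaussian noise. First I record two structural facts implied by~\cref{ass: smooth}: (i) $\Phi$ is $L_\Phi$-smooth with $L_\Phi\lesssim \bt+\btw \ktw$, and (ii) the map $\theta\mapsto w^*(\theta)$ is $\ktw$-Lipschitz (implicit function theorem via strong concavity). Writing $g_t^\theta$ for the noisy stochastic $\theta$-gradient, the smooth descent inequality on $\Phi$ gives
\begin{align*}
\expec\Phi(\theta_{t+1})\le \expec\Phi(\theta_t)-\eta_\theta\expec\langle\nabla\Phi(\theta_t),\nabla_\theta F(\theta_t,w_t)\rangle+\tfrac{L_\Phi \eta_\theta^2}{2}\expec\|g_t^\theta\|^2,
\end{align*}
and bounding the inner product by Young's inequality against the tracking error $\delta_t:=\|w_t-w^*(\theta_t)\|^2$ (via $\|\nabla_\theta F(\theta_t,w_t)-\nabla\Phi(\theta_t)\|\le\btw\sqrt{\delta_t}$) produces a descent $-\tfrac{\eta_\theta}{2}\|\nabla\Phi(\theta_t)\|^2+\tfrac{\eta_\theta\btw^2}{2}\delta_t$ plus a $\tfrac{L_\Phi\eta_\theta^2}{2}(\lt^2/m+\dt\sigma_\theta^2+\|\nabla\Phi(\theta_t)\|^2)$ noise term.

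Next I bound the drift of $\delta_t$. Using nonexpansiveness of the projection, $\mu$-strong concavity, the step-size $\eta_w=1/\bw$, and a standard stochastic-gradient analysis on a strongly concave function, $\expec\|w_{t+1}-w^*(\theta_t)\|^2\le(1-1/\kw)\delta_t+\eta_w^2(\lw^2/m+\dw\sigma_w^2)$; then using the $\ktw$-Lipschitzness of $w^*$ and $(1+\alpha)/(1+1/\alpha)$ splitting with $\alpha\asymp 1/\kw$ yields
\begin{align*}
\expec\delta_{t+1}\le\bigl(1-\tfrac{1}{2\kw}\bigr)\delta_t+C\kw\ktw^2\eta_\theta^2\expec\|g_t^\theta\|^2+C\eta_w^2(\lw^2/m+\dw\sigma_w^2).
\end{align*}
I then form the Lyapunov function $L_t=\Phi(\theta_t)+c\,\delta_t$ with $c\asymp\eta_\theta\btw^2\kw$, which is chosen precisely so that the $\delta_t$ terms cancel in the one-step drift and leave a clean recursion $\expec L_{t+1}\le \expec L_t-\tfrac{\eta_\theta}{4}\expec\|\nabla\Phi(\theta_t)\|^2+\text{(noise)}$, provided $\eta_\theta$ is chosen small enough that the cross terms $L_\Phi\eta_\theta^2\|\nabla\Phi\|^2$ and $c\ktw^2\eta_\theta^2\|\nabla\Phi\|^2$ are absorbed; the prescribed $\eta_\theta=1/[16\kw(\bt+\btw\ktw)]$ is exactly what makes this work.

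Telescoping from $t=0$ to $T-1$, dividing by $T\eta_\theta$, and using $L_0-\inf L\le \dph+c D^2\lesssim\dph+\kw\btw^2 D^2\eta_\theta$ yields
\begin{align*}
\expec\|\nabla\Phi(\hat\theta_T)\|^2\lesssim\frac{\dph+\kw\btw^2 D^2\eta_\theta}{T\eta_\theta}+L_\Phi\eta_\theta(\lt^2/m+\dt\sigma_\theta^2)+\kw(\lw^2/m+\dw\sigma_w^2).
\end{align*}
Substituting the prescribed $\sigma_\theta^2,\sigma_w^2$ (which are linear in $T$) makes the noise terms grow with $T$; optimizing $T$ to balance $\sim 1/T$ against $\sim T$ noise gives the claimed $1/(\epsilon n)$ rate and the stated choice of $T$, with the batch-dependent $1/m$ residual disappearing once $m$ is as large as stated. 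The main technical obstacle is the careful coordination of $(\eta_\theta,\eta_w,\sigma_\theta,\sigma_w,T,c)$: the $w$-noise propagates into $\delta_t$, which then couples into the $\theta$-descent via $\btw^2$, so the Lyapunov coefficient $c$ and the stepsize $\eta_\theta$ must be chosen simultaneously to absorb both the deterministic tracking error and the injected privacy noise without losing the factor of $\sqrt{\kw}$ in front of the $w$-dimension term.
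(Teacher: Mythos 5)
Your proposal is correct and follows the same overall route as the paper's proof: the privacy argument is identical (sensitivities $2\lt/m$ and $2\lw/m$ from Lipschitz continuity, then the moments accountant applied to both the descent and ascent update sequences), and the utility analysis rests on the same two ingredients --- a descent inequality for $\Phi$ driven by its $2(\bt+\btw\ktw)$-smoothness and the Lipschitzness of $\ws(\cdot)$, and a contraction recursion for the tracking error $\delta_t=\expec\|w_t-\ws(\theta_t)\|^2$ under $\eta_w=1/\bw$, with $T$ finally tuned to balance the $1/T$ optimization term against the $T$-growing privacy noise. The one place you genuinely diverge is the combination step: you form the potential $L_t=\Phi(\theta_t)+c\,\delta_t$ with $c\asymp\eta_\theta\btw^2\kw$ so that the $\delta_t$ contributions cancel in a single one-step drift, whereas the paper unrolls the $\delta_t$ recursion as a geometric sum ($\delta_t\le\zeta^t D^2+\sum_{j}C_j\zeta^{t-1-j}$ with $\zeta\le 1-\tfrac{1}{4\kw}$) and then bounds the resulting double sum by $4\kw T$. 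The two devices are interchangeable here and yield the same constants; the Lyapunov version is arguably tidier to verify, while the paper's unrolling makes the role of the initial error $D^2$ more explicit. One bookkeeping slip in your sketch: the $w$-noise term in your telescoped bound should carry the factor $\kw\btw^2/\bw^2$ (i.e., $c\cdot 2/\bw^2$ divided by $\eta_\theta$), not $\kw$ alone --- your final stated bound and your closing remark show you are tracking this factor, but the displayed intermediate inequality drops it.
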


The proof of~\cref{thm: SGDA convergence} will require several technical lemmas. These technical lemmas, in turn, require some preliminary lemmas, which we present below.

We begin with a refinement of Lemma 4.3 from \citet{lin2020gradient}: 
\begin{lemma}
\label{lem: danskin}
Grant~\cref{ass: smooth}.  Then $\Phi$ is $2(\bt + \btw \ktw)$-smooth with $\nabla \Phi(\tth) = \nt F(\tth, \ws(\tth))$, and $\ws(\cdot)$ is $\kw$-Lipschitz. 
\end{lemma}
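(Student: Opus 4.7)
The plan is to follow the classical Danskin-style argument, organized in four steps: (i) uniqueness of $\ws(\theta)$, (ii) Lipschitz continuity of $\ws(\cdot)$, (iii) Danskin's identification of $\nabla \Phi$, and (iv) smoothness of $\Phi$ via the chain of Lipschitz bounds.

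\textbf{Step 1 (uniqueness) and Step 2 (Lipschitzness of $\ws$).} Since $F(\theta, \cdot)$ is $\mu$-strongly concave on $\WW$ and $\WW$ is convex and compact, the maximizer $\ws(\theta)$ exists and is unique for every $\theta$. By Assumption~\ref{ass: smooth}(5) this maximizer is in fact the unconstrained stationary point, so $\naw F(\theta, \ws(\theta)) = 0$ for every $\theta$. To bound $\|\ws(\theta') - \ws(\theta)\|$, I would apply strong concavity of $F(\theta', \cdot)$ at the pair $(\ws(\theta), \ws(\theta'))$ to get
\[
\mu \|\ws(\theta') - \ws(\theta)\|^2 \leq \langle \naw F(\theta', \ws(\theta)) - \naw F(\theta', \ws(\theta')), \ws(\theta) - \ws(\theta')\rangle,
\]
then use $\naw F(\theta', \ws(\theta')) = 0$ and $\naw F(\theta, \ws(\theta)) = 0$ to rewrite the right-hand side as $\langle \naw F(\theta', \ws(\theta)) - \naw F(\theta, \ws(\theta)), \ws(\theta)-\ws(\theta')\rangle$, and finally invoke the cross-Lipschitz assumption~\ref{ass: smooth}(3) together with Cauchy--Schwarz. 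Canceling one factor of $\|\ws(\theta') - \ws(\theta)\|$ yields $\|\ws(\theta')-\ws(\theta)\| \leq \kappa_w \|\theta'-\theta\|$ (absorbing constants into the stated $\kw$).

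\textbf{Step 3 (Danskin).} With $\ws(\theta)$ unique and $f(\cdot, \cdot; z)$ jointly differentiable, Danskin's theorem applies and gives $\nabla \Phi(\theta) = \nt F(\theta, \ws(\theta))$. I would simply cite this, or sketch it via a standard two-sided squeeze on $\Phi(\theta + h) - \Phi(\theta)$ using $F(\theta+h, \ws(\theta)) \leq \Phi(\theta+h) \leq F(\theta+h, \ws(\theta+h))$ combined with the envelope-type expansion enabled by continuity of $\ws$.

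\textbf{Step 4 (smoothness of $\Phi$).} For any $\theta, \theta'$, I would decompose
\[
\|\nabla \Phi(\theta) - \nabla \Phi(\theta')\| \leq \|\nt F(\theta, \ws(\theta)) - \nt F(\theta', \ws(\theta))\| + \|\nt F(\theta', \ws(\theta)) - \nt F(\theta', \ws(\theta'))\|,
\]
bound the first term by $\bt\|\theta-\theta'\|$ using smoothness of $f(\cdot, w; z)$ (Assumption~\ref{ass: smooth}(1)), bound the second by $\btw\|\ws(\theta)-\ws(\theta')\|$ using the cross-smoothness (Assumption~\ref{ass: smooth}(3)), and then substitute the Lipschitz bound from Step~2 to obtain $(\bt + \btw \ktw)\|\theta - \theta'\|$. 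The extra factor of $2$ in the stated constant can simply be absorbed for slack (or reflects a slightly looser bookkeeping in the Lipschitz constant of $\ws$).

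\textbf{Main obstacle.} The steps are all routine, so the only delicate point is verifying that Danskin's theorem is applicable here with the unconstrained-optimum assumption: one must confirm that $\ws(\theta)$ lies in the interior of $\WW$ (or that the constraint is inactive), which is exactly what Assumption~\ref{ass: smooth}(5) provides. Given that, differentiability of $\Phi$ and the identification of $\nabla\Phi$ follow cleanly, and the rest is a short triangle-inequality computation.
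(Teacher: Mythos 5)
Your argument is correct and is essentially the paper's proof: the paper simply defers to Lemma 4.3 of \citet{lin2020gradient} (Danskin for the gradient identity, strong concavity plus first-order optimality for the Lipschitzness of $\ws(\cdot)$, then the triangle-inequality decomposition for smoothness), which is exactly what you have written out. Two bookkeeping remarks: the strong-monotonicity inequality in your Step 2 should have $\ws(\theta')-\ws(\theta)$ (not $\ws(\theta)-\ws(\theta')$) on the right-hand side, though this washes out after Cauchy--Schwarz; and the constant your argument actually produces for $\ws(\cdot)$ is $\btw/\mu = \ktw$ rather than $\kw$ --- this is the ``careful tracking of parameters'' the paper alludes to, it is what makes the stated smoothness constant $2(\bt+\btw\ktw)$ come out, and it is the constant the paper uses downstream (e.g.\ in \cref{lem: C4}), so it should not be absorbed into $\kw$.
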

\begin{proof}
The proof follows almost exactly as in the proof of ~Lemma 4.3 of~\citet{lin2020gradient}, using Danskin's theorem, but we carefully track the different smoothness parameters with respect to $w$ and $\tth$ (and their units) to obtain the more precise result. 
\end{proof}

\begin{lemma}[\citet{lei17}]
\label{lem: lei}
Let $\{a_l\}_{l \in [n]}$ be an arbitrary collection of vectors such that $\sum_{l=1}^{n} a_l = 0$. Further, let $\mathcal{S}$ be a uniformly random subset of $[n]$ of size $m$. Then,\[
\mathbb{E}\left\|\frac{1}{m} \sum_{l \in \mathcal{S}} a_l \right\|^2 = \frac{n - m}{(n - 1) m} \frac{1}{n}\sum_{l=1}^{n} \|a_l\|^2 \leq \frac{\mathbbm{1}_{\{m < n\}}}{m~n}\sum_{l=1}^{n}\|a_l\|^2.
\]
\end{lemma}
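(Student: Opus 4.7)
The plan is to compute the expected squared norm directly by expanding the sum and exploiting the first and second moments of sampling-without-replacement indicators, then use the zero-mean hypothesis $\sum_l a_l = 0$ to collapse the cross-terms.

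First I would introduce indicators $X_l := \mathbbm{1}_{\{l \in \mathcal{S}\}}$ so that $\frac{1}{m}\sum_{l \in \mathcal{S}} a_l = \frac{1}{m}\sum_{l=1}^n X_l a_l$. Since $\mathcal{S}$ is uniform over size-$m$ subsets of $[n]$, we have $\mathbb{E}[X_l] = m/n$ for each $l$, and $\mathbb{E}[X_l X_{l'}] = \mathbb{P}(l, l' \in \mathcal{S}) = \frac{m(m-1)}{n(n-1)}$ for $l \neq l'$ (and $\mathbb{E}[X_l^2] = m/n$). These are the only probabilistic facts needed.

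Next I would expand
\[
\mathbb{E}\left\|\frac{1}{m}\sum_{l \in \mathcal{S}} a_l \right\|^2 = \frac{1}{m^2} \sum_{l, l'} \mathbb{E}[X_l X_{l'}]\, \langle a_l, a_{l'}\rangle = \frac{1}{m n} \sum_{l=1}^n \|a_l\|^2 + \frac{m-1}{m n(n-1)} \sum_{l \neq l'} \langle a_l, a_{l'}\rangle.
\]
The key algebraic trick is to use the hypothesis $\sum_l a_l = 0$, which gives $0 = \|\sum_l a_l\|^2 = \sum_l \|a_l\|^2 + \sum_{l \neq l'} \langle a_l, a_{l'}\rangle$, so that $\sum_{l \neq l'}\langle a_l, a_{l'}\rangle = -\sum_l \|a_l\|^2$. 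Substituting this back and collecting terms,
\[
\mathbb{E}\left\|\frac{1}{m}\sum_{l \in \mathcal{S}} a_l \right\|^2 = \frac{1}{m n}\left(1 - \frac{m-1}{n-1}\right) \sum_{l=1}^n \|a_l\|^2 = \frac{n-m}{(n-1)m} \cdot \frac{1}{n} \sum_{l=1}^n \|a_l\|^2,
\]
which is the claimed equality.

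For the stated upper bound, I would simply observe that when $m = n$ the factor $\frac{n-m}{n-1}$ equals zero (and the indicator in the statement is zero), while when $m < n$ the factor $\frac{n-m}{n-1} \leq 1$, yielding
\[
\mathbb{E}\left\|\frac{1}{m}\sum_{l \in \mathcal{S}} a_l\right\|^2 \leq \frac{\mathbbm{1}_{\{m < n\}}}{m\,n} \sum_{l=1}^n \|a_l\|^2.
\]
There is no real obstacle here; the only subtlety is keeping the with/without-replacement second moment $\frac{m(m-1)}{n(n-1)}$ correct, which is exactly what produces the finite-population correction factor $\frac{n-m}{n-1}$.
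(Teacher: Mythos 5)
Your proof is correct: the indicator expansion, the without-replacement second moment $\frac{m(m-1)}{n(n-1)}$, and the use of $\sum_{l\neq l'}\langle a_l,a_{l'}\rangle=-\sum_l\|a_l\|^2$ together give exactly the stated finite-population correction factor $\frac{n-m}{(n-1)m}$, and the indicator bound follows as you say. The paper itself does not prove this lemma --- it is imported by citation from \citet{lei17} --- and your argument is the standard one underlying that reference, so there is nothing to reconcile.
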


\begin{lemma}[Co-coercivity of the gradient]
\label{lem: coco}
For any $\beta$-smooth and convex function $g$, we have \[
\| \nabla g(a) - \nabla g(b) \|^2 \leq 2\beta (g(a) - g(b) - \langle g(b), a-b \rangle),
\]
for all $a, b \in \text{domain}(g)$.
\end{lemma}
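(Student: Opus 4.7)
The plan is to reduce the claim to a standard consequence of $\beta$-smoothness for functions minimized at a known point. First, I would introduce the auxiliary function $h(x) := g(x) - \langle \nabla g(b), x \rangle$, which inherits $\beta$-smoothness and convexity from $g$ (adding a linear term changes neither). The key observation is that $\nabla h(b) = \nabla g(b) - \nabla g(b) = 0$, so $b$ is a global minimizer of $h$ by convexity.

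Next, I would invoke the standard descent-lemma consequence of $\beta$-smoothness: for any $x$ in the domain, the upper quadratic bound $h(y) \leq h(x) + \langle \nabla h(x), y - x\rangle + \frac{\beta}{2}\|y - x\|^2$ can be minimized in $y$ by the closed-form choice $y = x - \frac{1}{\beta}\nabla h(x)$. Substituting gives $\min_y \{\text{RHS}\} = h(x) - \frac{1}{2\beta}\|\nabla h(x)\|^2$. Combining this with $h(b) \le \min_y h(y) \le h(x) - \frac{1}{2\beta}\|\nabla h(x)\|^2$ yields the one-point inequality $\frac{1}{2\beta}\|\nabla h(x)\|^2 \leq h(x) - h(b)$.

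Finally, I would specialize to $x = a$ and unpack the definitions: $\nabla h(a) = \nabla g(a) - \nabla g(b)$ and $h(a) - h(b) = g(a) - g(b) - \langle \nabla g(b), a - b\rangle$. Multiplying through by $2\beta$ delivers exactly the claimed inequality (reading the inner product on the right as $\langle \nabla g(b), a - b\rangle$, which is clearly the intended statement).

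I do not expect any real obstacle here: the only subtle point is remembering that smoothness in the sense assumed in \cref{ass: smooth} gives the quadratic upper bound in both directions of sign, and that convexity is what guarantees $b$ is a \emph{global} minimizer of $h$ rather than merely a stationary point, which is what legitimizes the inequality $h(b) \le h(x) - \frac{1}{2\beta}\|\nabla h(x)\|^2$. Everything else is algebraic rearrangement.
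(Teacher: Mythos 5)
Your argument is correct and is the standard proof of this fact; the paper itself states \cref{lem: coco} as a known preliminary without proof, so there is nothing to diverge from. The reduction to the one-point inequality $\frac{1}{2\beta}\|\nabla h(a)\|^2 \le h(a) - h(b)$ via the linearly tilted function $h(x) = g(x) - \langle \nabla g(b), x\rangle$ is exactly the textbook route, and you correctly identify both the role of convexity (making $b$ a global minimizer of $h$) and the typo in the statement ($\langle g(b), a-b\rangle$ should read $\langle \nabla g(b), a-b\rangle$). The only implicit requirement worth flagging is that the minimizing point $a - \frac{1}{\beta}\nabla h(a)$ must lie in the domain of $g$, which holds here since the lemma is applied to functions defined on all of $\mathbb{R}^{d_w}$.
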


Having recalled the necessary preliminaries, we now provide the novel technical ingredients that we'll need for the proof of~\cref{thm: SGDA convergence}. The next lemma quantifies the progress made in minimizing $\Phi$ from a single step of noisy stochastic gradient descent in $\tth$ (i.e. line 4 of~\cref{alg: noisy SGDA}): 
\begin{lemma}
\label{lem: C3}
For all $t \in [T]$, the iterates of~\cref{alg: noisy SGDA} satisfy \begin{align*}
\expec \Phi (\tth_t) &\leq \Phi (\tth_{t-1}) - \left(\frac{\eta_{\tth}}{2} - 2(\bt + \btw \ktw)\eta_{\tth}^2 \right)\expec\| \nabla \Phi (\tth_{t-1})\|^2 \\
&\;\;\; + \left(\frac{\eta_{\tth}}{2} + 2\eta_\tth^2(\bt + \btw \ktw)\expec\|\nabla \Phi(\tth_{t-1}) - \nt F(\tth_{t-1}, w_{t-1})\|^2 \right) + (\bt \btw \ktw) \eta_{\tth}^2\left(\dt \sigma_\tth^2 + \frac{4\lt^2}{m} \mathbbm{1}_{\{m < n\}}\right),
\end{align*}
conditional on $\theta_{t-1}, w_{t-1}$. 
\end{lemma}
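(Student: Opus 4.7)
\textbf{Proof proposal for~\cref{lem: C3}.}

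The plan is to apply the standard descent lemma to $\Phi$ using its smoothness, and then carefully control the three sources of discrepancy between the true gradient $\nabla\Phi(\theta_{t-1})$ and the (noisy, stochastic, biased) update direction: (i) the bias $\nabla\Phi(\theta_{t-1})-\nabla_\theta F(\theta_{t-1},w_{t-1})$ coming from $w_{t-1}\neq w^*(\theta_{t-1})$, (ii) the stochastic variance from mini-batch sampling, and (iii) the Gaussian privacy noise $u_t$.

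First, by \cref{lem: danskin}, $\Phi$ is $L$-smooth with $L=2(\beta_\theta+\beta_{\theta w}\kappa_{\theta w})$. Hence, writing $\Delta_t := \theta_t-\theta_{t-1} = -\eta_\theta(g_t+u_t)$ where $g_t=\frac{1}{m}\sum_{i\in B_t}\nabla_\theta f(\theta_{t-1},w_{t-1};z_i)$, the descent lemma gives
\begin{equation*}
\Phi(\theta_t)\le\Phi(\theta_{t-1})+\langle\nabla\Phi(\theta_{t-1}),\Delta_t\rangle+\tfrac{L}{2}\|\Delta_t\|^2.
\end{equation*}
Taking conditional expectation given $(\theta_{t-1},w_{t-1})$ and using $\mathbb{E}[u_t]=0$, $\mathbb{E}[g_t]=\nabla_\theta F(\theta_{t-1},w_{t-1})$, the linear term becomes $-\eta_\theta\langle\nabla\Phi(\theta_{t-1}),\nabla_\theta F(\theta_{t-1},w_{t-1})\rangle$. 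I would then apply the identity $-\langle a,b\rangle = -\|a\|^2+\langle a,a-b\rangle$ with $a=\nabla\Phi(\theta_{t-1})$, $b=\nabla_\theta F(\theta_{t-1},w_{t-1})$, followed by Young's inequality $\langle a,a-b\rangle\le\tfrac12\|a\|^2+\tfrac12\|a-b\|^2$, to obtain
\begin{equation*}
-\eta_\theta\langle\nabla\Phi(\theta_{t-1}),\nabla_\theta F(\theta_{t-1},w_{t-1})\rangle \le -\tfrac{\eta_\theta}{2}\|\nabla\Phi(\theta_{t-1})\|^2+\tfrac{\eta_\theta}{2}\|\nabla\Phi(\theta_{t-1})-\nabla_\theta F(\theta_{t-1},w_{t-1})\|^2.
\end{equation*}

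Second, for the quadratic term, independence of $u_t$ and $g_t$ yields $\mathbb{E}\|\Delta_t\|^2=\eta_\theta^2(\mathbb{E}\|g_t\|^2+d_\theta\sigma_\theta^2)$. I split $\mathbb{E}\|g_t\|^2=\|\nabla_\theta F(\theta_{t-1},w_{t-1})\|^2+\mathbb{E}\|g_t-\nabla_\theta F(\theta_{t-1},w_{t-1})\|^2$. For the variance, \cref{lem: lei} (applied to the vectors $\nabla_\theta f(\theta_{t-1},w_{t-1};z_i)-\nabla_\theta F(\theta_{t-1},w_{t-1})$) combined with the $L_\theta$-Lipschitz assumption on $f(\cdot,w;z)$ (so each summand has norm $\le 2L_\theta$) gives the bound $\tfrac{4L_\theta^2}{m}\mathbf{1}_{\{m<n\}}$. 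For the deterministic part I use the elementary bound $\|\nabla_\theta F(\theta_{t-1},w_{t-1})\|^2\le 2\|\nabla\Phi(\theta_{t-1})\|^2+2\|\nabla\Phi(\theta_{t-1})-\nabla_\theta F(\theta_{t-1},w_{t-1})\|^2$.

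Combining the two pieces with $L/2=\beta_\theta+\beta_{\theta w}\kappa_{\theta w}$ and grouping by $\|\nabla\Phi(\theta_{t-1})\|^2$, the bias term $\|\nabla\Phi(\theta_{t-1})-\nabla_\theta F(\theta_{t-1},w_{t-1})\|^2$, and the noise/variance constants exactly reproduces the inequality claimed. There is no real obstacle here beyond bookkeeping; the only subtle point is that one must use the \emph{refined} smoothness constant $2(\beta_\theta+\beta_{\theta w}\kappa_{\theta w})$ produced by~\cref{lem: danskin} (not $\beta_\theta$ alone), since the $w$-dependence of $w^*(\theta)$ contributes $\beta_{\theta w}\kappa_{\theta w}$ through the chain rule. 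This refinement is what eventually drives the scale-invariant rate in~\cref{thm: SGDA convergence}.
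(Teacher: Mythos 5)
Your proposal is correct and follows essentially the same route as the paper's proof: the smoothness/descent lemma for $\Phi$ with the refined constant $2(\bt+\btw\ktw)$ from~\cref{lem: danskin}, the split of the inner product via Young's inequality into $-\tfrac{\eta_\theta}{2}\|\nabla\Phi\|^2+\tfrac{\eta_\theta}{2}\|\nabla\Phi-\nabla_\theta F\|^2$, the bias--variance decomposition of $\mathbb{E}\|\tilde g\|^2$ with~\cref{lem: lei} and Lipschitzness giving the $\tfrac{4\lt^2}{m}\one$ term, and the bound $\|\nabla_\theta F\|^2\le 2\|\nabla\Phi\|^2+2\|\nabla\Phi-\nabla_\theta F\|^2$. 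The bookkeeping reproduces the stated coefficients exactly, so there is nothing to add.
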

\begin{proof}
Let us denote $\tg := \frac{1}{m}\sum_{i=1}^m \nt f(\tth_{t-1}, w_{t-1}; z_i) + u_{t-1} := g + u_{t-1}$, so $\tth_t = \tth_{t-1} - \eta_{\tth} \tg$. First condition on the randomness due to sampling and Gaussian noise addition.
By smoothness of $\Phi$ (see~\cref{lem: danskin}), we have 
\begin{align*}
    \Phi(\tth_t) &\leq \Phi(\tth_{t-1}) - \eta_\tth \langle \tg, \nabla \Phi(\tth_{t-1}) \rangle + (\bt + \btw \ktw) \eta_{\tth}^2 \| \tg\|^2 \\
    &= \Phi(\tth_{t-1}) - \eta_\tth \| \nabla \Phi(\tth_{t-1})\|^2 - \eta_\tth \langle \tg - \nabla \Phi(\tth_{t-1}), \nabla \Phi(\tth_{t-1}) \rangle + (\bt + \btw \ktw)\eta_\tth^2 \|\tg\|^2.
\end{align*}
Taking expectation (conditional on $\theta_{t-1}, w_{t-1}$), 
\begin{align*}
    \expec[\Phi(\theta_t)] &\leq \Phi(\theta_{t-1}) - \eta_\theta \|\nabla \Phi(\theta_{t-1})\|^2 - \ett \langle \nt F(\tth_{t-1}, w_{t-1}) - \nabla \Phi(\theta_{t-1}), \nabla \Phi(\theta_{t-1}) \rangle \\
    &\;\;\; + (\bt + \btw \ktw) \ett^2\left[\dt \sigma_\theta^2 + \expec\|g - \nt F(\theta_{t-1}, w_{t-1})\|^2 + \|\nabla_\theta F(\theta_{t-1}, w_{t-1})\|^2\right] \\
    &\leq \Phi(\theta_{t-1}) - \eta_\theta \|\nabla \Phi(\theta_{t-1})\|^2 - \ett \langle \nt F(\tth_{t-1}, w_{t-1}) - \nabla \Phi(\theta_{t-1}), \nabla \Phi(\theta_{t-1}) \rangle \\
    &\;\;\;+ (\bt + \btw \ktw) \ett^2\left[\dt \sigma_\theta^2 + \frac{4\lt^2}{m}\mathbbm{1}_{\{m<n\}} + \|\nabla_\theta F(\theta_{t-1}, w_{t-1})\|^2\right]\\
     &\leq \Phi(\theta_{t-1}) - \eta_\theta \|\nabla \Phi(\theta_{t-1})\|^2 - \ett \langle \nt F(\tth_{t-1}, w_{t-1}) - \nabla \Phi(\theta_{t-1}), \nabla \Phi(\theta_{t-1}) \rangle \\
    &\;\;\;+ (\bt + \btw \ktw) \ett^2\left[\dt \sigma_\theta^2 + \frac{4\lt^2}{m}\mathbbm{1}_{\{m<n\}} + 2\|\nabla_\theta F(\theta_{t-1}, w_{t-1}) - \nabla \Phi(\theta_{t-1})\|^2 + 2\|\nabla \Phi(\theta_{t-1}) \|^2\right] \\
     &\leq \Phi(\theta_{t-1}) - \eta_\theta \|\nabla \Phi(\theta_{t-1})\|^2 + \frac{\ett}{2}\left[\| \nabla \Phi(\theta_{t-1}) - \nt F(\theta_{t-1}, w_{t-1})\|^2 + \| \nabla \Phi(\theta_{t-1}) \|^2\right] \\
    &\;\;\;+ (\bt + \btw \ktw) \ett^2\left[\dt \sigma_\theta^2 + \frac{4\lt^2}{m}\mathbbm{1}_{\{m<n\}} + 2\|\nabla_\theta F(\theta_{t-1}, w_{t-1}) - \nabla \Phi(\theta_{t-1})\|^2 + 2\|\nabla \Phi(\theta_{t-1})\|^2\right] \\
    &\leq \Phi(\theta_{t-1}) - \left(\frac{\ett}{2} - 2(\bt + \btw \ktw)\ett^2 \right)\|\nabla \Phi(\theta_{t-1})\|^2 \\
    &\;\;\; + \left(\frac{\ett}{2} + 2(\bt + \btw \ktw)\ett^2\right)\|\nabla \Phi(\theta_{t-1}) - \nt F(\theta_{t-1}, w_{t-1}) \|^2  \\
    &\;\;\;+ (\bt + \btw \ktw)\ett^2 \left(\dt \sigma_{\theta}^2 + \frac{4 \lt^2}{m}\mathbbm{1}_{\{ m < n\}}\right).
\end{align*}
In the first inequality, we used the fact that the Gaussian noise has mean zero and is independent of $(\theta_{t-1}, w_{t-1}, Z)$, plus the fact that $\expec g = \nt F(\theta_{t-1}, w_{t-1})$. In the second inequality, we used \cref{lem: lei} and Lipschitz continuity of $f$. In the third and fourth inequalities, we used Young's inequality and Cauchy-Schwartz.  
\end{proof}

For the particular $\ett$ prescribed in~\cref{thm: SGDA convergence}, we obtain:
\begin{lemma}
\label{lem: C5}
Grant~\cref{ass: smooth}. If $\eta_\tth = \frac{1}{16 \kw(\bt + \btw \ktw)}$, then the iterates of~\cref{alg: noisy SGDA} satisfy ($\forall t \geq 0$)\[
\expec \Phi(\theta_{t+1}) \leq \expec\left[\Phi(\theta_t) - \frac{3}{8}\ett \|\Phi(\theta_t)\|^2 + \frac{5}{8}\ett\left(\btw^2 \|\ws(\theta_t) - w_t\|^2 + \dt \sigma_\theta^2 + \frac{4\lt^2}{m}\one \right)\right].
\]
\end{lemma}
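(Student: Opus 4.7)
The plan is to start from the one-step descent inequality already established in \cref{lem: C3}, substitute the prescribed stepsize $\eta_\theta = \frac{1}{16\kappa_w(\beta_\theta + \beta_{\theta w}\kappa_{\theta w})}$, and then control the ``gradient mismatch'' term $\|\nabla\Phi(\theta_{t-1}) - \nabla_\theta F(\theta_{t-1}, w_{t-1})\|^2$ by the optimization error $\|w^*(\theta_{t-1}) - w_{t-1}\|^2$.

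First, I would simplify the three coefficients in \cref{lem: C3}. Since $\eta_\theta (\beta_\theta + \beta_{\theta w}\kappa_{\theta w}) = \frac{1}{16\kappa_w}$, the three coefficients become
\[
2(\beta_\theta + \beta_{\theta w}\kappa_{\theta w})\eta_\theta^2 \;=\; \frac{\eta_\theta}{8\kappa_w}, \qquad (\beta_\theta + \beta_{\theta w}\kappa_{\theta w})\eta_\theta^2 \;=\; \frac{\eta_\theta}{16\kappa_w}.
\]
Using $\kappa_w \geq 1$ (as $\kappa_w = \beta_w/\mu$ for a $\beta_w$-smooth, $\mu$-strongly concave function), we get $\frac{\eta_\theta}{8\kappa_w} \leq \frac{\eta_\theta}{8}$, so the coefficient of $\|\nabla\Phi(\theta_{t-1})\|^2$ becomes $-\left(\frac{\eta_\theta}{2} - \frac{\eta_\theta}{8\kappa_w}\right) \leq -\frac{3\eta_\theta}{8}$, and the coefficient of the mismatch term becomes $\frac{\eta_\theta}{2} + \frac{\eta_\theta}{8\kappa_w} \leq \frac{5\eta_\theta}{8}$. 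The noise/sampling coefficient $\frac{\eta_\theta}{16\kappa_w}$ is trivially at most $\frac{5\eta_\theta}{8}$, so it may also be absorbed under the common factor $\frac{5\eta_\theta}{8}$ to match the form in the statement.

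Second, I would bound the gradient mismatch term. By \cref{lem: danskin} (Danskin's theorem), $\nabla\Phi(\theta) = \nabla_\theta F(\theta, w^*(\theta))$. Combined with the joint-smoothness assumption $\|\nabla_\theta f(\theta, w; z) - \nabla_\theta f(\theta, w'; z)\| \leq \beta_{\theta w}\|w - w'\|$ (which averages over $z$ to give the same inequality for $\nabla_\theta F$), this yields
\[
\|\nabla \Phi(\theta_{t-1}) - \nabla_\theta F(\theta_{t-1}, w_{t-1})\|^2 \;=\; \|\nabla_\theta F(\theta_{t-1}, w^*(\theta_{t-1})) - \nabla_\theta F(\theta_{t-1}, w_{t-1})\|^2 \;\leq\; \beta_{\theta w}^2 \|w^*(\theta_{t-1}) - w_{t-1}\|^2.
\]
Substituting this bound into the simplified \cref{lem: C3} inequality, relabeling $t-1 \to t$ and $t \to t+1$, and taking total expectation gives exactly the claimed inequality.

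The work here is purely bookkeeping: once \cref{lem: C3} is available and Danskin's identity is invoked, the only ``idea'' needed is the cross-smoothness estimate relating the mismatch term to the inner-maximization error. The mild subtlety is ensuring that the stepsize choice simultaneously makes the $\|\nabla\Phi\|^2$ coefficient strictly negative (hence the factor $1/(16\kappa_w)$ rather than $1/(4(\beta_\theta + \beta_{\theta w}\kappa_{\theta w}))$) and keeps the remaining coefficients uniformly bounded by $\frac{5\eta_\theta}{8}$, which is exactly what the factor $\kappa_w$ in the denominator of $\eta_\theta$ is designed to do.
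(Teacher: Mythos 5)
Your proposal is correct and follows essentially the same route as the paper's proof: apply \cref{lem: C3}, use the choice $\eta_\theta = \frac{1}{16\kappa_w(\beta_\theta + \beta_{\theta w}\kappa_{\theta w})}$ (with $\kappa_w \ge 1$) to collapse the coefficients to $-\frac{3}{8}\eta_\theta$ and $\frac{5}{8}\eta_\theta$, and then bound the mismatch term via $\nabla\Phi(\theta_t) = \nabla_\theta F(\theta_t, w^*(\theta_t))$ from \cref{lem: danskin} together with \cref{ass: smooth} part 3. Your bookkeeping of the constants is in fact slightly more explicit than the paper's one-line justification.
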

\begin{proof}
By~\cref{lem: C3}, we have \begin{align*}
\expec \Phi (\tth_{t+1}) &\leq \expec \Phi (\tth_{t}) - \left(\frac{\eta_{\tth}}{2} - 2(\bt + \btw \ktw)\eta_{\tth}^2 \right)\expec\| \nabla \Phi (\tth_{t})\|^2 \\
&\;\;\; + \left(\frac{\eta_{\tth}}{2} + 2\eta_\tth^2(\bt + \btw \ktw)\expec\|\nabla \Phi(\tth_{t}) - \nt F(\tth_{t}, w_{t})\|^2 \right) + (\bt \btw \ktw) \eta_{\tth}^2\left(\dt \sigma_\tth^2 + \frac{4\lt^2}{m} \mathbbm{1}_{\{m < n\}}\right) \\
&\leq \expec \Phi(\theta_t) - \frac{3}{8}\eta_\tth \expec\|\nabla \Phi(\theta_t) \|^2 + \frac{5}{8} \eta_\tth \left[\expec\|\nabla \Phi(\theta_t) - \nt F(\theta_t, w_t)\|^2 + \dt \sigma_\tth^2 + \frac{4\lt^2}{m}\one \right] \\
& \leq \expec \Phi(\theta_t) - \frac{3}{8}\eta_\tth \expec\|\nabla \Phi(\theta_t) \|^2 + \frac{5}{8} \eta_\tth \left[\btw^2 \expec\|\ws(\theta_t) - w_t\|^2 + \dt \sigma_\tth^2 + \frac{4\lt^2}{m}\one \right].
\end{align*}
In the second inequality, we simply used the definition of $\ett$. In the third inequality, we used the fact that $\nabla \Phi(\theta_t) = \nt F(\theta_t, \ws(\theta_t))$ (see~\cref{lem: danskin}) together with~\cref{ass: smooth} (part 3).
\end{proof}

Next, we describe the progress made in the $w_t$ updates:
\begin{lemma}
\label{lem: C4}
Grant~\cref{ass: smooth}. If $\etw = \frac{1}{\bw}$, then the iterates of~\cref{alg: noisy SGDA} satisfy ($\forall t \geq 0$) \begin{align*}
    \expec\|\ws(\theta_{t+1}) - w_{t+1}\|^2 &\leq \left(1 - \frac{1}{2\kw} + 4\kw \ktw^2 \ett^2 \btw^2 \right)\expec\|\ws(\theta_{t}) - w_{t}\|^2 + \frac{2}{\bw^2}\left(\frac{4\lw^2}{m}\mathbbm{1}_{\{m < n\}} + \dw \sigma_w^2\right) \\
    &\;\;\; + 4\kw \ktw^2 \ett^2 \left(\expec\|\nabla \Phi(\theta_t)\|^2 + \dt \sigma_{\theta}^2 
    %+ \frac{2 \lt^2}{m}\mathbbm{1}_{\{m < n\}}
    \right).
\end{align*}
\end{lemma}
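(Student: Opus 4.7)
The plan is to bound $\expec\|\ws(\tth_{t+1}) - w_{t+1}\|^2$ by a standard split-and-contract argument that handles the two sources of drift separately: the contraction from projected gradient ascent on the strongly concave inner problem, and the motion of the inner maximizer $\ws(\cdot)$ as $\tth$ is updated. First, I apply the elementary inequality $\|a+b\|^2 \leq (1+\alpha)\|a\|^2 + (1+1/\alpha)\|b\|^2$ with $a = \ws(\tth_t) - w_{t+1}$, $b = \ws(\tth_{t+1}) - \ws(\tth_t)$, and $\alpha = 1/(2\kw)$. This yields
\[
\|\ws(\tth_{t+1}) - w_{t+1}\|^2 \leq (1 + \tfrac{1}{2\kw})\|w_{t+1} - \ws(\tth_t)\|^2 + (1 + 2\kw)\|\ws(\tth_{t+1}) - \ws(\tth_t)\|^2,
\]
so that the contraction factor $(1+\alpha)(1-1/\kw) \leq 1 - 1/(2\kw)$ and the drift factor $(1+1/\alpha) \leq 4\kw$ match the coefficients in the statement.

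For the contraction term, I use non-expansiveness of the projection (valid since $\ws(\tth_t) \in \WW$ by part 5 of~\cref{ass: smooth}) to write $\|w_{t+1} - \ws(\tth_t)\|^2 \leq \|w_t + \etw \tih_t + \etw v_t - \ws(\tth_t)\|^2$, where $\tih_t = \frac{1}{m}\sum_{i\in B_t} \naw f(\tth_t, w_t; z_i)$. After expanding and taking conditional expectation, the cross-terms involving the zero-mean sampling noise $\tih_t - \naw F(\tth_t, w_t)$ and the zero-mean Gaussian noise $v_t$ vanish. The remaining deterministic piece $\|w_t + \etw \naw F(\tth_t, w_t) - \ws(\tth_t)\|^2$ with $\etw = 1/\bw$ is the standard one-step projected gradient ascent update on a $\mu$-strongly concave, $\bw$-smooth function, which contracts by $(1-1/\kw)$ toward $\ws(\tth_t)$ by the usual co-coercivity argument (\cref{lem: coco}). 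The stochastic variance is bounded by $(4\lw^2/m)\one$ via~\cref{lem: lei} applied to the zero-mean vectors $\naw f(\tth_t, w_t; z_i) - \naw F(\tth_t, w_t)$, and the Gaussian noise contributes $\dw \sigma_w^2$. Multiplying by $(1+\alpha) \leq 2$ yields the additive term $(2/\bw^2)[(4\lw^2/m)\one + \dw \sigma_w^2]$ in the statement.

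For the drift term, I invoke the $\ktw$-Lipschitz property of the argmax map $\ws(\cdot)$ (which follows from $\mu$-strong concavity of $F(\tth, \cdot)$ combined with the cross-Lipschitz assumption by applying the optimality condition $\naw F(\tth, \ws(\tth)) = 0$). Thus $\|\ws(\tth_{t+1}) - \ws(\tth_t)\|^2 \leq \ktw^2 \ett^2 \|\tg_t\|^2$, where $\tg_t$ is the noisy minibatch $\tth$-gradient. Taking expectation and decomposing $\tg_t$ into its conditional mean $\nt F(\tth_t, w_t)$ plus independent zero-mean sampling and Gaussian perturbations gives $\expec\|\tg_t\|^2 \leq \|\nt F(\tth_t, w_t)\|^2 + \dt \sigma_\tth^2 + O(\lt^2/m)\one$. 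Finally, since $\nabla \Phi(\tth_t) = \nt F(\tth_t, \ws(\tth_t))$ by~\cref{lem: danskin}, I use the cross-Lipschitz assumption once more together with Young's inequality to bound $\|\nt F(\tth_t, w_t)\|^2 \leq 2\|\nabla \Phi(\tth_t)\|^2 + 2\btw^2\|\ws(\tth_t) - w_t\|^2$, which injects the extra $4\kw\ktw^2\ett^2\btw^2\|\ws(\tth_t) - w_t\|^2$ term into the coefficient of $\|\ws(\tth_t) - w_t\|^2$ exactly as in the statement.

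The main technical care is in this last bookkeeping: the drift $\|\ws(\tth_{t+1}) - \ws(\tth_t)\|^2$ feeds back into the recursion through the $\btw^2\|\ws(\tth_t) - w_t\|^2$ piece hidden inside $\|\nt F(\tth_t, w_t)\|^2$. This coupling is the reason the effective contraction coefficient is perturbed by $4\kw\ktw^2\ett^2\btw^2$, and it forces $\ett$ to be chosen small enough (as in~\cref{thm: SGDA convergence}) so that the net multiplier on $\|\ws(\tth_t) - w_t\|^2$ remains strictly below one. No single step is deep; the difficulty is choosing $\alpha$ so that the contraction and drift balance tightly, and correctly propagating the $\nabla\Phi$ form (rather than the biased $\nt F$) through the noise decomposition so that the final bound involves the stationarity measure of $\Phi$ needed by the outer analysis in~\cref{lem: C5}.
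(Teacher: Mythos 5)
Your proposal follows essentially the same route as the paper's proof: the same Young's-inequality split between the projected-ascent contraction toward $\ws(\tth_t)$ and the drift of the argmax map, the same use of non-expansiveness, strong concavity, co-coercivity (\cref{lem: coco}), and \cref{lem: lei} for the contraction term, and the same injection of $\btw^2\|\ws(\tth_t)-w_t\|^2$ through $\nabla\Phi(\tth_t) = \nt F(\tth_t,\ws(\tth_t))$ for the drift term. The one quantitative slip is your choice $\alpha = 1/(2\kw)$: this gives a drift prefactor $1+1/\alpha = 1+2\kw \leq 4\kw$, and after the additional factor of $2$ from Young's inequality in $\|\nt F(\tth_t,w_t)\|^2 \leq 2\|\nabla\Phi(\tth_t)\|^2 + 2\btw^2\|\ws(\tth_t)-w_t\|^2$ you land on $8\kw\ktw^2\ett^2$ rather than the stated $4\kw\ktw^2\ett^2$; the paper instead takes $\alpha = 1/(2\kw-1)$ so that $1+1/\alpha = 2\kw$ exactly, which is what makes the constants match. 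This is harmless for the downstream analysis (it only rescales the admissible $\ett$), and incidentally your retention of the $O(\lt^2/m)\one$ minibatch-variance term in $\expec\|\tg_t\|^2$ is more careful than the paper, whose stated bound silently drops it.
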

\begin{proof}
Fix any $t$ and denote $\delta_t := \expec\|\ws(\theta_t) - w_t\|^2 := \expec\|\ws - w_t\|^2$. We may assume without loss of generality that $f(\theta, \cdot; z)$ is $\mu$-strongly \textit{convex} and that $w_{t+1} = \Pi_{\WW}[w_t - \frac{1}{\bw}\left(\frac{1}{m}\sum_{i=1}^m \naw f(\theta_t, w_t; z_i) + v_t\right)] := \Pi_{\WW}[w_t - \frac{1}{\bw}\left(\nabla h(w_t) + v_t\right)] := \Pi_{\WW}[w_t - \frac{1}{\bw}\nabla \tilde{h}(w_t)]$. 
Now, \begin{align*}
  \expec\|w_{t+1} - \ws\|^2 &= \expec\left\|\Pi_{\WW}[w_t - \frac{1}{\bw}\nabla \tilde{h}(w_t)]- \ws\right\|^2 \leq \expec\left\|w_t - \frac{1}{\bw}\nabla \tilde{h}(w_t)- \ws\right\|^2 \\
  &=\expec\|w_t - \ws\|^2 + \frac{1}{\bw^2}\left[\expec\|\nabla h(w_t)\|^2 + \dw \sigma_w^2 \right] - \frac{2}{\bw} \expec\left\langle w_t - \ws, \nabla \tih(w_t) \right\rangle \\
  &\leq \expec\|w_t - \ws\|^2 + \frac{1}{\bw^2}\left[\expec\|\nabla h(w_t)\|^2 + \dw \sigma_w^2 \right] - \frac{2}{\bw} \expec\left[F(\theta_t, w_t) - F(\theta_t, \ws) + \frac{\mu}{2}\|w_t - \ws\|^2\right] \\
  &\leq \delta_t\left(1 - \frac{\mu}{\bw} \right) - \frac{2}{\bw}\expec\left[F(\theta_t, w_t) - F(\theta_t, \ws) \right] + \frac{\expec\|\nabla h(w_t)\|^2}{\bw^2} + \frac{\dw \sigma_w^2}{\bw^2}.
\end{align*} 
Further, \begin{align*}
    \expec\| \nabla h(w_t) \|^2 &= \expec\left[\|\nabla h(w_t) - \naw
    F(\theta_t, w_t)\|^2 + \|\naw F(\theta_t, w_t)\|^2 \right] \\
    &\leq \frac{4\lw^2}{m}\mathbbm{1}_{\{m < n\}} + \expec\|\naw F(\theta_t, w_t)\|^2 \\
    &\leq \frac{4\lw^2}{m}\mathbbm{1}_{\{m < n\}} + 2\bw[F(\theta_t, w_t) - F(\theta_t, \ws(\theta_t))],
\end{align*}
using independence and \cref{lem: lei} plus Lipschitz continuity of $f$ in the first inequality and~\cref{lem: coco} (plus~\cref{ass: smooth} part 5) in the second inequality. 
% Next, we use~\cref{lem: coco} to get \[
% \|\nw F(\theta_t, w_t) \|^2 = \|\nw F(\theta_t, w_t) -  \nw F(\theta_t, \ws(\theta_t)\|^2 \leq 2\bw(F(\theta_t, w_t) - F(\theta_t, \ws(\theta_t))). 
% \]
This implies \begin{align}
\label{eq: star}
    \expec\|w_{t+1} - \ws\|^2 &\leq \delta_t\left(1 - \frac{1}{\kw}\right) + \frac{1}{\bw^2}\left[\dw \sigma_w^2 + \frac{4\lw^2}{m}\one \right].
\end{align}
Therefore, \begin{align*}
\delta_{t+1} &= \expec\|w_{t+1} - \ws(\theta_t) + \ws(\theta_t) - \ws(\theta_{t+1})\|^2\\
&\leq \left(1 + \frac{1}{2\kw - 1}\right)\expec\|w_{t+1} - \ws(\theta_t)\|^2 + 2\kw \expec\|\ws(\theta_t) - \ws(\theta_{t+1})\|^2 \\
&\leq \left(1 + \frac{1}{2\kw - 1}\right)\left(1 - \frac{1}{\kw}\right)\delta_t + \frac{2}{\bw^2}\left[\dw \sigma_w^2 + \frac{4\lw^2}{m}\one \right] + 2\kw \expec\|\ws(\theta_t) - \ws(\theta_{t+1})\|^2 \\
&\leq \left(1 + \frac{1}{2\kw - 1}\right)\left(1 - \frac{1}{\kw}\right)\delta_t + \frac{2}{\bw^2}\left[\dw \sigma_w^2 + \frac{4\lw^2}{m}\one \right] + 2\kw \ktw^2 \expec \|\theta_t - \theta_{t+1}\|^2 \\
&\leq \left(1 + \frac{1}{2\kw - 1}\right)\left(1 - \frac{1}{\kw}\right)\delta_t + \frac{2}{\bw^2}\left[\dw \sigma_w^2 + \frac{4\lw^2}{m}\one \right] \\
&\;\;\;+ 4\kw \ktw^2 \ett^2\left[\expec\|\nt F(\theta_t, w_t) - \nabla \Phi(\theta_t) \|^2 + \|\nabla \Phi(\theta_t)\|^2 + \dt \sigma_t^2 \right] \\
&= \left(1 + \frac{1}{2\kw - 1}\right)\left(1 - \frac{1}{\kw}\right)\delta_t + \frac{2}{\bw^2}\left[\dw \sigma_w^2 + \frac{4\lw^2}{m}\one \right] \\
&\;\;\;+ 4\kw \ktw^2 \ett^2\left[\expec\|\nt F(\theta_t, w_t) - \nt F(\theta_t, \ws(\theta_t) \|^2 + \|\nabla \Phi(\theta_t)\|^2 + \dt \sigma_t^2 \right]\\
&\leq \left(1 + \frac{1}{2\kw - 1}\right)\left(1 - \frac{1}{\kw}\right)\delta_t + \frac{2}{\bw^2}\left[\dw \sigma_w^2 + \frac{4\lw^2}{m}\one \right] \\
&\;\;\;+ 4\kw \ktw^2 \ett^2\left[\btw^2\expec\|w_t - \ws(\theta_t)\|^2 + \|\nabla \Phi(\theta_t)\|^2 + \dt \sigma_t^2 \right],
\end{align*}
by Young's inequality, \cref{eq: star}, and \cref{lem: danskin}. Since $\left(1 + \frac{1}{2\kw - 1}\right)\left(1 - \frac{1}{\kw}\right) \leq 1 - \frac{1}{2\kw}$, we obtain \begin{align*}
\delta_{t+1} &\leq \left(1 - \frac{1}{2\kw} +  4\kw \ktw^2 \ett^2 \btw^2 \right)\delta_t + \frac{2}{\bw^2}\left[\dw \sigma_w^2 + \frac{4\lw^2}{m}\one \right] + 4\kw \ktw^2 \ett^2\left[\|\nabla \Phi(\theta_t)\|^2 + \dt \sigma_t^2 \right], 
\end{align*}
as desired. 
\end{proof}

We are now prepared to prove~\cref{thm: SGDA convergence}. 
\begin{proof}[Proof of~\cref{thm: SGDA convergence}]
\textbf{Privacy:} This is an easy consequence of~Theorem 1 in \citet{abadi16} (with precise constants obtained from the proof therein, as in~\citet{bft19}) applied to both the min (descent in $\theta$) and max (ascent in $w$) updates. Unlike~\citet{abadi16}, we don't clip the gradients here before adding noise, but the Lipschitz continuity assumptions (\cref{ass: smooth} parts 1 and 2) imply that the $\ell_2$-sensitivity of the gradient updates in lines 4 and 5 of~\cref{alg: noisy SGDA} are nevertheless bounded by $2\lt/m$ and $2\lw/m$, respectively. Thus, ~Theorem 1 in \citet{abadi16} still applies. \\
\textbf{Convergence:} Denote $\zeta := 1 - \frac{1}{2\kw} + 4 \kw \ktw^2 \ett^2 \btw^2$, ~$\delta_t = \expec\|\ws(\theta_t) - w_t\|^2$, and 
\[
C_t := \frac{2}{\bw^2}\left(\frac{4\lw^2}{m}\mathbbm{1}_{\{m < n\}} + \dw \sigma_w^2\right) + 4\kw \ktw^2 \ett^2 \left(\expec\|\nabla \Phi(\theta_t)\|^2 + \dt \sigma_{\theta}^2\right),\] 
so that~\cref{lem: C4} reads as \begin{equation}
\label{eq: C4}
\delta_{t} \leq \zeta \delta_{t-1} + C_{t-1} 
\end{equation}
for all $t \in [T]$. 
Applying~\cref{eq: C4} recursively, we have \begin{align*}
    \delta_t &\leq \zeta^t \delta_0 + \sum_{j=0}^{t-1} C_{t-j-1} \zeta^j \\
    &\leq \zeta^t D^2 + 4\kw \ktw^2 \eta_\tth^2 \sum_{j=0}^{t-1} \zeta^{t-1-j} \expec\|\nabla \Phi(\theta_j)\|^2 \\
    &\;\;\;+\left(\sum_{j=0}^{t-1} \zeta^{t-1-j} \right)\left[\frac{2}{\bw^2}\left(\frac{4 \lw^2}{m}\one + \dw \sigma_w^2 \right) + 4 \kw \ktw^2 \ett^2 \dt \sigma_{\theta}^2 \right].
\end{align*} 
Combining this inequality with~\cref{lem: C5}, we get \begin{align*}
    \expec\Phi(\theta_t) &\leq \expec\left[\Phi(\theta_{t-1}) - \frac{3}{8} \ett \|\nabla \Phi(\theta_{t-1}) \|^2 \right] + \frac{5}{8} \ett\left(\dt \sigma^2_\theta + \frac{4 \lt^2}{m} \one \right) \\
    &\;\;\;+ \frac{5}{8} \ett \btw^2\Bigg\{\zeta^t D^2 + 4\kw \ktw^2 \ett^2 \sum_{j=0}^{t-1} \zeta^{t-1-j} \expec\| \nabla \Phi(\theta_j)\|^2 \\
    &\;\;\;\;\;\;+ \left(\sum_{j=0}^{t-1} \zeta^{t-1-j} \right)\left[\frac{2}{\bw^2}\left(\frac{4 \lw^2}{m}\one + \dw \sigma_w^2 \right) + 4\kw\ktw^2 \ett^2 \dt \sigma_\theta^2 \right] \Bigg\}.
\end{align*}
Summing over all $t \in [T]$ and re-arranging terms yields \begin{align*}
\expec \Phi(\theta_T) &\leq \Phi(\theta_0) - \frac{3}{8} \ett \sum_{t=1}^T \expec\|\nabla \Phi(\theta_{t-1}) \|^2 + \frac{5}{8} \ett T\left(\dt \sigma_t^2 + \frac{4 \lt^2}{m}\one \right) + \frac{5}{8}\ett \btw^2 \left(\sum_{t=1}^T \zeta^t\right) D^2 \\
&\;\;\;+ 4\ett^3 \btw^2 \kw \ktw^2 \sum_{t=1}^T \sum_{j=0}^{t-1} \zeta^{t-1-j} \expec \|\nabla \Phi(\theta_j)\|^2 
\\
&\;\;\;
+ \frac{5}{8}\left(\sum_{t=1}^T \sum_{j=0}^{t-1} \zeta^{t-1-j} \right)\ett \btw^2 \left[\frac{2}{\bw^2}\left(\frac{4\lw^2}{m}\one + \dw \sigma^2_w \right) + 4\kw \ktw^2 \ett^2 \dt \sigma_\theta^2 \right].
\end{align*}
Now, $\zeta \leq 1 - \frac{1}{4 \kw}$, which implies that \begin{align*}
\sum_{t=1}^T \zeta^t \leq 4\kw~~\text{and} \\
\sum_{t=1}^T \sum_{j=0}^{t-1} \zeta^{t-1-j} \leq 4 \kw T.
\end{align*}
Hence \begin{align*}
\frac{1}{T} \sum_{t=1}^T \expec\|\nabla \Phi(\theta_t)\|^2 &\leq \frac{3[\Phi(\theta_0) - \expec \Phi(\theta_T)]}{\ett T} + \frac{5}{3}\left(\dt \sigma_\theta^2 + \frac{4\lt^2}{m}\one \right) +  \frac{7 \btw^2 D^2 \kw}{T} \\
&\;\;\;+ \frac{48 \ett^2 \btw^2 \kw^2 \ktw^2}{T}\left(\sum_{t=1}^T \expec\|\nabla \Phi(\theta_t) \|^2 \right) \\
&\;\;\;+ 8\kw \btw^2
%\left[
\frac{2}{\bw^2}\left(\frac{4\lw^2}{m}\one + \dw \sigma_w^2 \right) + 32 \btw^2 \kw^2 \ktw^2 \ett^2 \dt \sigma_\theta^2 
%\right]
.
\end{align*}
Since $\ett^2 \btw^2 \kw^2 \ktw^2 \leq \frac{1}{256}$, we obtain \begin{align*}
\expec\| \nabla \Phi(\hat{\theta}_T)\|^2 &\lesssim \frac{\Delta_\Phi \kw}{T}\left(\bt + \btw \ktw \right) + \frac{\dt \lt^2 T \ln(1/\delta)}{\epsilon^2 n^2} + \frac{1}{m}\one\left(\lt^2 + \frac{\kw \btw^2 \lw^2}{\bw^2} \right) + \frac{\kw \btw^2 \lw^2 \dw T \ln(1/\delta)}{\bw^2 \epsilon^2 n^2} \\
&\;\;\;+ \frac{\btw^2 D^2 \kw}{T}.
\end{align*}
Our choice of $T$ then implies \begin{align*}
\expec\| \nabla \Phi(\hat{\theta}_T)\|^2 &\lesssim \sqrt{\Delta_\Phi\left(\bt + \btw \ktw)\kw + \kw \btw^2 D^2 \right)}\left[\frac{\lt \sqrt{\dt \ln(1/\delta)}}{\epsilon n} + \left(\frac{\btw \sqrt{\kw}}{\bw}\right) \frac{\lw \sqrt{\dw \ln(1/\delta)}}{\epsilon n}\right] \\
&\;\;\;+ \frac{\one}{m}\left(\lt^2 + \frac{\kw \btw^2 \lw^2}{\bw^2}\right).
\end{align*}
Finally, our choice of sufficiently large $m$ yields the last claim in~\cref{thm: SGDA convergence}. 
\end{proof}

\subsection{Proof of~\cref{cor: dp fermi conv}}
\cref{cor: dp fermi conv} is an easy consequence of~\cref{thm: SGDA informal}, which we proved in the above subsection: 
\begin{theorem}[Re-statement of~\cref{cor: dp fermi conv}]
Assume the loss function $\ell(\cdot, x, y)$ and $\FF(x, \cdot)$ are Lipschitz continuous with Lipschitz gradient for all $(x, y)$, and $\widehat{P}_S(r) \geq \rho > 0 ~\forall~r \in [k]$. In~\cref{alg: SGDA for FERMI}, choose $\WW$ to be a sufficiently large ball that contains $W^*(\theta) := \argmax_{W} \widehat{F}(\theta, W)$ for every $\theta$ in some neighborhood of $\theta^* \in \argmin_{\theta} \max_{W} \widehat{F}(\theta, W)$. Then there exist algorithmic parameters such that the $(\epsilon, \delta)$-DP~\cref{alg: SGDA for FERMI} returns $\hat{\theta}_T$ with \[
\expec\|\nabla \text{FERMI}(\hat{\theta}_T)\|^2 = \mathcal{O}\left(\frac{\sqrt{\max(d_{\theta}, kl) \ln(1/\delta)}}{\epsilon n}\right),
\]
\edit{treating $D = \text{diameter}(\WW)$, $\lambda$, $\rho$, $l$, and the Lipschitz and smoothness parameters of $\ell$ and $\FF$ as constants.}
%where $d = \max(d_{\theta}, kl)$.
\end{theorem}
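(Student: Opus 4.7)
The plan is to reduce to the generic convergence result for noisy DP-SGDA on smooth nonconvex-strongly concave min-max problems (the formal version~\cref{thm: SGDA convergence} of~\cref{thm: SGDA informal}) and then specialize to~\cref{eq: empirical minmax}. By~\cref{thm: Fermi as minmax}, solving~\cref{eq: FERMI} is equivalent to solving $\min_{\theta} \max_{W \in \mathbb{R}^{k \times l}} \widehat{F}(\theta, W)$ with $f(\theta, W; z_i) := \ell(\theta, x_i, y_i) + \lambda \widehat{\psi}_i(\theta, W)$, and $\widehat{\psi}_i(\theta, \cdot)$ is strongly concave. Because we select $\WW$ large enough to contain $W^*(\theta) = \argmax_W \widehat{F}(\theta, W)$ in a neighborhood of $\theta^*$, the constrained problem~\cref{eq: minmax} over $\WW$ is equivalent to the unconstrained one, and so~\cref{ass: smooth}(5) is satisfied. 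Moreover, $\Phi(\theta) := \max_W \widehat{F}(\theta, W)$ coincides with $\mathrm{FERMI}(\theta)$, so by Danskin's theorem (\cref{lem: danskin}) $\nabla \Phi(\theta) = \nabla \mathrm{FERMI}(\theta)$.

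The next step is to verify the remaining conditions of~\cref{ass: smooth} for $f$, treating $D$, $\lambda$, $\rho$, $l$, and the Lipschitz/smoothness constants of $\ell$ and $\FF$ as absolute constants. Using the explicit expressions in~\cref{lem: derivatives}, one checks:
\begin{itemize}
\item $f(\cdot, W; z)$ is Lipschitz and smooth because $\ell(\cdot, x, y)$ is Lipschitz/smooth by hypothesis, and $\widehat{\psi}_i(\cdot, W)$ inherits Lipschitz/smoothness from $\FF(x, \cdot)$ (with constants scaling like $D^2$, $D/\sqrt{\rho}$, and $l$ from the quadratic $W^TW$ term and the $W^T \widehat{P}_S^{-1/2}$ term).
\item $f(\theta, \cdot; z)$ is $L_w$-Lipschitz and $\beta_w$-smooth since $\nabla_W \widehat{\psi}_i$ is affine in $W$ with bounded coefficients (using $\|\FF(x,\theta)\|\le 1$, $\widehat{P}_S^{-1/2}\le \rho^{-1/2}$, and the diameter bound).
\item $\mu$-strong concavity of $\widehat{F}(\theta, \cdot)$ is given by~\cref{thm: Fermi as minmax} with $\mu$ depending only on $\rho$ and $l$.
\item The cross-smoothness constant $\beta_{\theta w}$ is finite by similar elementary calculus from~\cref{lem: derivatives}.
\end{itemize}
These verifications are routine but comprise the technical heart of the argument; the main (modest) obstacle is bookkeeping the dependence of the various constants on $D$, $\rho$, $l$, and $\lambda$, so that they collapse into absolute constants under the stated conventions.

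Having verified~\cref{ass: smooth}, apply~\cref{thm: SGDA convergence} with $d_\theta$ the parameter dimension and $d_w = kl$, using the privacy noise $\sigma_\theta^2, \sigma_w^2$ from~\cref{prop: Fermi is dp} (or equivalently from~\cref{thm: SGDA convergence}), step sizes $\eta_\theta = \tfrac{1}{16\kappa_w(\beta_\theta + \beta_{\theta w}\kappa_{\theta w})}$, $\eta_w = 1/\beta_w$, and the prescribed choice of $T$ (which satisfies $T \geq (n\sqrt{\epsilon}/2m)^2$ as needed for the moments accountant bound in~\cref{prop: Fermi is dp}). The privacy conclusion $(\epsilon, \delta)$-DP follows directly from~\cref{prop: Fermi is dp}. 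The utility conclusion of~\cref{thm: SGDA convergence} yields
\[
\expec\|\nabla \Phi(\hat{\theta}_T)\|^2 = \mathcal{O}\!\left(\frac{\sqrt{\max(d_\theta, d_w)\ln(1/\delta)}}{\epsilon n}\right),
\]
which, upon substituting $d_w = kl$ and identifying $\Phi = \mathrm{FERMI}$ via Danskin, gives exactly the claimed rate. Finally, picking the batch size $m$ large enough (as permitted by~\cref{thm: SGDA convergence}) guarantees the $\mathcal{O}(1/m)$ sampling term does not dominate, so the overall rate is dictated by the privacy term, completing the proof.
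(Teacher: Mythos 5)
Your proposal is correct and follows essentially the same route as the paper: reduce to the generic DP-SGDA guarantee (\cref{thm: SGDA informal}/\cref{thm: SGDA convergence}) by verifying \cref{ass: smooth} for $f(\theta, W; z_i) = \ell(\theta, x_i, y_i) + \lambda \widehat{\psi}_i(\theta, W)$ via \cref{lem: derivatives}, the boundedness of $\WW$, and the strong concavity from \citet{fermi}, then substitute $d_w = kl$. The paper's own proof is just a terser version of this same argument, leaving the constant bookkeeping implicit.
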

\begin{proof}
By~\cref{thm: SGDA informal}, it suffices to show that $f(\theta, W; z_i) := \ell(\theta, x_i, y_i) + \lambda \widehat{\psi}_i(\theta, W)$  is Lipschitz continuous with Lipschitz gradient in both the $\theta$ and $W$ variables for any $z_i = (x_i, y_i, s_i)$, and that $f(\theta, \cdot; z_i)$ is strongly concave. We assumed $\ell(\cdot, x_i, y_i)$ is Lipschitz continuous with Lipschitz gradient. Further, the work of~\citet{fermi} showed that $f(\theta, \cdot; z_i)$ is strongly concave. Thus, it suffices to show that $\widehat{\psi}_i(\theta, W)$ is Lipschitz continuous with Lipschitz gradient. This clearly holds by~\cref{lem: derivatives}, since $\FF(x, \cdot)$ is Lipschitz continuous with Lipschitz gradient and $W \in \WW$ is bounded.
\end{proof}

\section{Numerical Experiments: Additional Details and Results}
\label{app: experiments}

\subsection{Measuring Demographic Parity and Equalized Odds Violation}
We used the expressions given in~\cref{eq:demopairviolation} and~\cref{eq:eqoddsviolation} to measure the demographic parity violation and the equalized odds violation respectively. We denote $\mathcal{Y}$ to be the set of all possible output classes and $\mathcal{S}$ to be the classes of the sensitive attribute. $P[E]$ denotes the empirical probability of the occurrence of an event E.

\begin{equation}
\label{eq:demopairviolation}
    \max_{y' \in \mathcal{Y}, s_1, s_2 \in \mathcal{S}} \left|P[\haty = y' | s = s_1] - P[\haty = y' | s = s_2]\right|
\end{equation}

\begin{equation}
\label{eq:eqoddsviolation}
\begin{split}
    \max_{y' \in \mathcal{Y}, s_1, s_2 \in \mathcal{S}} \max(\left|P[\haty = y' | s = s_1, y = y'] - P[\haty = y' | s = s_2, y = y']\right|, \\
    \left|P[\haty = y' | s = s_1, y \neq y'] - P[\haty = y' | s = s_2, y \neq y']\right|)
\end{split}
\end{equation}

\subsection{Tabular Datasets}
\label{app: tabular}
\subsubsection{Model Description and Experimental Details}
\noindent \textbf{Demographic Parity: }We split each dataset in a 3:1 train:test ratio. We preprocess the data similar to~\citet{hardt2016equality} and use a simple logistic regression model with a sigmoid output $O = \sigma(Wx + b)$ which we treat as conditional probabilities $p(\haty = i |x)$. The predicted variables and sensitive attributes are both binary in this case across all the datasets. We analyze fairness-accuracy trade-offs with four different values of $\epsilon \in \{0.5, 1, 3, 9\}$ for each dataset. We compare against state-of-the-art algorithms proposed in~\citet{tranfairnesslens} and (the demographic parity objective of) \citet{tran2021differentially}. The tradeoff curves for DP-FERMI were generated by sweeping across different values for $\lambda \in [0, 2.5]$. The learning rates for the descent and ascent, $\eta_\theta$ and $\eta_w$, remained constant during the optimization process and were chosen from $[0.005, 0.01]$. Batch size was 1024. We tuned the $\ell_2$ diameter of the projection set $\WW$ and $\theta$-gradient clipping threshold in $[1,5]$ in order to generate stable results with high privacy (i.e. low $\epsilon$). Each model was trained for 200 epochs. The results displayed are averages over 15 trials (random seeds) for each value of $\epsilon$. 

\noindent \textbf{Equalized Odds:} We replicated the experimental setup described above, but we took $\ell_2$ diameter of $\WW$ and the value of gradient clipping for $\theta$ to be in $[1,2]$. Also, we only tested three values of $\epsilon \in \{0.5, 1, 3\}$. 

\subsubsection{Description of Datasets}
\noindent \textbf{Adult Income Dataset:} This dataset contains the census information about the individuals. The classification task is to predict whether the person earns more than 50k every year or not. We followed a preprocessing approach similar to \citet{fermi}. After preprocessing, there were a total of 102 input features from this dataset. The sensitive attribute for this work in this dataset was taken to be gender. This dataset consists of around 48,000 entries spanning across two CSV files, which we combine and then we take the train-test split of 3:1.

\noindent \textbf{Retired Adult Income Dataset:} The Retired Adult Income Dataset proposed by~\citet{retiredadult} is essentially a superset of the Adult Income Dataset which attempts to counter some caveats of the Adult dataset. The input and the output attributes for this dataset is the same as that of the Adult Dataset and the sensitive attribute considered in this work is the same as that of the Adult. This dataset contains around 45,000 entries.

\noindent \textbf{Parkinsons Dataset:} In the Parkinsons dataset, we use the part of the dataset which had the UPRDS scores along with some of the features of the recordings obtained from individuals affected and not affected with the Parkinsons disease. The classification task was to predict from the features whether the UPDRS score was greater than the median score or not. After preprocessing, there were a total of 19 input features from this dataset and the sensitive attribute for this dataset was taken to be gender. This dataset contains around 5800 entries in total. We took a train-test split of 3:1.

\noindent \textbf{Credit Card Dataset:} This dataset contains the financial data of users in a bank in Taiwan consisting of their gender, education level, age, marital status, previous bills, and payments. The assigned classification task is to predict whether the person defaults their credit card bills or not, essentially making the task if the clients were credible or not. We followed a preprocessing approach similar to \citet{fermi}. After preprocessing, there were a total of 85 input features from this dataset. The sensitive attribute for this dataset was taken to be gender. This dataset consists of around 30,000 entries from which we take the train-test split of 3:1.

\noindent \textbf{UTK-Face Dataset:} This dataset is a large scale image dataset containing with an age span from 0 to 116. The dataset consists of over 20,000 face images with details of age, gender, and ethnicity and covers large variation in pose, facial expression, illumination, occlusion, resolution. We consider the age classification task with 9 age groups similar to the experimental setup in~\citet{tran2022sf}. We consider the sensitive attribute to be the ethnicity which consists of 5 different classes.

\subsubsection{Demographic Parity}
\noindent \textbf{Retired Adult Results:} See~\cref{fig:retiredadult} for our results on Retired Adult Dataset. The results are qualitatively similar to the reusults reported in the main body: our algorithm (DP-FERMI) achieves the most favorable fairness-accuracy tradeoffs across all privacy levels. 
%\label{app: retired adult}
\begin{figure}[h]
    \centering
    \subfigure[$\epsilon =$ 0.5]{
        \includegraphics[width=.46\textwidth]{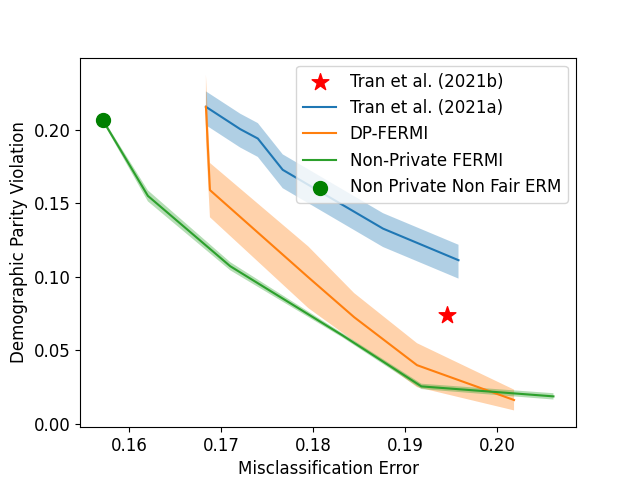}
    }
    \subfigure[$\epsilon =$ 1]{
        \includegraphics[width=.46\textwidth]{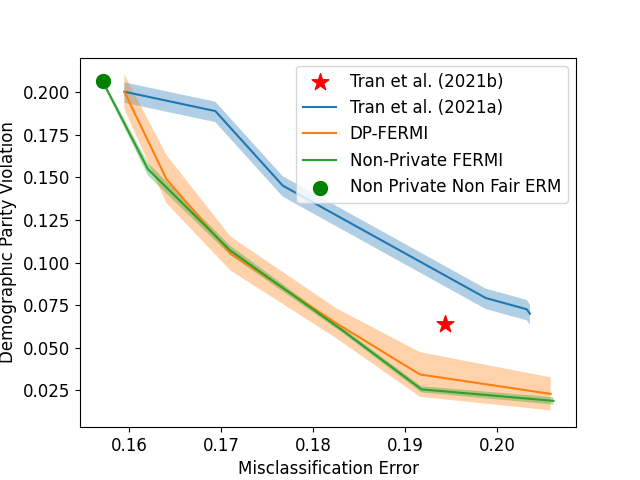}
    }
    \subfigure[$\epsilon =$ 3]{
        \includegraphics[width=.46\textwidth]{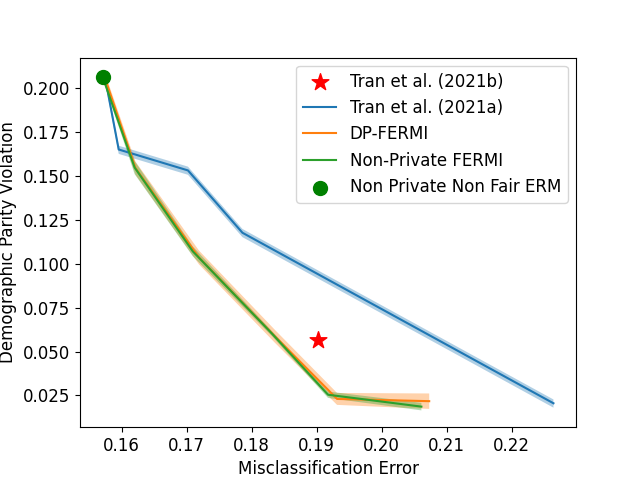}
    }
    \subfigure[$\epsilon =$ 9]{
        \includegraphics[width=.46\textwidth]{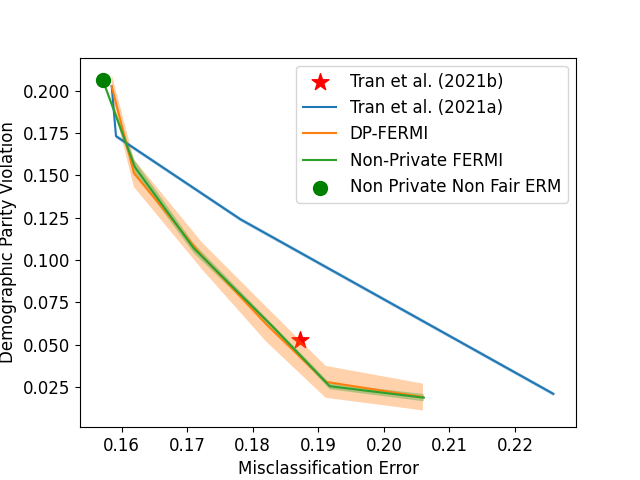}
    }
    % \subfigure[$\epsilon =$ infinity]{
    %     \includegraphics[width=.31\textwidth]{ICLR23/figures/retired-adult_eps_inf_demographic_parity.jpg}
    % }
    \caption{Private, fair logistic regression on the Retired Adult Dataset}
    \label{fig:retiredadult}
\end{figure}

\noindent \textbf{Credit Card Results:} See~\cref{fig:creditcard} for our results on Credit Card Dataset. DP-FERMI offers superior fairness-accuracy-privacy profile compared to all applicable baselines. 
%\label{app: credit}
\begin{figure}[h]
    \centering
    \subfigure[$\epsilon =$ 0.5]{
        \includegraphics[width=.46\textwidth]{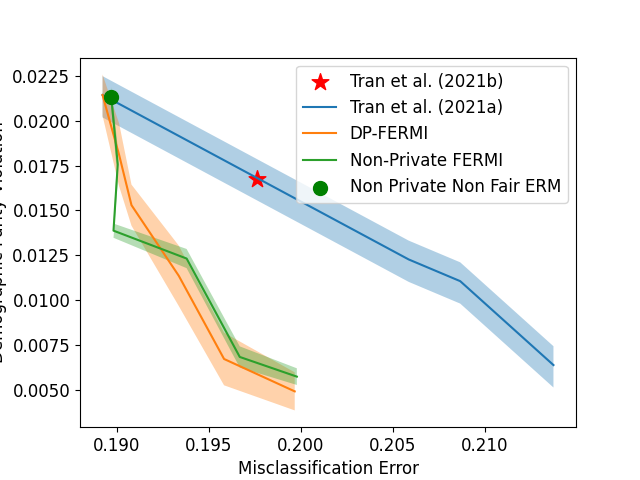}
    }
    \subfigure[$\epsilon =$ 1]{
        \includegraphics[width=.46\textwidth]{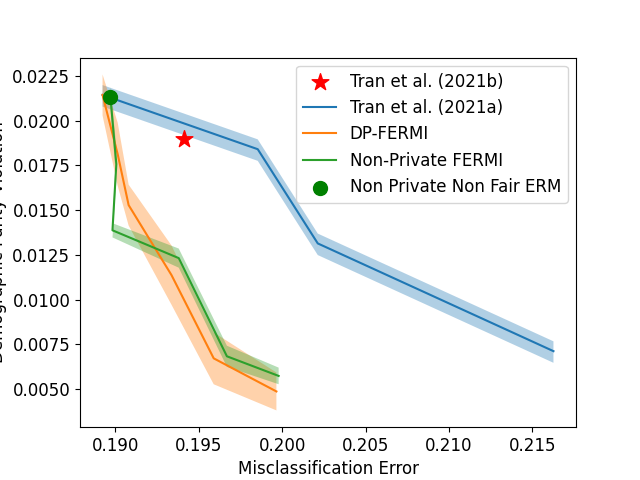}
    }
    \subfigure[$\epsilon =$ 3]{
        \includegraphics[width=.46\textwidth]{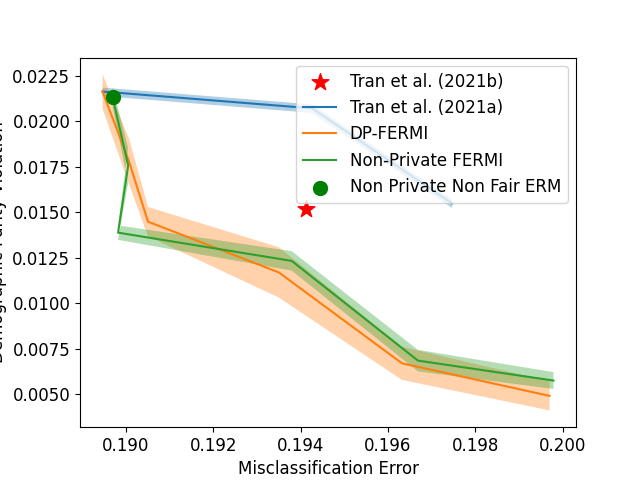}
    }
    \subfigure[$\epsilon =$ 9]{
        \includegraphics[width=.46\textwidth]{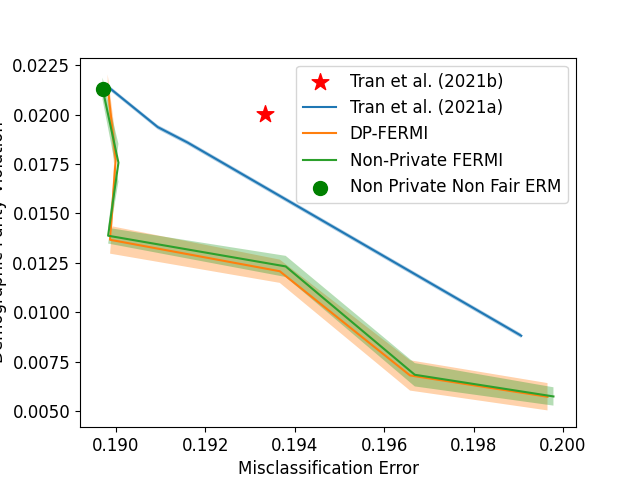}
    }
    % \subfigure[$\epsilon =$ infinity]{
    %     \includegraphics[width=.31\textwidth]{ICLR23/figures/credit-card_eps_inf_demographic_parity.jpg}
    % }
    \caption{Private, fair (demographic parity) logistic regression on the Credit Card Dataset}
    \label{fig:creditcard}
\end{figure}

\textbf{Additional Results for Parkinsons Dataset:} More results for Parkinsons are shown in~\cref{fig:parkinsons additional}. DP-FERMI offers the best performance.   
\begin{figure}[h]


\vspace{-.3cm}
    \centering
    \subfigure[$\epsilon =$ 0.5]{
        \includegraphics[width=.46\textwidth]{ICLR23/figures/parkinsons_eps_0.5_demographic_parity.png}
    }
    % \subfigure[$\epsilon =$ 1]{
    %     \includegraphics[width=.46\textwidth]{ICLR23/figures/parkinsons_eps_1_demographic_parity.png}
    % }
    % \vspace{-.5cm}
    % \subfigure[$\epsilon =$ 3]{
    %     \includegraphics[width=.46\textwidth]{ICLR23/figures/parkinsons_eps_3_demographic_parity.png}
    % }
    \subfigure[$\epsilon =$ 9]{
        \includegraphics[width=.46\textwidth]{ICLR23/figures/parkinsons_eps_9_demographic_parity.png}
    }
    % \subfigure[$\epsilon =$ infinity]{
    %     \includegraphics[width=.31\textwidth]{ICLR23/figures/parkinsons_eps_inf_demographic_parity.jpg}
    % }
    %\vspace{-.3cm}
    \caption{Private, Fair (Demogrpahic Parity) Logistic regression on Parkinsons Dataset}
    \label{fig:parkinsons additional}
    \vspace{-.3cm}
\end{figure}

\subsubsection{Equalized Odds}
\textbf{Equalized Odds Variation of DP-FERMI Algorithm:}
\label{app: eq odds fermi}
The~\cref{eq: FERMI} minimizes the Exponential Renyi Mutual Information (ERMI) between the output and the sensitive attributes which essentially leads to a reduced demographic parity violation. The equalized-odds condition is more constrained and enforces the demographic parity condition for data grouped according to labels. For the equalized-odds, the ERMI between the predicted and the sensitive attributes is minimized conditional to each of the label present in the output variable of the dataset. So, the FERMI regularizer is split into as many parts as the number of labels in the output. This enforces each part of the FERMI regularizer to minimize the ERMI while an output label is given/constant. Each part has its own unique W that is maximized in order to create a stochastic estimator for the ERMI with respect to each of the output labels.

\textbf{Adult Results:} Results for the equalized odds version of DP-FERMI on Adult dataset are shown in~\cref{fig:adulteo}. Our approach outperforms the previous state-of-the-art methods. 
\begin{figure}[h]
    \centering
    \subfigure[$\epsilon =$ 0.5]{
        \includegraphics[width=.31\textwidth]{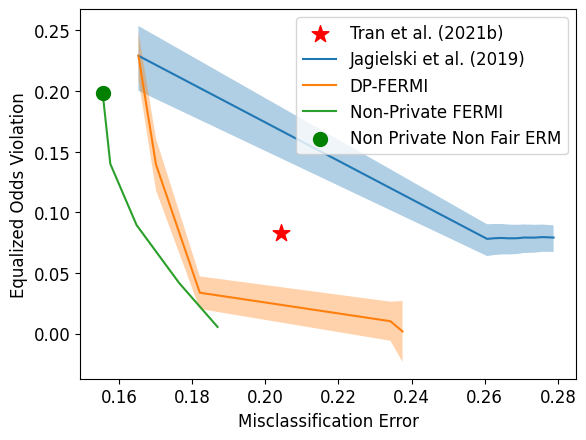}
    }
    \subfigure[$\epsilon =$ 1]{
        \includegraphics[width=.31\textwidth]{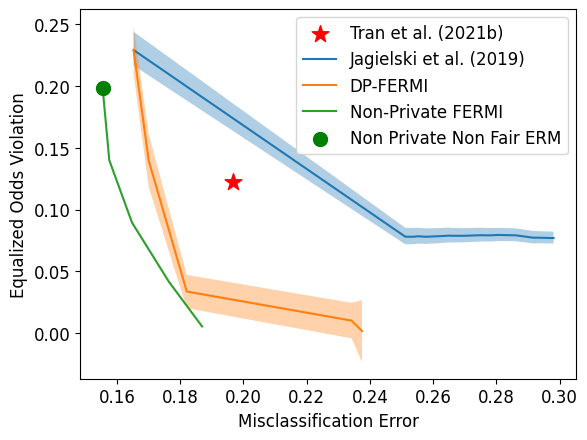}
    }
    \subfigure[$\epsilon =$ 3]{
        \includegraphics[width=.31\textwidth]{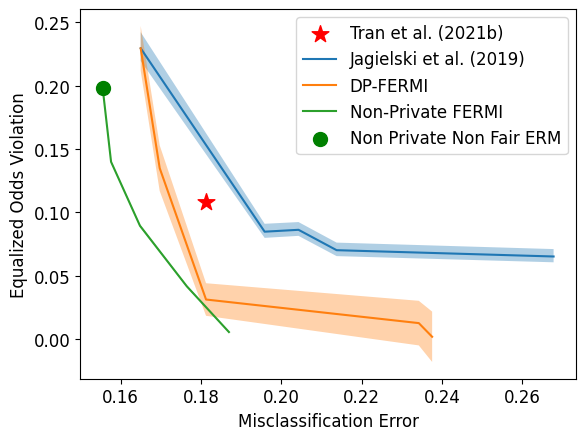}
    }
    \caption{Results obtained for applying DP-FERMI with equalized odds violation on logistic regression on the Adult Dataset}
    \label{fig:adulteo}
\end{figure}

\textbf{Retired Adult Results:} (Initial) Results for the equalized odds version of DP-FERMI on the retired-adult dataset are shown in~\cref{fig:retadulteo}. Our approach outperforms ~\citet{tran2021differentially} and we are currently tuning our non-private and/or non-fair versions of our models along with~\citet{jagielski2019differentially}.
\begin{figure}[h]
    \centering
    \subfigure[$\epsilon =$ 0.5]{
        \includegraphics[width=.31\textwidth]{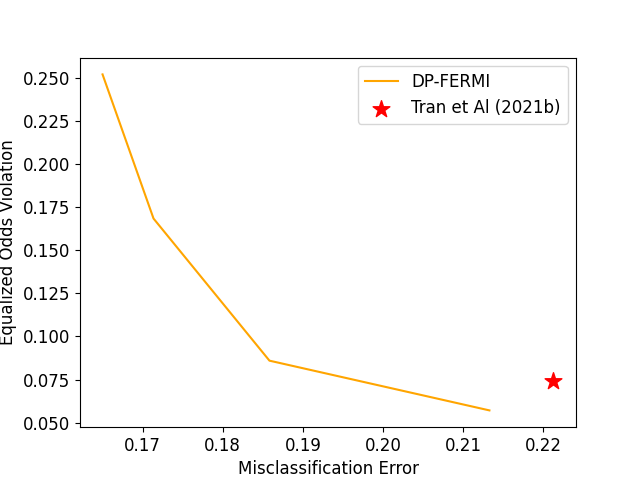}
    }
    \subfigure[$\epsilon =$ 1]{
        \includegraphics[width=.31\textwidth]{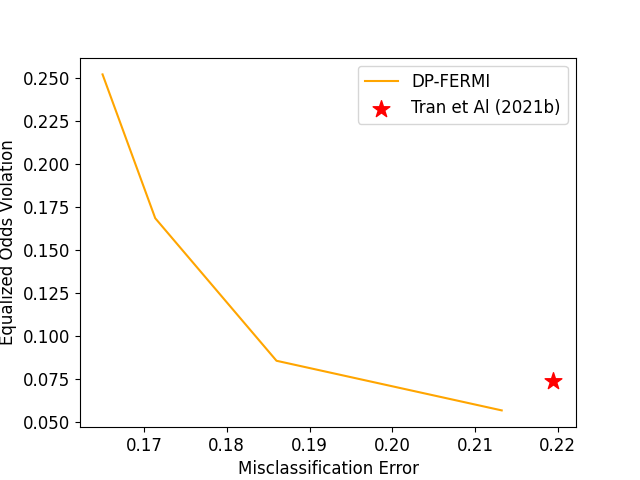}
    }
    \subfigure[$\epsilon =$ 3]{
        \includegraphics[width=.31\textwidth]{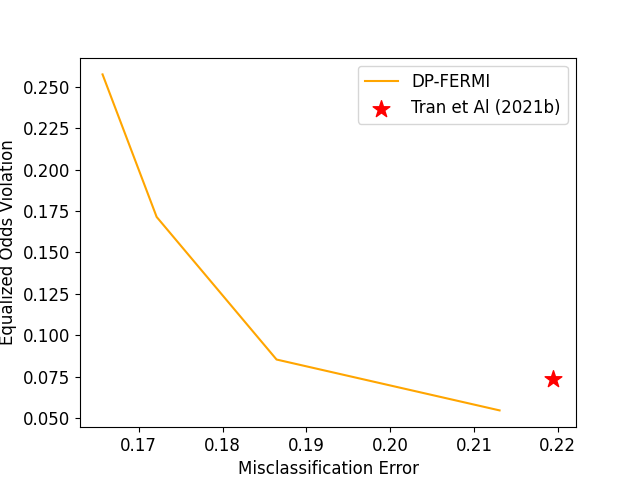}
    }
    \caption{Results obtained for applying DP-FERMI with equalized odds violation on logistic regression on the Retired Adult Dataset}
    \label{fig:retadulteo}
\end{figure}

\subsection{Image Dataset (UTK-Face)}
\label{app: image}
We split the dataset in a 3:1 train:test ratio. Batch size was 64. 128 x 128 normalized images were used as input for our model. We tuned the $\ell_2$ diameter of $\WW$ and the value of gradient clipping for $\theta$ to be in $[1,2]$ and learning rates for the descent and ascent, $\eta_\theta$ and $\eta_w$, remained constant during the optimization process and were chosen as 0.001 and 0.005 respectively. We analyze the fairness-accuracy trade-offs with five different values of $\epsilon \in \{10, 25, 50, 100, 200\}$. The results displayed were averaged over observations obtained from 5 different randomly chosen seeds on each configuration of $\epsilon$ and a dataset. Each model was trained for 150 epochs. The tradeoff curves for this set of experiments were obtained by sweeping across different values for $\lambda \in [0, 500]$. 

\newpage
\section{Societal Impacts}
\label{app: societal impacts}
In this paper, we considered the socially consequential problem of \textit{privately} learning \textit{fair} models from sensitive data. Motivated by the lack of \textit{scalable} private fair learning methods in the literature, e developed the first differentially private (DP) fair learning algorithm that is guaranteed to converge with small batches (\textit{stochastic optimization}). We hope that our method will be used to help companies, governments, and other organizations to responsibly use sensitive, private data. Specifically, we hope that our DP-FERMI algorithm will be useful in reducing discrimination in algorithmic decision-making while simultaneously preventing leakage of sensitive user data. The stochastic nature of our algorithm might be especially appealing to companies that are using very large models and datasets. On the other hand, there are also some important limitations of our method that need to be considered before deployment. 

One caveat of our work is that we have assumed that the given data set contains fair and accurate labels. For example, if gender is the sensitive attribute and “likelihood of repaying a loan” is the target, then we
assume that the training data \textit{accurately} describes everyone's financial history without discrimination. If training data is biased against a certain demographic group, then it is possible that our algorithm could \textit{amplify} (rather than mitigate) unfairness. See e.g. \citet{kilbertus, bechavod19} for further discussion.  

Another important practical consideration is how to weigh/value the different desiderata (privacy, fairness, and accuracy) when deploying our method. As shown in prior works (e.g., \citet{cummings}) and re-enforced in the present work, there are fundamental tradeoffs between fairness, accuracy, and privacy: improvements in one generally come at a cost to the other two. Determining the relative importance of each of these three desiderata is a critical question that lacks a clear or general answer. Depending on the application, one might be seriously concerned with either discrimination or privacy attacks, and should calibrate $\epsilon$ and $\lambda$ accordingly. Or, perhaps very high accuracy is necessary for a particular task, with privacy and/or fairness as an afterthought. In such a case, one might want to start with very large $\epsilon$ and small $\lambda$ to ensure high accuracy, and then gradually shrink $\epsilon$ and/or increase $\lambda$ to improve privacy/fairness until training accuracy dips below a critical threshold. A thorough and rigorous exploration of these issues could be an interesting direction for future work.  

%%%%%%%%%%%%%%%%%%%%%%%%%%%%%%%%%%%%%%%%%%%%%%%%%%%%%%%%%%%%%%%%%%%%%%%%%%%%%%%
%%%%%%%%%%%%%%%%%%%%%%%%%%%%%%%%%%%%%%%%%%%%%%%%%%%%%%%%%%%%%%%%%%%%%%%%%%%%%%%

\end{document}